\newcommand{\calN}{\mathcal{N}}
\newcommand{\calD}{\mathcal{D}}
\newcommand{\calR}{\mathcal{R}}
\newcommand{\sgn}{\mathrm{sign}}
\newcommand{\poly}{\mathrm{poly}}
\newcommand{\E}{\mathop\mathbb{E}}
\newcommand{\tr}{\mathrm{tr}}
\newcommand{\unif}{\mathrm{unif}}
\def\R{\mathbb{R}}
\def\cD{\mathcal{D}}
\def\cF{\mathcal{F}}
\def\cN{\mathcal{N}}
\def\vw{\mathbf{w}}
\def\vW{\mathbf{W}}
\def\va{\mathbf{a}}
\def\vZ{\mathbf{Z}}
\def\vX{\mathbf{X}}
\def\vx{\mathbf{x}}
\def\vy{\mathbf{y}}
\def\vbeta{\bm{\beta}}
\newcommand{\mat}[1]{\mathbf{#1}}
\newcommand{\vect}[1]{\mathbf{#1}}
\newcommand{\norm}[1]{\left\|#1\right\|}
\newcommand{\abs}[1]{\left|#1\right|}
\newcommand{\expect}{\mathbb{E}}
\newcommand{\indict}{\mathbb{I}}
\newcommand{\vectorize}[1]{\text{vec}\left(#1\right)}
\newcommand{\relu}[1]{\sigma\left(#1\right)}
\newcommand{\useq}{\xi}
\newcommand{\bone}[1]{\indict\left\{#1\right\}}
\newcommand{\eps}{\varepsilon}
\newcommand{\reals}{\mathbb{R}}
\newcommand{\inp}[2]{\left\langle #1,#2\right\rangle}
\newcommand{\Erc}[1]{\E_{{\bm\eps} \sim \{\pm 1\}^n }\left[ #1 \right]}
\newtheorem{thm}{Theorem}[section]
\newtheorem{lem}[thm]{Lemma}
\newtheorem{cor}[thm]{Corollary}
\newtheorem{defn}{Definition}[section]
\newtheorem{rem}{Remark}[section]
\newtheorem{example}{Example}[section]
\newtheorem{question}{Question}
\newcommand{\simon}[1]{\textcolor{blue}{[Simon: #1]}}
\title{Fine-Grained Analysis of Optimization and Generalization for Overparameterized Two-Layer Neural Networks}
\date{}
\author{Sanjeev Arora\thanks{Princeton University and Institute for Advanced Study. Email: \texttt{arora@cs.princeton.edu}}
	\and 
	Simon S. Du\thanks{Carnegie Mellon University. Email: \texttt{ssdu@cs.cmu.edu}}
	\and
	Wei Hu\thanks{Princeton University. Email: \texttt{huwei@cs.princeton.edu}}
	\and
	Zhiyuan Li\thanks{Princeton University. Email: \texttt{zhiyuanli@cs.princeton.edu}}
	\and
	Ruosong Wang\thanks{Carnegie Mellon University. Email: \texttt{ruosongw@andrew.cmu.edu}}
}
\begin{document}

\maketitle

\begin{abstract}
	
Recent works have cast some light on the mystery of why deep nets fit any data and generalize despite being very overparametrized. This paper analyzes training and generalization for a simple $2$-layer ReLU net with random initialization, and provides the following improvements over
recent works:

 \begin{enumerate}[(i)]
 \item Using a tighter characterization of training speed than recent papers, an explanation for why training a neural net with random labels leads to slower training, as originally observed in  [Zhang et al. ICLR'17]. 
 \item Generalization bound independent of network size, using a data-dependent complexity measure. 
  Our measure distinguishes clearly between random labels and true labels on MNIST and CIFAR, as shown by experiments. 
  Moreover, recent papers require sample complexity to increase (slowly) with the size, while our sample complexity is completely independent of the network size.

 \item Learnability of a broad class of smooth functions 
  by $2$-layer ReLU nets trained via gradient descent.
 \end{enumerate}
 
The key idea is to track dynamics of training and generalization via properties of a related kernel. 

\end{abstract}

\section{Introduction}
\label{sec:intro}


The well-known work of \citet{zhang2016understanding} highlighted intriguing experimental phenomena about deep net training -- specifically, optimization and generalization -- and asked whether theory could explain them. They showed that sufficiently powerful nets (with vastly more parameters than number of training samples) can attain zero training error, regardless of whether the data is properly labeled or randomly labeled. Obviously, training with randomly labeled data cannot generalize, whereas training with properly labeled data generalizes. See Figure~\ref{fig:generalization} replicating some of these results. 

Recent papers have begun to provide explanations, showing that gradient descent can allow an overparametrized multi-layer  net to attain arbitrarily low training error on fairly generic datasets~\citep{du2018global,du2018provably,li2018learning,allen2018convergence,zou2018stochastic}, provided the amount of overparametrization is a high polynomial of the relevant parameters (i.e. vastly more than the overparametrization in \cite{zhang2016understanding}). Under further assumptions it can also be shown that the trained net generalizes~\citep{allen2018learning}.  But some issues were not addressed in these papers, and the goal of the current paper is to address them. 

First, the experiments in \cite{zhang2016understanding} show that though the nets attain zero training error on even random data, the convergence rate is much slower. See Figure~\ref{fig:rate_mnist}. 
\begin{question}\label{ques:conv}
	Why do true labels give faster convergence rate than random labels for gradient descent?
\end{question}
The above papers do not answer this question, since their proof of convergence does not distinguish between good and random labels.

The next issue is about generalization: clearly, some property of properly labeled data controls generalization, but what? 
Classical measures used in generalization theory such as VC-dimension and Rademacher complexity are much too pessimistic.
A line of research proposed norm-based (e.g. \cite{bartlett2017spectrally}) and compression-based bounds~\citep{arora2018stronger}.
But the sample complexity upper bounds obtained are still far too weak. 
Furthermore they rely on some property of the trained net that is \emph{revealed/computed} at the end of training. There is no property of data alone that determine upfront  whether the trained net will generalize.
A recent paper~\citep{allen2018learning} assumed that there exists an underlying (unknown) neural network that achieves low error on the data distribution,
and the amount of data available is quite a bit more than the minimum number of samples needed to learn this underlying neural net. Under this condition, the overparametrized net (which has way more parameters) can learn in a way that generalizes. However, it is hard to verify from data whether this assumption is satisfied, even after the larger net has finished training.\footnote{In Section~\ref{sec:rel}, we discuss the related works in more details.} Thus the assumption is in some sense unverifiable. 
 
\begin{question}\label{ques:gen}
	Is there an easily verifiable complexity measure that can differentiate true labels and random labels?
\end{question}

Without explicit regularization, to attack this problem,
one must resort to algorithm-dependent generalization analysis.
One such line of work established that first-order methods can automatically find minimum-norm/maximum-margin solutions that fit the data in the settings of logistic regression, deep linear networks, and symmetric matrix factorization~\citep{soudry2018implicit,gunasekar2018characterizing,gunasekar2018implicit,ji2018gradient,li2018algorithmic}.
However, how to extend these results to non-linear neural networks remains unclear~\citep{wei2018margin}.
Another line of algorithm-dependent analysis of generalization \citep{hardt2015train,mou2017generalization,chen2018stability} used stability of specific optimization algorithms  that satisfy certain generic properties like convexity, smoothness, etc.
However, as the number of epochs becomes large, these generalization bounds are vacuous. 


\paragraph{Our results.}
We give a new analysis that provides answers to Questions~\ref{ques:conv} and~\ref{ques:gen} for overparameterized two-layer neural networks with ReLU activation trained by gradient descent (GD), when the number of neurons in the hidden layer is sufficiently large.
In this setting, \citet{du2018provably} have proved that GD with random initialization can achieve zero training error for any non-degenerate data.
We give a more refined analysis of the trajectory of GD which enables us to provide answers to Questions~\ref{ques:conv} and~\ref{ques:gen}.
In particular:
\begin{itemize}
\item In Section~\ref{sec:rate}, using the trajectory of the network predictions on the training data during optimization, we accurately estimate the magnitude of training loss in each iteration.
Our key finding is that the number of iterations needed to achieve a target accuracy depends on the projections of data labels on the eigenvectors of a certain Gram matrix to be defined in Equation~\eqref{eqn:H_infy_defn}.
On MNIST and CIFAR datasets, we find that such projections are significantly different for true labels and random labels, and as a result we are able to answer Question~\ref{ques:conv}.

\item In Section~\ref{sec:generalization}, we give a generalization bound for the solution found by GD, based on accurate estimates of how much the network parameters can move during optimization (in suitable norms).
Our generalization bound depends on a \emph{data-dependent complexity measure} (c.f. Equation~\eqref{eqn:complexity}), and notably, is completely independent of the number of hidden units in the network.
Again, we test this complexity measure on MNIST and CIFAR, and find that the complexity measures for true and random labels are significantly different, which thus answers Question~\ref{ques:gen}.

Notice that because zero training error is achieved by the solution found by GD, a generalization bound is an upper bound on the error on the data distribution (test error).
We also remark that our generalization bound is valid for \emph{any data labels} -- it does not require the existence of a small ground-truth network as in~\citep{allen2018learning}.
Moreover, our bound can be efficiently computed for any data labels. 

\item In Section~\ref{sec:improper}, we further study what kind of functions can be provably learned by two-layer ReLU networks trained by GD. 
Combining the optimization and generalization results, we uncover a broad class of learnable functions, including linear functions, two-layer neural networks with polynomial activation $\phi(z) = z^{2l}$ or cosine activation, etc.
Our requirement on the smoothness of learnable functions is weaker than that in~\citep{allen2018learning}.
\end{itemize}

Finally, we note that the intriguing generalization phenomena in deep learning were observed in kernel methods as well~\cite{belkin2018understand}.
The analysis in the current paper is also related to a kernel from the ReLU activation (c.f.~Equation~\eqref{eqn:H_infy_defn}).

\section{Related Work}
\label{sec:rel}
In this section we survey previous works on optimization and generalization aspects of neural networks.

\paragraph{Optimization.}
Many papers tried to characterize geometric landscapes of objective functions~\citep{safran2017spurious,zhou2017critical,freeman2016topology,hardt2016identity,nguyen2017loss,kawaguchi2016deep,venturi2018neural,soudry2016no,du2018power,soltanolkotabi2018theoretical,haeffele2015global}.
The hope is to leverage recent advance in first-order algorithms~\citep{ge2015escaping,lee2016gradient,jin2017escape} which showed that if the landscape satisfies (1) all local minima are global and (2) all saddle points are strict (i.e., there exists a negative curvature), then first-order methods can escape all saddle points and find a global minimum.
Unfortunately, these desired properties do not hold even for simple non-linear shallow neural networks~\citep{yun2018critical} or 3-layer linear neural networks~\citep{kawaguchi2016deep}.

%

Another approach is to directly analyze trajectory of the optimization method and to show convergence to global minimum.
A series of papers made strong assumptions on input distribution as well as realizability of labels, and showed global convergence of (stochastic) gradient descent for some shallow neural networks~\citep{tian2017analytical,soltanolkotabi2017learning,brutzkus2017globally,du2017convolutional,du2017spurious,li2017convergence}.
Some local convergence results have also been proved~\citep{zhong2017recovery,zhang2018learning}.
However, these assumptions are not satisfied in practice.

For two-layer neural networks, a line of papers used mean field analysis to establish that for infinitely wide neural networks, the empirical distribution of the neural network parameters can be described as a Wasserstein gradient flow~\citep{mei2018mean,chizat2018global,sirignano2018mean,rotskoff2018neural,wei2018margin}.
However, it is unclear whether this framework can explain the behavior of first-order methods on finite-size neural networks. 

Recent breakthroughs were made in understanding optimization of overparameterized neural networks through the trajectory-based approach.
They proved global polynomial time convergence of (stochastic) gradient descent on non-linear neural networks for minimizing empirical risk.
Their proof techniques can be roughly classified into two categories.
\citet{li2018learning,allen2018convergence,zou2018stochastic} analyzed the trajectory of parameters and showed that on the trajectory, the objective function satisfies certain gradient dominance property.
On the other hand, \cite{du2018global,du2018provably} analyzed the trajectory of network predictions on training samples and showed that it enjoys a strongly-convex-like property.

\paragraph{Generalization.}
It is well known that the VC-dimension of neural networks is at least linear in the number of parameters~\citep{bartlett2017nearly}, and therefore classical VC theory cannot explain the generalization ability of modern neural networks with more parameters than training samples.
Researchers have proposed norm-based generalization bounds~\citep{bartlett2002rademacher,bartlett2017spectrally,neyshabur2015norm,neyshabur2017pac,neyshabur2018the,konstantinos2017pac,golowich2017size,li2018tighter} and compression-based bounds~\citep{arora2018stronger}.
\citet{dziugaite2017computing,zhou2018nonvacuous} used the PAC-Bayes approach to compute non-vacuous generalization bounds for MNIST and ImageNet, respectively.
All these bounds are \emph{posterior} in nature -- they depend on certain properties of the \emph{trained} neural networks.
Therefore, 
one has to finish training a neural network to know whether it can generalize.
Comparing with these results, our generalization bound only depends on training data and can be calculated without actually training the neural network.

Another line of work assumed the existence of a true model, and showed that the (regularized) empirical risk minimizer has good generalization with sample complexity that depends on the true model~\citep{du2018many,ma2018priori,imaizumi2018deep}.
These papers ignored the difficulty of optimization, while we are able to prove generalization of the solution found by gradient descent.
Furthermore, our generic generalization bound does not assume the existence of any true model.



Our paper is closely related to~\citep{allen2018learning} which showed that two-layer overparametrized neural networks trained by randomly initialized stochastic gradient descent can learn a class of infinite-order smooth functions. 
In contrast, our generalization bound depends on a data-dependent complexity measure that can be computed for any dataset, without assuming any ground-truth model.
Furthermore, as a consequence of our generic bound, we also show that two-layer neural networks can learn a class of infinite-order smooth functions, with a less strict requirement for smoothness.
\citet{allen2018learning} also studied the generalization performance of three-layer neural nets.

Lastly, our work is related to kernel methods, especially recent discoveries of the connection between deep learning and kernels~\citep{jacot2018neural,chizat2018note,daniely2016toward,daniely2017sgd}.
Our analysis utilized several properties of a related kernel from the ReLU activation (c.f.~Equation~\eqref{eqn:H_infy_defn}).


\section{Preliminaries and Overview of Results}
\label{sec:pre}

\paragraph{Notation.}
We use bold-faced letters for vectors and matrices. For a matrix $\mat A$, let $\mat A_{ij}$ be its $(i, j)$-th entry.
We use $\norm{\cdot}_2$ to denote the Euclidean norm of a vector or the spectral norm of a matrix, and use $\norm{\cdot}_F$ to denote the Frobenius norm of a matrix.
Denote by $\lambda_{\min}(\mat A)$ the minimum eigenvalue of a symmetric matrix $\mat A$.
Let $\vectorize{\mat A}$ be the vectorization of a matrix $\mat A$ in column-first order.
Let $\mat I$ be the identity matrix and $[n]=\{1, 2, \ldots, n\}$.
Denote by $\cN(\bm\mu, \mat\Sigma)$ the Gaussian distribution with mean $\bm\mu$ and covariance $\mat{\Sigma}$.
Denote by $\relu{\cdot}$ the ReLU function $\relu{z} = \max\{z, 0\}$.
Denote by $\indict\{E\}$ the indicator function for an event $E$.

\subsection{Setting: Two-Layer Neural Network Trained by Randomly Initialized Gradient Descent} \label{sec:setup}

We consider a two-layer ReLU activated neural network with $m$ neurons in the hidden layer:
\begin{align*}
f_{\mat{W},\vect{a}}(\vect{x}) = \frac{1}{\sqrt{m}}\sum_{r=1}^{m} a_r \relu{\vect{w}_r^\top \vect{x}},
\end{align*}
where $\vect{x}\in \mathbb{R}^d$ is the input, $\vect{w}_1, \ldots, \vect{w}_m \in \mathbb{R}^d$ are weight vectors in the first layer, $a_1, \ldots, a_m \in \mathbb{R}$ are weights in the second layer.
For convenience we denote $\mat W = (\vect w_1, \ldots, \vect w_m) \in \R^{d\times m}$ and $\vect a = (a_1, \ldots, a_m)^\top \in \R^m$.

We are given $n$ input-label samples $S = \{ (\vect x_i, y_i) \}_{i=1}^n$ drawn i.i.d. from an underlying data distribution $\calD$ over $\R^d\times \R$.
We denote $\mat X = (\vect x_1, \ldots, \vect x_n) \in \R^{d\times n}$ and $\vect y = (y_1, \ldots, y_n)^\top \in \R^n$.
For simplicity, we assume that for $(\vect x, y)$ sampled from $\calD$, we have $\norm{\vect x}_2=1$ and $|y|\le1$.

We train the neural network by \emph{randomly initialized gradient descent (GD)} on the quadratic loss over data $S$.
In particular, we first initialize the parameters randomly:
\begin{equation} \label{eqn:random-init}
\vect{w}_r(0) \sim \calN(\vect{0},\kappa^2\mat{I}), a_r \sim \unif\left(\left\{-1,1\right\}\right), \quad \forall r\in[m],
\end{equation} 
where $0<\kappa\le1$ controls the magnitude of initialization, and all randomnesses are independent.
We then fix the second layer $\vect{a}$ and optimize the first layer $\mat{W}$ through GD on the following objective function:
\begin{equation} \label{eqn:objective-function}
\Phi(\mat{W}) = 
\frac{1}{2}\sum_{i=1}^{n}\left(y_i-f_{\mat W, \vect a}(\vect{x}_i)\right)^2.
\end{equation}
The GD update rule can be written as:\footnote{Since ReLU is not differentiable at $0$, we just define ``gradient'' using this formula, and this is indeed what is used in practice.}
\begin{equation*} \label{eqn:w_r-gd}
\begin{aligned}
&\vw_r(k+1) - \vw_r(k)  =  -\eta \frac{\partial \Phi(\mat W(k))}{\partial \vect{w}_r} \\
=\,& -\eta  \frac{a_r}{\sqrt m}  \sum_{i=1}^n (f_{\mat{W}(k),\vect{a}}(\vect{x}_i) - y_i) \indict \left\{ \vect{w}_r(k)^\top \vect x_i \ge 0 \right\}  \vect{x}_i,
\end{aligned}
\end{equation*}
where $\eta>0$ is the learning rate.



\subsection{The Gram Matrix from ReLU Kernel}\label{sec:relu_kernel}
Given $\{\vx_i\}_{i=1}^n$, we define the following \emph{Gram matrix} $\mat H^\infty \in \R^{n\times n}$ as follows:
\begin{equation}\label{eqn:H_infy_defn}
\begin{aligned}
\mat{H}_{ij}^\infty &= \expect_{\vect{w} \sim \calN(\vect{0},\mat{I})}\left[ \vect{x}_i^\top \vect{x}_j\indict\left\{\vect{w}^\top \vect{x}_i \ge 0, \vect{w}^\top \vect{x}_j \ge 0\right\}\right] \\
&= \frac{\vect{x}_i^\top\vect{x}_j\left(\pi - \arccos(\vect{x}_i^\top\vect{x}_j)\right)}{2\pi}, \quad \forall i, j\in[n].
\end{aligned}
\end{equation}
This matrix can be viewed as a Gram matrix from a kernel associated with the ReLU function, and has been studied in~\citep{xie2017diverse,tsuchida2017invariance,du2018provably}.

In our setting of training a two-layer ReLU network, \citet{du2018provably} showed that if $\mat H^\infty$ is positive definite, GD converges to $0$ training loss if $m$ is sufficiently large:
\begin{thm}[\citep{du2018provably}\footnote{\citet{du2018provably} only considered the case $\kappa=1$, but it is straightforward to generalize their result to general $\kappa$ at the price of an extra $1/\kappa^2$ factor in $m$.}]
	\label{thm:ssdu-converge}
	Assume $\lambda_0 = \lambda_{\min}(\mat H^\infty) >0$. For $\delta\in(0, 1)$, if $m = \Omega\left( \frac{n^6}{\lambda_0^4 \kappa^2 \delta^3 } \right)$ and $\eta = O\left( \frac{\lambda_0}{n^2} \right)$, then with probability at least $1-\delta$ over the random initialization~\eqref{eqn:random-init}, we have:
	\begin{itemize}
		\item $\Phi(\mat W(0)) = O(n/\delta)$;
		\item $\Phi(\mat W(k+1)) \le \left( 1 - \frac{\eta \lambda_0}{2}\right)	\Phi(\mat W(k)),\  \forall k\ge0$.
	\end{itemize}
\end{thm}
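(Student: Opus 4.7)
The plan is to follow a trajectory-based analysis in the \emph{prediction space} $\vect u(k)=(f_{\mat W(k),\vect a}(\vect x_i))_{i=1}^n \in \R^n$, rather than in parameter space. The first claim, $\Phi(\mat W(0))=O(n/\delta)$, is a routine initialization bound: since each $a_r$ is $\pm 1$ uniform and independent of $\vect w_r(0)\sim\calN(\vect 0,\kappa^2\mat I)$, we have $\E[f_{\mat W(0),\vect a}(\vect x_i)]=0$ and $\E[f_{\mat W(0),\vect a}(\vect x_i)^2]=\tfrac{1}{m}\sum_r \E[\sigma(\vect w_r(0)^\top\vect x_i)^2]=\kappa^2/2\le 1/2$. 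Thus $\E[\Phi(\mat W(0))]=O(n)$ and Markov's inequality finishes it.

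For the linear convergence, the key object is the time-dependent Gram matrix
\[
\mat H(k)_{ij}=\tfrac{1}{m}\vect x_i^\top\vect x_j\sum_{r=1}^m \indict\{\vect w_r(k)^\top\vect x_i\ge 0,\vect w_r(k)^\top\vect x_j\ge 0\}.
\]
A one-step Taylor-type expansion of the GD update, combined with the fact that the activation pattern changes only at a measure-zero set, gives
\[
\vect u(k+1)-\vect u(k)=-\eta\,\mat H(k)(\vect u(k)-\vect y)+\vect e(k),
\]
where $\vect e(k)$ captures the discrepancy due to neurons whose activation flips during step $k$. The plan is to control both $\mat H(k)$ and $\vect e(k)$ so that, whenever $\lambda_{\min}(\mat H(k))\ge\lambda_0/2$, the update is a $(1-\eta\lambda_0/2)$-contraction on $\|\vect u(k)-\vect y\|_2^2=2\Phi(\mat W(k))$ (choosing $\eta=O(\lambda_0/n^2)$ handles the second-order $\eta^2$ term).

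The heart of the argument is an induction establishing that for every $k$: (a) $\|\vect w_r(k)-\vect w_r(0)\|_2\le R$ for all $r\in[m]$, where $R=\Theta(\lambda_0\kappa/n)$, and (b) $\lambda_{\min}(\mat H(k))\ge\lambda_0/2$. The base case combines two concentration steps: matrix Bernstein (or a scalar Hoeffding bound entrywise plus a union/Frobenius bound) yields $\|\mat H(0)-\mat H^\infty\|\le\lambda_0/4$ for $m$ large enough, and then a perturbation lemma says that moving each $\vect w_r$ by at most $R$ can flip only an $O(R/\kappa)$-fraction of neurons per input pair (because $\vect w_r(0)^\top\vect x_i\sim\calN(0,\kappa^2)$ and anti-concentration gives $\Pr[|\vect w_r(0)^\top\vect x_i|\le R]=O(R/\kappa)$), whence $\|\mat H(k)-\mat H(0)\|_F^2\le O(n^2\cdot R/\kappa)$ with high probability. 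For the inductive step, one uses the convergence guarantee already obtained at steps $0,\dots,k$ to write
\[
\|\vect w_r(k+1)-\vect w_r(0)\|_2\le \eta\sum_{s=0}^{k}\|\nabla_{\vect w_r}\Phi(\mat W(s))\|_2\le \frac{1}{\sqrt m}\sum_{s=0}^{k}\eta\sqrt{n}\,\|\vect u(s)-\vect y\|_2,
\]
and then summing the geometric series $\|\vect u(s)-\vect y\|_2\le(1-\eta\lambda_0/2)^{s/2}\|\vect u(0)-\vect y\|_2$ gives a total movement bounded by $O(\sqrt n\,\sqrt{\Phi(\mat W(0))}/(\sqrt m\,\lambda_0))$. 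Choosing $m=\Omega(n^6/(\lambda_0^4\kappa^2\delta^3))$ ensures this is $\le R$, closing the induction.

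The main obstacle is the circularity between (a) and (b): one needs linear convergence in order to bound cumulative weight motion, but one needs small weight motion in order to establish the spectral bound that yields linear convergence. Resolving this by a joint induction on $k$, with the right quantitative choice of $R$ matched to the anti-concentration-based perturbation bound for $\mat H$, is the delicate part. Once the induction is set up correctly, controlling the error term $\vect e(k)$ (noting that only an $O(R/\kappa)$-fraction of neurons contribute, and each contributes $O(1/\sqrt m)\cdot\|\vect u(k)-\vect y\|_2$) and absorbing it into the contraction via the smallness of $\eta$ completes the proof.
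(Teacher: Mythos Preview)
Your proposal is correct and follows exactly the trajectory-based argument of \cite{du2018provably}, which the present paper cites for this theorem rather than re-proving it; the paper's own auxiliary Lemmas~\ref{lem:weight-vector-movement}, \ref{lem:H-and-Z-perturbation} and \ref{lem:H(0)-H^inf} in Appendix~\ref{app:proof_rate} are precisely the ingredients you outline (per-neuron movement bound, $\mat H(k)\approx\mat H(0)$ via anti-concentration of $\vect w_r(0)^\top\vect x_i\sim\calN(0,\kappa^2)$, and $\mat H(0)\approx\mat H^\infty$ via Hoeffding). One minor quantitative slip: your perturbation estimate $\|\mat H(k)-\mat H(0)\|_F^2=O(n^2 R/\kappa)$ should read $\|\mat H(k)-\mat H(0)\|_F=O(n^2 R/(\kappa\delta))$ after applying Markov (this is what produces the stated $m=\Omega(n^6/(\lambda_0^4\kappa^2\delta^3))$ rather than the $n^4$ you would get from your version), but this does not affect the structure of the argument.
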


Our results on optimization and generalization also crucially depend on this matrix $\mat H^\infty$.

\subsection{Overview of Our Results}

Now we give an informal description of our main results.
It assumes that the initialization magnitude $\kappa$ is sufficiently small and the network width $m$ is sufficiently large (to be quantified later).

The following theorem gives a precise characterization of how the objective decreases to $0$.
It says that this process is essentially determined by a power method for matrix $\mat I-\eta\mat H^\infty$ applied on the label vector $\vect y$.
\begin{thm}[Informal version of Theorem~\ref{thm:convergence_rate}] \label{thm:rate-informal}
	With high probability we have:
	\[
	\Phi(\mat W(k)) \approx \frac12 \norm{(\mat I - \eta \mat H^\infty)^k \vect y}_2^2, \quad \forall k\ge 0.
	\]
\end{thm}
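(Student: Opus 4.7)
The plan is to track the evolution of the prediction vector $\vect u(k) = (f_{\mat W(k),\vect a}(\vect x_1),\ldots,f_{\mat W(k),\vect a}(\vect x_n))^\top$, so that $\Phi(\mat W(k)) = \tfrac12\|\vect y - \vect u(k)\|_2^2$, and argue that its residual $\vect y - \vect u(k)$ follows, up to a small error, the linear recursion $\vect r(k+1) = (\mat I - \eta\mat H^\infty)\vect r(k)$ with $\vect r(0) \approx \vect y$. The target estimate would then follow by iterating and taking norms.

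First I would derive a one-step update for $\vect u(k)$. Writing $\vect w_r(k+1) - \vect w_r(k) = -\eta \tfrac{a_r}{\sqrt m}\sum_j (u_j(k) - y_j)\indict\{\vect w_r(k)^\top\vect x_j \ge 0\}\vect x_j$ from the GD rule, and using that $\sigma(z)$ is piecewise linear so $\sigma(\vect w_r(k+1)^\top\vect x_i) - \sigma(\vect w_r(k)^\top\vect x_i) \approx \indict\{\vect w_r(k)^\top\vect x_i\ge 0\}\,\vect x_i^\top(\vect w_r(k+1)-\vect w_r(k))$ except when the activation pattern of neuron $r$ flips on sample $i$, I obtain
\begin{equation*}
\vect u(k+1) - \vect u(k) = -\eta \mat H(k)(\vect u(k) - \vect y) + \vect\epsilon(k),
\end{equation*}
where $\mat H(k)_{ij} = \tfrac1m\sum_r \vect x_i^\top\vect x_j\indict\{\vect w_r(k)^\top\vect x_i\ge 0, \vect w_r(k)^\top\vect x_j\ge 0\}$ is the empirical Gram matrix at step $k$, and $\vect\epsilon(k)$ accounts for the ``flipping'' neurons. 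Rearranging gives $\vect y - \vect u(k+1) = (\mat I - \eta\mat H(k))(\vect y-\vect u(k)) - \vect\epsilon(k)$.

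Next I would replace $\mat H(k)$ by its limit $\mat H^\infty$. Since Theorem~\ref{thm:ssdu-converge} already yields that throughout training each $\vect w_r(k)$ stays within a small ball around $\vect w_r(0)$ of radius $R = O(\sqrt{n}/(\sqrt m\,\lambda_0))$ times a problem-dependent factor, standard anti-concentration for Gaussian initialization shows that only an $O(R)$ fraction of neurons can flip their activation on any fixed $\vect x_i$; this simultaneously (a) makes $\|\mat H(k) - \mat H^\infty\|$ small via a Bernstein/Hoeffding argument at initialization plus a perturbation bound, and (b) makes $\|\vect\epsilon(k)\|_2$ small relative to $\|\vect y - \vect u(k)\|_2$. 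Taking $m$ large and $\kappa$ small on the required polynomial scale, I can ensure the cumulative error $\sum_k (\mat I - \eta\mat H^\infty)^{k-s}\vect{\text{(errors at step }s\text{)}}$ is negligible compared to the main term. Finally, $\vect u(0)$ is a sum of $m$ zero-mean independent terms each of order $\kappa/\sqrt m$, so by concentration $\|\vect u(0)\|_2$ is $O(\kappa\sqrt n\,\text{polylog})$, which is negligible for small $\kappa$, allowing us to replace $\vect y - \vect u(0)$ by $\vect y$.

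Unrolling the perturbed recursion and using $\|\mat I - \eta\mat H^\infty\|\le 1 - \eta\lambda_0/2 < 1$ (so the error does not amplify geometrically) yields $\vect y - \vect u(k) = (\mat I - \eta\mat H^\infty)^k\vect y + \vect\xi(k)$ with $\|\vect\xi(k)\|_2$ negligible, from which $\Phi(\mat W(k)) \approx \tfrac12\|(\mat I - \eta\mat H^\infty)^k\vect y\|_2^2$ follows. The main obstacle I anticipate is a careful accounting of the ReLU activation flips: because the indicator functions are discontinuous, both the step-wise error $\vect\epsilon(k)$ and the drift $\mat H(k)-\mat H^\infty$ must be controlled \emph{uniformly} over all $k$ and summed over $k$ without blowing up, which is where the overparameterization requirement on $m$ and the smallness of $\kappa$ enter quantitatively.
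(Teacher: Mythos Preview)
Your proposal is correct and follows essentially the same route as the paper: derive the one-step update $\vect u(k{+}1)-\vect u(k)=-\eta\mat H(k)(\vect u(k)-\vect y)+\bm\epsilon(k)$ by splitting neurons into those whose activation pattern on $\vect x_i$ can flip versus not, replace $\mat H(k)$ by $\mat H^\infty$ using the weight-movement bound plus Gaussian anti-concentration and a Hoeffding argument at initialization, unroll the resulting perturbed linear recursion, and kill the $\vect u(0)$ term via small $\kappa$. The paper's formal proof (Theorem~\ref{thm:convergence_rate}, Appendix~\ref{app:proof_rate}) carries out exactly these steps with the explicit bookkeeping you anticipate as the main obstacle.
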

As a consequence, we are able to distinguish the convergence rates for different labels $\vect y$, which can be determined by the projections of $\vect y$ on the eigenvectors of $\mat H^\infty$.
This allows us to obtain an answer to Question~\ref{ques:conv}. See Section~\ref{sec:rate} for details.

Our main result for generalization is the following:
\begin{thm}[Informal version of Theorem~\ref{thm:main_generalization}] \label{thm:generalization-informal}
	For any $1$-Lipschitz loss function, the generalization error of the two-layer ReLU network found by GD is at most
	\begin{equation} \label{eqn:gen_error}
		\sqrt{\frac{2 \vect y^\top (\mat H^\infty)^{-1} \vect y}{n}}.
	\end{equation}
\end{thm}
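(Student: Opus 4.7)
The plan is to derive a uniform generalization bound via Rademacher complexity on a carefully chosen function class that provably contains the network produced by gradient descent. The argument splits into two pieces: (i) a quantitative bound on the total distance the weights $\mat W$ can travel from their random initialization over the entire GD trajectory, expressed in the norm $\sqrt{\vect y^\top (\mat H^\infty)^{-1}\vect y}$; and (ii) a bound on the empirical Rademacher complexity of two-layer ReLU networks whose weights stay within that neighborhood of initialization. Combining the two with a standard Rademacher bound for $1$-Lipschitz losses yields the claimed inequality, since the empirical loss is zero (by Theorem~\ref{thm:ssdu-converge}).

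For piece (i), I would leverage the trajectory identity behind Theorem~\ref{thm:rate-informal}: the residual vector $\vect u(k)$ with $u_i(k)=f_{\mat W(k),\vect a}(\vx_i)-y_i$ evolves approximately as $\vect u(k+1)\approx (\mat I-\eta\mat H^\infty)\vect u(k)$ with $\vect u(0)\approx -\vect y$ when $\kappa$ is small. Assuming the activation patterns $\indict\{\vw_r(k)^\top \vx_i\ge 0\}$ remain equal to their initialization values throughout training, summing the per-step GD update and using the telescoping identity $\sum_{k\ge 0}(\mat I-\eta\mat H^\infty)^k=(\eta\mat H^\infty)^{-1}$ gives the closed form
$\vw_r(\infty)-\vw_r(0)\approx \frac{a_r}{\sqrt m}\sum_{i=1}^n\bigl((\mat H^\infty)^{-1}\vect y\bigr)_i\,\vx_i\,\indict\{\vw_r(0)^\top \vx_i\ge 0\}.$
Taking expectation over $\vw_r(0)\sim\cN(\vect 0,\kappa^2\mat I)$ and recognizing the definition of $\mat H^\infty$ in \eqref{eqn:H_infy_defn}, one finds $\E\|\vw_r(\infty)-\vw_r(0)\|_2^2\approx \frac{1}{m}\vect y^\top(\mat H^\infty)^{-1}\vect y$. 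Summing over $r$ and concentrating yields $\|\mat W(\infty)-\mat W(0)\|_F\le \sqrt{\vect y^\top(\mat H^\infty)^{-1}\vect y}\,(1+o(1))$ together with a uniform per-neuron bound $\|\vw_r(\infty)-\vw_r(0)\|_2=O(1/\sqrt m)$.

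For piece (ii), define the class $\cF_R=\{f_{\mat W,\vect a}:\|\mat W-\mat W(0)\|_F\le R,\ \|\vw_r-\vw_r(0)\|_2\le R_0\ \forall r\}$ with $R=\sqrt{\vect y^\top(\mat H^\infty)^{-1}\vect y}$ and $R_0=O(1/\sqrt m)$, and bound its empirical Rademacher complexity on $\{\vx_i\}_{i=1}^n$. Writing $f_{\mat W,\vect a}(\vx)-f_{\mat W(0),\vect a}(\vx)=\frac{1}{\sqrt m}\sum_r a_r(\relu{\vw_r^\top \vx}-\relu{\vw_r(0)^\top \vx})$, the initial piece contributes $O(\kappa)$, which vanishes as $\kappa\to 0$. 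Because $R_0$ is small enough to freeze the activation patterns, each ReLU difference linearizes to $\indict\{\vw_r(0)^\top \vx\ge 0\}(\vw_r-\vw_r(0))^\top \vx$; applying Cauchy--Schwarz across neurons (using the Frobenius constraint $\|\mat W-\mat W(0)\|_F\le R$) and then Jensen over the Rademacher signs reduces the computation to $\frac{R}{n\sqrt m}\sqrt{\sum_r\E_\beps\|\sum_i\eps_i\indict\{\cdot\}\vx_i\|_2^2}\le R/\sqrt n$, since $\|\vx_i\|_2=1$. The standard $1$-Lipschitz loss generalization bound then turns this into $\sqrt{2\vect y^\top(\mat H^\infty)^{-1}\vect y/n}$ up to confidence terms that will be absorbed into the formal statement.

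The main technical obstacle is making the ``frozen activation pattern'' approximation rigorous in both pieces simultaneously. Following the strategy of \citep{du2018provably}, a neuron $r$ can flip its activation on $\vx_i$ only when $|\vw_r(0)^\top \vx_i|\le \|\vw_r(\infty)-\vw_r(0)\|_2$; since $\vw_r(0)^\top \vx_i$ is Gaussian with standard deviation $\kappa$, the fraction of such bad neurons is of order $R_0/\kappa$, which is small once $m$ is polynomially large in $n,1/\lambda_0,1/\kappa$. One must carefully propagate these error terms through the telescoping sum used to obtain the closed-form displacement in piece (i), and through the linearization step in piece (ii), and show that the leading-order contribution remains exactly $\vect y^\top(\mat H^\infty)^{-1}\vect y$. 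A related subtlety is that the true residual dynamics involves a time-varying empirical Gram matrix $\mat H(k)$ rather than $\mat H^\infty$ itself, so one also needs $\|\mat H(k)-\mat H^\infty\|_2$ to be controlled uniformly along the trajectory, which again follows from the overparameterization.
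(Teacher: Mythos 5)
Your proposal follows essentially the same route as the paper: bound the total Frobenius-norm movement of the weights by $\sqrt{\vect y^\top(\mat H^\infty)^{-1}\vect y}$ via the linearized, frozen-activation dynamics with perturbation control (the paper does this by coupling to an auxiliary trajectory driven by $\mat Z(0)$, getting $\sqrt{\vect y^\top \mat H(0)^{-1}\vect y}$ exactly and then concentrating $\mat H(0)\approx\mat H^\infty$, which is the same calculation you do in expectation over the initialization), then bound the empirical Rademacher complexity of the correspondingly restricted class with error terms for the pattern-flipping neurons, and finish with the standard $1$-Lipschitz Rademacher bound plus near-zero training loss; this is exactly the paper's Lemmas~\ref{lem:distance_bounds} and~\ref{lem:rad_dist_func_class}. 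The one quantitative slip is in your Rademacher step: bounding $\E_{\bm\eps}\bigl\|\sum_i \eps_i \indict_{r,i}(0)\vx_i\bigr\|_2^2$ by $n$ rather than by its typical value $\approx n/2$ (each pattern is active with probability $1/2$, so $\norm{\mat Z(0)}_F^2\approx n/2$) yields $R/\sqrt{n}$ instead of $R/\sqrt{2n}$, and hence a final bound of $2\sqrt{\vect y^\top(\mat H^\infty)^{-1}\vect y/n}$ rather than the stated $\sqrt{2\vect y^\top(\mat H^\infty)^{-1}\vect y/n}$; the fix is immediate.
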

Notice that our generalization bound~\eqref{eqn:gen_error} can be computed from data $\{(\vect x_i, y_i)\}_{i=1}^n$, and is completely independent of the network width $m$.
We observe that this bound can clearly distinguish true labels and random labels, thus providing an answer to Question~\ref{ques:gen}. See Section~\ref{sec:generalization} for details.

Finally, using Theorem~\ref{thm:generalization-informal}, we prove that we can use our two-layer ReLU network trained by GD to learn a broad class of functions, including linear functions, two-layer neural networks with polynomial activation $\phi(z) = z^{2l}$ or cosine activation, etc.
See Section~\ref{sec:improper} for details.

\subsection{Additional Notation}
We introduce some additional notation that will be used.

Define $u_i = f_{\mat{W},\vect{a}}(\vect{x}_i)$, i.e., the network's prediction on the $i$-th input.
We also use $\vect{u} = \left({u}_1,\ldots,{u}_n\right)^\top \in \R^n$ to denote all $n$ predictions.
Then we have $\Phi(\mat W) = \frac12 \norm{\vect y - \vect u}_2^2$ and
the gradient of $\Phi$ can be written as:
\begin{equation} \label{eqn:gradient-w_r}
\frac{\partial \Phi(\mat W)}{\partial \vect{w}_r} = \frac{1}{\sqrt m} a_r \sum_{i=1}^n (u_i - y_i) \indict_{r, i} \vect{x}_i, \quad \forall r\in[m],
\end{equation}
where $ \indict_{r, i} = \indict \left\{ \vect{w}_r^\top \vect x_i \ge 0 \right\} $.

We define two matrices $\mat Z$ and $\mat H$ which will play a key role in our analysis of the GD trajectory:
	\begin{align*}
	\mat Z = \frac{1}{\sqrt m} \begin{pmatrix}
	\indict_{1, 1} a_1 \vect{x}_1 & 
	\cdots & \indict_{1, n} a_1 \vect{x}_n \\
	\vdots & 
	\ddots & \vdots \\
	\indict_{m, 1} a_m \vect{x}_1 & 
	\cdots & \indict_{m, n} a_m \vect{x}_n
	\end{pmatrix}
		\in \R^{md \times n},
	\end{align*}
	and
	$
		\mat{H} = \mat{Z}^\top \mat{Z}
	$.
	Note that $$\mat{H}_{ij} = \frac{\vect{x}_i^\top \vect{x}_j}{m} \sum_{r=1}^{m}\indict_{r,i}\indict_{r,j}, \quad \forall i,j\in[n].$$
With this notation we have a more compact form of the gradient~\eqref{eqn:gradient-w_r}:
\begin{equation*}
\vectorize{\nabla \Phi(\mat W)} =  \mat Z (\vect u - \vect y).
\end{equation*}
Then the GD update rule is:
\begin{equation} \label{eqn:gd}
\vectorize{\mat W(k+1)} = \vectorize{\mat W(k)} - \eta \mat Z(k) (\vect u(k) - y),
\end{equation}
for  $k=0, 1, \ldots$.
Throughout the paper, we use $k$ as the iteration number,
and also use $k$ to index all variables that depend on $\mat{W}(k)$.
For example, we have $u_i(k) = f_{\mat{W}(k),\vect{a}}(\vect{x}_i)$, $\indict_{r, i}(k) = \indict\left\{ \vect{w}_r(k)^\top \vect x_i \ge 0 \right\}$, etc.

\section{Analysis of Convergence Rate}
\label{sec:rate}
\begin{figure*}[t]
	\centering
	\subfigure[Convergence Rate, MNIST.]
	{
		\label{fig:mnist_speed}
		\includegraphics[width=0.4\textwidth]{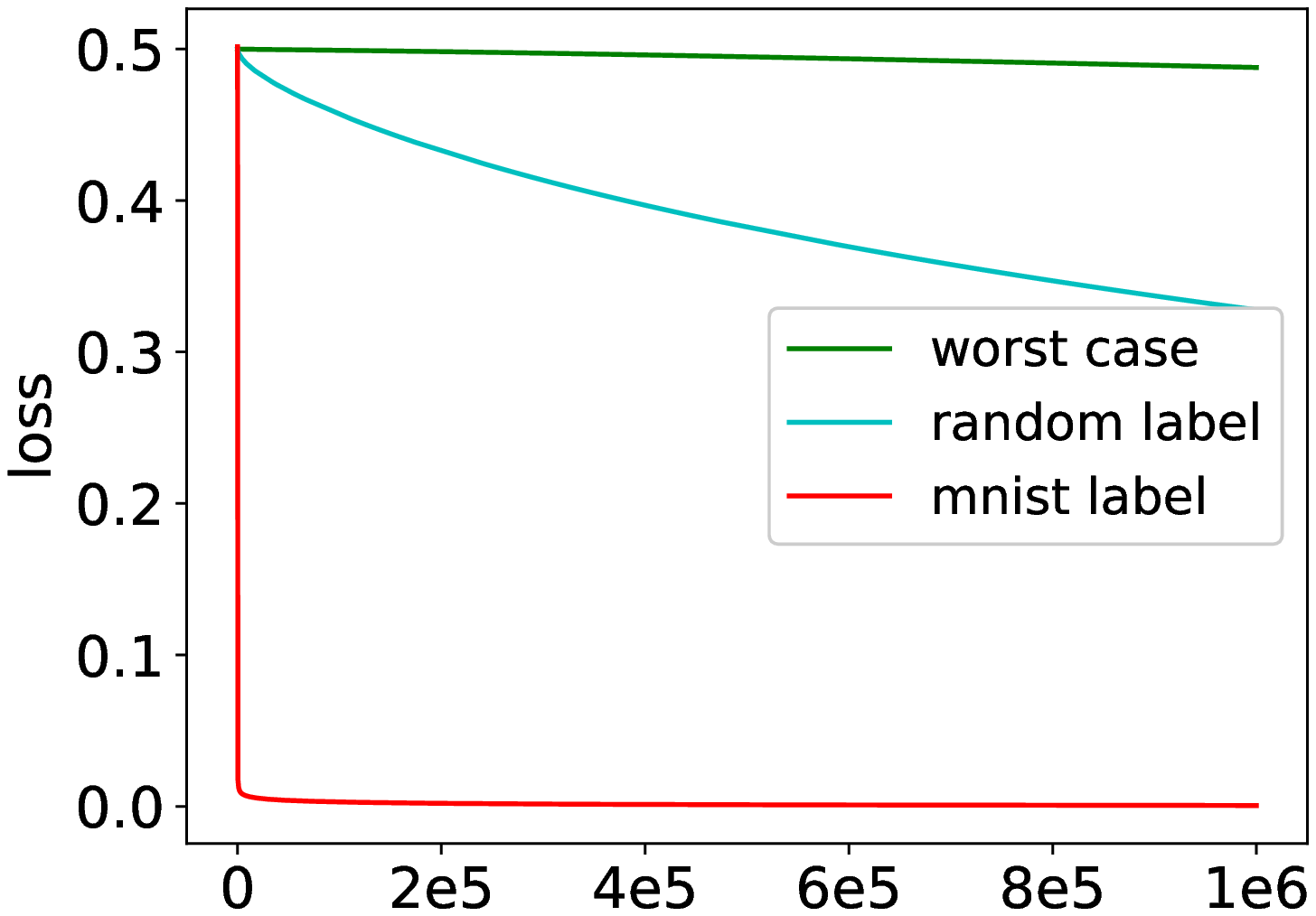}}   
	\subfigure[Eigenvalues \& Projections, MNIST.]
	{
		\label{fig:mnist_spectrum}
		\includegraphics[width=0.44\textwidth]{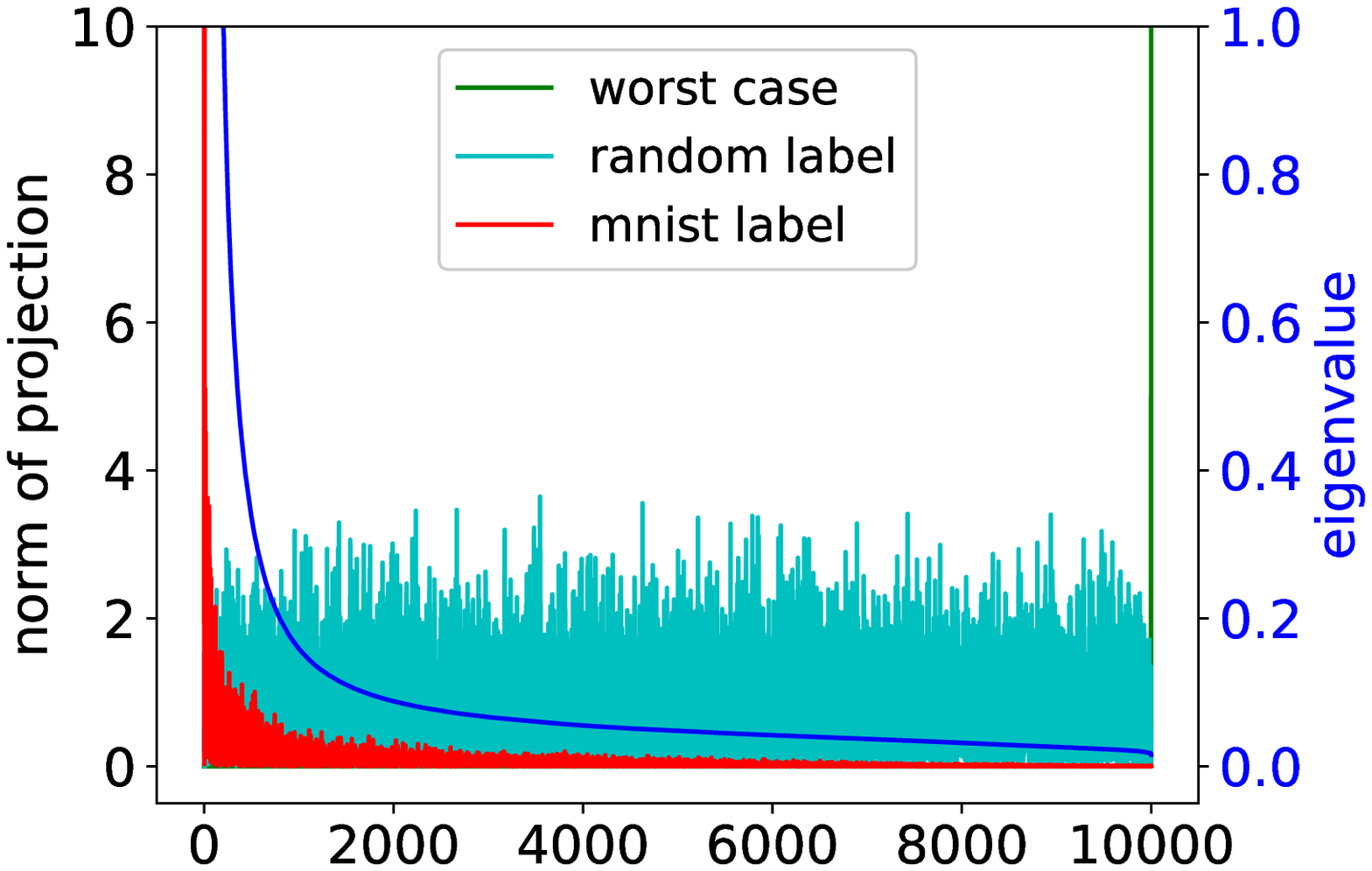}}   \\
	\subfigure[Convergence Rate, CIFAR.]
	{
		\label{fig:cifar_speed}
		\includegraphics[width=0.4\textwidth]{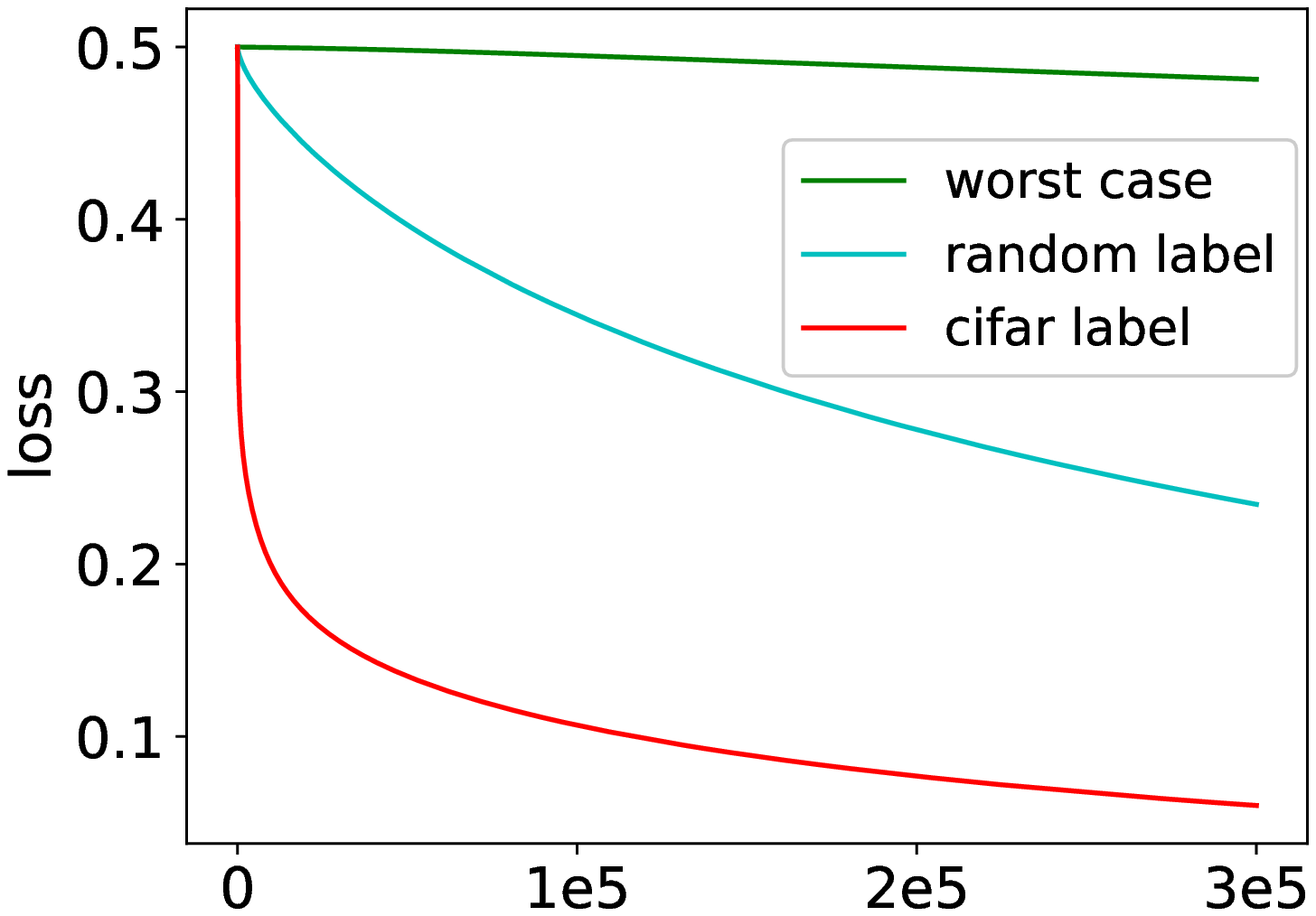}}   
	\subfigure[Eigenvalues \& Projections, CIFAR.]
	{
		\label{fig:cifar_spectrum}
		\includegraphics[width=0.44\textwidth]{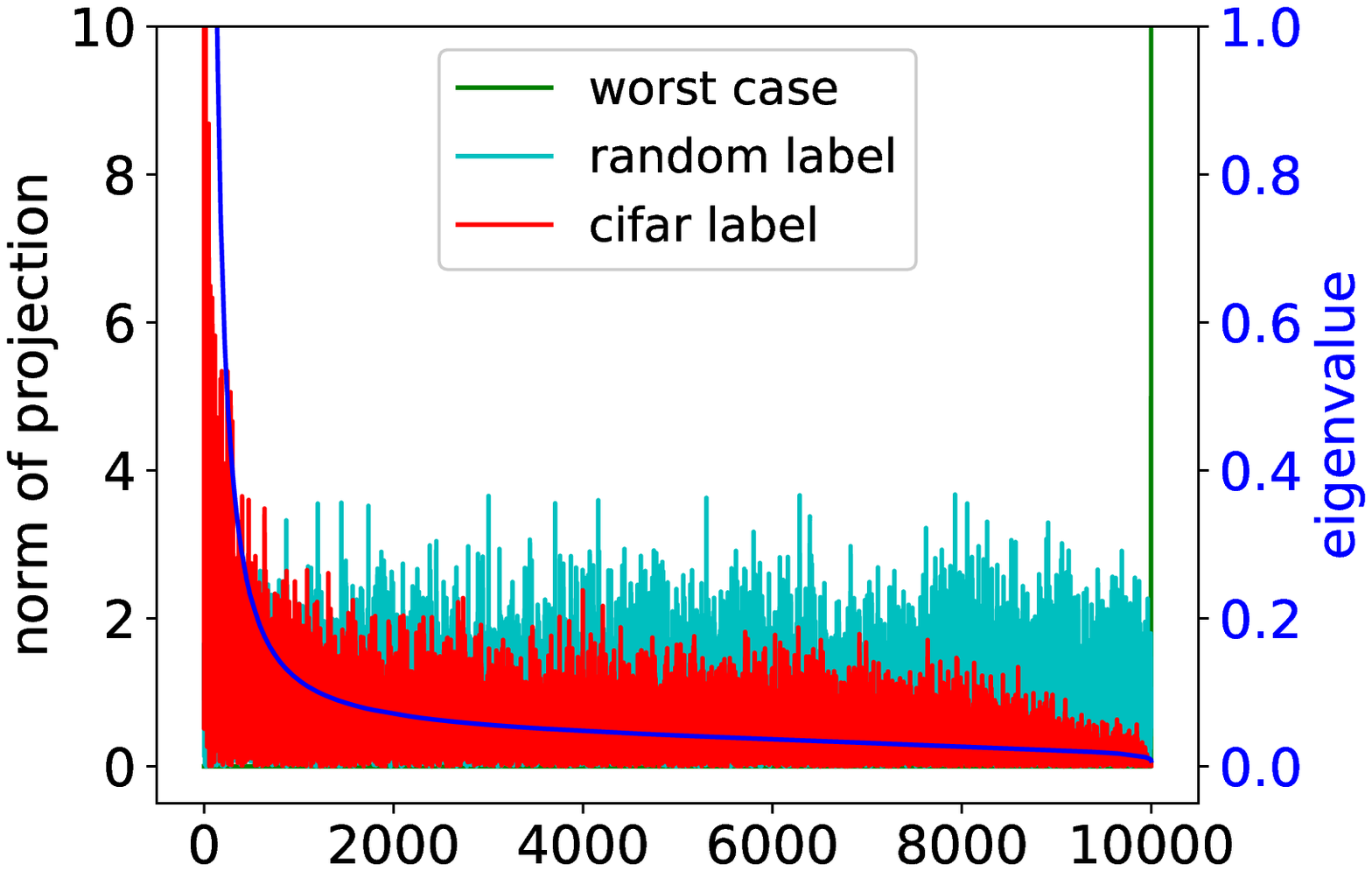}}   
	\caption{In Figures~\ref{fig:mnist_speed} and \ref{fig:cifar_speed}, we compare  convergence rates of gradient descent between using true labels, random labels and the worst case labels (normalized eigenvector of $\vect{H}^\infty$ corresponding to $\lambda_{\min}(\mat{H}^{\infty})$.
		In Figures \ref{fig:mnist_spectrum} and \ref{fig:cifar_spectrum}, we plot the eigenvalues of $\mat H^\infty$ as well as projections of true, random, and worst case labels on different eigenvectors of $\vect{H}^{\infty}$. 
		The experiments use gradient descent on data from two classes of MNIST or CIFAR.
		The plots clearly demonstrate that true labels have much better alignment with top eigenvectors, thus enjoying faster convergence.
	}
	\label{fig:rate_mnist}
\end{figure*}

Although Theorem~\ref{thm:ssdu-converge} already predicts linear convergence of GD to $0$ loss, it only provides an upper bound on the loss and does not distinguish different types of labels. In particular, it cannot answer Question~\ref{ques:conv}.
In this section we give a fine-grained analysis of the convergence rate.

Recall the loss function $\Phi(\mat W) = \frac12 \norm{\vect y - \vect u}_2^2$.
Thus, 
it is equivalent to study how fast the sequence $\left\{\vect{u}(k) \right\}_{k=0}^\infty$ converges to $\vect y$.
Key to our analysis is the observation that when the size of initialization $\kappa$ is small and the network width $m$ is large, the sequence $\left\{\vect{u}(k) \right\}_{k=0}^\infty$ stays close to another sequence $\left\{\tilde{\vect{u}}(k)\right\}_{k=0}^\infty$ which has a \emph{linear} update rule:
\begin{equation}
\begin{aligned}
\tilde{\vect{u}}(0) &= \vect{0}, \\
\tilde{\vect{u}}(k+1) &= \tilde{\vect{u}}(k) - \eta \mat{H}^\infty\left(\tilde{\vect{u}}(k)-\vect{y}\right), \label{eqn:u_hat_dynamics}
\end{aligned}
\end{equation}
where $\mat H^\infty$ is the Gram matrix defined in~\eqref{eqn:H_infy_defn}.

Write the eigen-decomposition $\mat H^\infty = \sum_{i=1}^n \lambda_i \vect v_i \vect v_i^\top$, where $\vect v_1, \ldots, \vect v_n \in \R^n$ are orthonormal eigenvectors of $\mat H^\infty$ and $\lambda_1, \ldots, \lambda_n$ are corresponding eigenvalues.
Our main theorem in this section is the following:
\begin{thm}\label{thm:convergence_rate}
Suppose $\lambda_0 = \lambda_{\min}(\mat H^\infty) >0$,
$\kappa = O\left( \frac{\epsilon \delta}{\sqrt n} \right)$,
 $m = \Omega\left( \frac{n^7}{\lambda_0^4 \kappa^2 \delta^4 \epsilon^2} \right)$ and $\eta = O\left( \frac{\lambda_0}{n^2} \right)$. 
Then with probability at least $1-\delta$ over the random initialization, for all $k=0, 1, 2, \ldots$ we have: 
\begin{equation} \label{eqn:u(k)-y_size}
\norm{\vect{y}-\vect{u}(k)}_2
= \sqrt{\sum_{i=1}^{n}(1-\eta\lambda_i)^{2k} \left(\vect{v}_i^\top \vect{y}\right)^2} \pm \epsilon.
\end{equation}
\end{thm}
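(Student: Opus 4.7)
The plan is to couple the true GD trajectory $\{\vect{u}(k)\}$ with the linear surrogate $\{\tilde{\vect{u}}(k)\}$ defined by \eqref{eqn:u_hat_dynamics}. Since $\tilde{\vect{u}}(0)=\vect{0}$, unrolling \eqref{eqn:u_hat_dynamics} gives $\vect{y}-\tilde{\vect{u}}(k)=(\mat{I}-\eta\mat{H}^\infty)^k\vect{y}$, and expanding $\vect{y}$ in the eigenbasis $\{\vect{v}_i\}$ of $\mat{H}^\infty$ yields exactly $\|\vect{y}-\tilde{\vect{u}}(k)\|_2=\sqrt{\sum_i(1-\eta\lambda_i)^{2k}(\vect{v}_i^\top\vect{y})^2}$. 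By the triangle inequality it therefore suffices to prove $\|\vect{u}(k)-\tilde{\vect{u}}(k)\|_2\le\epsilon$ uniformly in $k$.

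To set up the comparison I would expand one GD step using \eqref{eqn:gd} and the ReLU structure. Writing $u_i(k+1)-u_i(k)=\tfrac{1}{\sqrt m}\sum_r a_r[\sigma(\vect{w}_r(k+1)^\top\vect{x}_i)-\sigma(\vect{w}_r(k)^\top\vect{x}_i)]$ and splitting the sum over $r$ into neurons whose activation pattern $\indict_{r,i}$ does \emph{not} flip between iteration $k$ and $k+1$ and those that do, the first piece evaluates to $-\eta\,\mat{H}(k)(\vect{u}(k)-\vect{y})$ by the definition of $\mat{H}$, while the second piece is an error vector $\bm{\zeta}(k)$. Rearranging,
\[
\vect{u}(k+1)-\tilde{\vect{u}}(k+1)=(\mat{I}-\eta\mat{H}^\infty)(\vect{u}(k)-\tilde{\vect{u}}(k))+\eta(\mat{H}^\infty-\mat{H}(k))(\vect{u}(k)-\vect{y})+\bm{\zeta}(k).
\]
For the prescribed $\eta$ we have $\|\mat{I}-\eta\mat{H}^\infty\|_2\le 1$, so unrolling yields
\[
\|\vect{u}(k)-\tilde{\vect{u}}(k)\|_2\le\|\vect{u}(0)\|_2+\sum_{s=0}^{k-1}\Big(\eta\|\mat{H}^\infty-\mat{H}(s)\|_F\,\|\vect{u}(s)-\vect{y}\|_2+\|\bm{\zeta}(s)\|_2\Big).
\]

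The ingredients I would then establish are: (a) $\|\vect{u}(0)\|_2=\tilde O(\kappa\sqrt n)=O(\epsilon)$ by Gaussian concentration on each $u_i(0)$; (b) $\|\mat{H}(0)-\mat{H}^\infty\|_F=\tilde O(n/\sqrt m)$ by Hoeffding applied entrywise to $\mat{H}(0)_{ij}$ and a union bound; (c) the trajectory bounds of \citet{du2018provably} imported through Theorem~\ref{thm:ssdu-converge}, namely $\|\vect{w}_r(k)-\vect{w}_r(0)\|_2\le R=\tilde O(\sqrt n/(\sqrt m\,\lambda_0))$ and $\|\vect{u}(s)-\vect{y}\|_2\le(1-\eta\lambda_0/2)^{s/2}\cdot O(\sqrt{n/\delta})$; (d) from (c) plus Gaussian anti-concentration $\Pr[|\vect{w}_r(0)^\top\vect{x}_i|\le R]=O(R/\kappa)$ bounding the expected number of pattern flips at each step, giving $\|\mat{H}(k)-\mat{H}^\infty\|_F\le \tilde O(n/\sqrt m)+\tilde O(n\sqrt{R/\kappa})$; (e) the same flip count combined with the fact that each flipping neuron contributes at most $O(\eta/\sqrt m)\cdot\|\vect{u}(s)-\vect{y}\|_2$ to $\bm{\zeta}(s)$, yielding $\|\bm{\zeta}(s)\|_2=\tilde O(\eta\sqrt{nR/\kappa})\cdot\|\vect{u}(s)-\vect{y}\|_2$. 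Plugging (a)--(e) into the unrolled recursion and bounding $\sum_s(1-\eta\lambda_0/2)^{s/2}=O(1/(\eta\lambda_0))$ gives $\|\vect{u}(k)-\tilde{\vect{u}}(k)\|_2\le\epsilon$ for all $k$ under the stated choice of $m$.

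The main obstacle is making ingredient (e) work \emph{uniformly} in $k$: the per-step error $\bm{\zeta}(k)$ does not decay by itself, only the residual $\|\vect{u}(s)-\vect{y}\|_2$ multiplying it does. So we must pair the geometric decay of the loss from Theorem~\ref{thm:ssdu-converge} against the $m^{-1/2}$-scale of each individual neuron flip, and carefully prevent cumulative pattern changes from destroying the near-linearity of the dynamics. Balancing these two opposing effects is exactly what forces the rather delicate polynomial requirement $m=\Omega(n^7/(\lambda_0^4\kappa^2\delta^4\epsilon^2))$, and keeping the exponents in $n$, $\lambda_0$, $\kappa$, $\delta$, $\epsilon$ mutually consistent across (a)--(e) is the most error-prone part of the argument.
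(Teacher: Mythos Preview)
Your proposal is correct and follows essentially the same route as the paper's proof: split each GD step into a linear piece $-\eta\mat{H}(k)(\vect u(k)-\vect y)$ plus a pattern-flip error, replace $\mat{H}(k)$ by $\mat{H}^\infty$, unroll, and control the accumulated perturbation using the weight-movement radius $R$ from Theorem~\ref{thm:ssdu-converge}, Gaussian anti-concentration $\Pr[|\vect w_r(0)^\top\vect x_i|\le R]=O(R/\kappa)$ at initialization, and the geometric decay of $\|\vect u(s)-\vect y\|_2$. The paper organizes the unrolling around $\vect u(k)-\vect y$ rather than $\vect u(k)-\tilde{\vect u}(k)$ (an equivalent bookkeeping choice), and a couple of your intermediate scalings in (d)--(e) are off by polynomial factors---each flipping neuron contributes $O(\eta\sqrt n/m)\|\vect u(s)-\vect y\|_2$ (not $O(\eta/\sqrt m)\|\cdot\|_2$) and there are $O(mR/\kappa)$ of them, so the flip term is linear in $R/\kappa$ rather than $\sqrt{R/\kappa}$---but these are precisely the bookkeeping issues you already flagged and do not affect the structure of the argument.
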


The proof of Theorem~\ref{thm:convergence_rate} is given in Appendix~\ref{app:proof_rate}.

In fact, 
the dominating term $\sqrt{\sum_{i=1}^{n}(1-\eta\lambda_i)^{2k} \left(\vect{v}_i^\top \vect{y}\right)^2}$ is exactly equal to $\norm{\vect{y}-\tilde{\vect{u}}(k)}_2$, which we prove in Section~\ref{sec:proof_sketch_rate}.

In light of~\eqref{eqn:u(k)-y_size}, it suffices to understand how fast $\sum_{i=1}^{n}(1-\eta\lambda_i)^{2k} \left(\vect{v}_i^\top \vect{y}\right)^2$ converges to $0$ as $k$ grows.
Define $\useq_i(k) =(1-\eta\lambda_i)^{2k}(\vect{v}_i^\top \vect{y})^2$, and notice that each sequence $\{\useq_i(k)\}_{k=0}^\infty$ is a geometric sequence which starts at $\useq_i(0) =(\vect{v}_i^\top \vect{y})^2$ and decreases at ratio $(1-\eta\lambda_i)^2$.
In other words, we can think of decomposing the label vector $\vect y$ into its projections onto all eigenvectors $\vect v_i$ of $\mat H^\infty$: $\norm{\vect y}_2^2 = \sum_{i=1}^n (\vect{v}_i^\top \vect{y})^2 = \sum_{i=1}^n \useq_i(0)$, and the $i$-th portion shrinks exponentially at ratio $(1-\eta\lambda_i)^2$.
The larger $\lambda_i$ is, the faster $\{\useq_i(k)\}_{k=0}^\infty$ decreases to $0$, so in order to have faster convergence we would like the projections of $\vect y$ onto top eigenvectors to be larger.
Therefore we obtain the following intuitive rule to compare the convergence rates on two sets of labels in a qualitative manner (for fixed $\norm{\vect y}_2$):
\begin{itemize}
	\item For a set of labels $\vect{y}$, if they align with the top eigenvectors, i.e., $(\vect{v}_i^\top \vect{y})^2$ is large for large $\lambda_i$, then gradient descent converges quickly.
	
	\item For a set of labels $\vect{y}$, if the projections on eigenvectors $\{\left(\vect{v}_i^\top \vect{y}\right)^2\}_{i=1}^n$ are uniform, or labels align with eigenvectors with respect to small eigenvalues, then gradient descent converges with a slow rate.
\end{itemize}


\paragraph{Answer to Question~\ref{ques:conv}.}
We now use this reasoning to answer Question~\ref{ques:conv}.
In Figure~\ref{fig:mnist_spectrum}, we compute the eigenvalues of $\mat{H}^\infty$ (blue curve) for the MNIST dataset.
The plot shows the eigenvalues of $\mat{H}^\infty$ admit a fast decay.
We further compute the projections $\{\abs{\vect{v}_i^\top \vect{y}}\}_{i=1}^n$ of true labels (red) and random labels (cyan).
We observe that there is a significant difference between the projections of true labels and random labels:
true labels align well with top eigenvectors whereas projections of random labels are close to being uniform.
Furthermore, according to our theory, if a set of labels align with the eigenvector associated with the least eigenvalue, the convergence rate of gradient descent will be extremely slow.
We construct such labels and in Figure~\ref{fig:mnist_speed} we indeed observe slow convergence.
%
We repeat the same experiments on CIFAR and have similar observations (Figures~\ref{fig:cifar_speed} and \ref{fig:cifar_spectrum}).
These empirical findings support our theory on the convergence rate of gradient descent.
See Appendix~\ref{sec:expdetails} for implementation details.

\subsection{Proof Sketch of Theorem~\ref{thm:convergence_rate}}
\label{sec:proof_sketch_rate}

Now we prove $\norm{\vect{y}-\tilde{\vect{u}}(k)}_2^2 = \sum_{i=1}^{n}(1-\eta\lambda_i)^{2k} \left(\vect{v}_i^\top \vect{y}\right)^2$.
The entire proof of Theorem~\ref{thm:convergence_rate} is given in Appendix~\ref{app:proof_rate}, which relies on the fact that the dynamics of $\{\vect u(k)\}_{k=0}^\infty$ is essentially a perturbed version of~\eqref{eqn:u_hat_dynamics}.

From~\eqref{eqn:u_hat_dynamics} we have $\tilde{\vect{u}}(k+1) - \vect y = (\mat I - \eta \mat{H}^\infty) \left(\tilde{\vect{u}}(k)-\vect{y}\right)$, which implies $\tilde{\vect{u}}(k)-\vect{y} = (\mat I - \eta \mat{H}^\infty)^k \left(\tilde{\vect{u}}(0)-\vect{y}\right) = - (\mat I - \eta \mat{H}^\infty)^k \vect{y}$.
Note that $(\mat I - \eta \mat{H}^\infty)^k$ has eigen-decomposition $(\mat I - \eta \mat{H}^\infty)^k = \sum_{i=1}^n(1-\eta\lambda_i)^k \vect v_i \vect v_i^\top$ and that $\vect y$ can be decomposed as $\vect y = \sum_{i=1}^n (\vect v_i^\top \vect y) \vect v_i$.
Then we have $\tilde{\vect{u}}(k)-\vect{y} = - \sum_{i=1}^n(1-\eta\lambda_i)^k (\vect v_i^\top \vect y) \vect v_i$, which implies $\norm{\tilde{\vect{u}}(k)-\vect{y}}_2^2 =  \sum_{i=1}^n (1-\eta\lambda_i)^{2k} (\vect v_i^\top \vect y)^2$.

\section{Analysis of Generalization}
\label{sec:generalization}

In this section, we study the generalization ability of the two-layer neural network $f_{\mat W(k), \vect a}$ trained by GD.

First, in order for optimization to succeed, i.e., zero training loss is achieved,
we need a \emph{non-degeneracy} assumption on the data distribution, defined below:
\begin{defn} \label{def:non-degenerate distribution}
	A distribution $\calD$ over $\R^d \times \R$ is $(\lambda_0, \delta, n)$-non-degenerate, if for $n$ i.i.d. samples $\{(\vect x_i, y_i)\}_{i=1}^n$ from $\calD$, with probability at least $1-\delta$ we have $\lambda_{\min}(\mat H^\infty) \ge \lambda_0 >0$.
\end{defn}
\begin{rem}
	Note that as long as no two $\vect x_i$ and $\vect x_j$ are parallel to each other, we have $\lambda_{\min}(\mat H^\infty) > 0$. (See~\citep{du2018provably}).
	For most real-world distributions, any two training inputs are not parallel.
\end{rem}

Our main theorem is the following:
\begin{thm}\label{thm:main_generalization}
	Fix 
	a failure probability $\delta \in (0, 1)$.
	Suppose our data $S = \left\{(\vect x_i,  y_i)\right\}_{i=1}^n$ are i.i.d. samples from a $(\lambda_0, \delta/3, n)$-non-degenerate distribution $\calD$, and $\kappa = O\left( \frac{ \lambda_0 \delta}{ n} \right), m\ge \kappa^{-2} \poly\left(n, \lambda_0^{-1}, \delta^{-1} \right)  $.
	Consider any loss function $\ell: \R\times\R \to [0, 1]$ that is $1$-Lipschitz in the first argument such that $\ell(y, y)=0$.
	Then with probability at least $1-\delta$ over the random initialization and the training samples, the two-layer neural network $f_{\mat W(k), \vect a}$ trained by GD for $k\ge \Omega\left( \frac{1}{\eta\lambda_0} \log\frac{n}{\delta} \right)$ iterations has population loss $L_\calD(f_{\mat W(k), \vect a}) = \E_{(\vect x, y)\sim\calD}\left[ \ell(f_{\mat W(k), \vect a}(\vect x), y) \right]$ bounded as:
	\begin{equation} \label{eqn:main_generalization}
	L_\calD(f_{\mat W(k), \vect a}) \le \sqrt{\frac{2\vect{y}^\top \left(\mat{H}^{\infty}\right)^{-1}\vect{y}}{n}}  + O\left( \sqrt{\frac{\log\frac{n}{\lambda_0\delta}}{n}} \right) .
	\end{equation}
\end{thm}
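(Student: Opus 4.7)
}

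The plan is to combine a tight bound on how far the weights travel during GD with a Rademacher complexity argument over a small ball around the random initialization. The target quantity $\sqrt{\vect{y}^\top (\mat H^\infty)^{-1} \vect y / n}$ should emerge as (roughly) the radius of movement divided by $\sqrt{n}$; the additive $O(\sqrt{\log(n/(\lambda_0\delta))/n})$ term will come from standard Rademacher-to-generalization concentration and from taking a union bound over a discretization of the radius.

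\emph{Step 1: Control of the GD trajectory.} First I would use the dynamics established in Theorem~\ref{thm:convergence_rate} to compute the total per-neuron movement. From the update rule $\vect w_r(k{+}1)-\vect w_r(k) = -\eta \frac{a_r}{\sqrt m}\sum_i (u_i(k)-y_i)\indict_{r,i}(k)\vect x_i$, telescoping gives
\[
\vect w_r(\infty) - \vect w_r(0) \;=\; -\eta \frac{a_r}{\sqrt m}\sum_{k=0}^{\infty} \sum_i (u_i(k)-y_i)\indict_{r,i}(k)\vect x_i .
\]
Replacing $\indict_{r,i}(k)$ by $\indict_{r,i}(0)$ (valid because activation patterns barely change for $m$ large) and $\vect u(k)-\vect y$ by its linearized surrogate $-(\mat I-\eta\mat H^\infty)^k\vect y$ from Section~\ref{sec:rate}, the geometric sum collapses to $\eta^{-1}(\mat H^\infty)^{-1}\vect y$ and I would obtain
\[
\|\mat W(\infty) - \mat W(0)\|_F^2 \;\approx\; \vect y^\top (\mat H^\infty)^{-1}\mat H(0)(\mat H^\infty)^{-1}\vect y \;\approx\; \vect y^\top (\mat H^\infty)^{-1}\vect y ,
\]
using $\mat H(0)\approx \mat H^\infty$. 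The key technical work is to turn these $\approx$'s into quantitative bounds with small error terms in $\kappa$ and $1/\sqrt m$; this is a refinement of the perturbation bounds used in \citet{du2018provably} but now tracking the direction of movement rather than just its magnitude.

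\emph{Step 2: Rademacher complexity of the ball.} Let $R^2 := \vect y^\top(\mat H^\infty)^{-1}\vect y + o(1)$ and consider the function class $\mathcal F_R := \{\vect x\mapsto f_{\mat W,\vect a}(\vect x)-f_{\mat W(0),\vect a}(\vect x) : \|\mat W-\mat W(0)\|_F\le R\}$ with $\vect a$ and $\mat W(0)$ frozen at their random values. Because, for every $\mat W$ within Frobenius distance $R$ of $\mat W(0)$, the fraction of neurons whose activation on any $\vect x_i$ disagrees with $\indict_{r,i}(0)$ is $o(1)$, one can effectively linearize: for each $\vect x$, $f_{\mat W,\vect a}(\vect x) - f_{\mat W(0),\vect a}(\vect x)$ is close to $\langle \vectorize{\mat W-\mat W(0)}, \vect \phi(\vect x)\rangle$ where $\vect \phi(\vect x)$ is a feature map (columns of $\mat Z(0)$) with $\|\vect\phi(\vect x)\|_2 \le 1$. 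Standard linear Rademacher bounds then give $\widehat{\mathcal R}_S(\mathcal F_R)\le R/\sqrt{n}$ up to lower-order terms. Combined with the $\ell(y,y)=0$ and $1$-Lipschitz assumptions, Talagrand contraction and the base inequality $\ell(f_{\mat W(0),\vect a}(\vect x),y)\le |y-f_{\mat W(0),\vect a}(\vect x)|$ (which is small because of the initialization scale $\kappa$) give the bound in~\eqref{eqn:main_generalization}.

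\emph{Step 3: Generalization.} With the empirical Rademacher bound in hand, I would invoke the standard symmetrization inequality for bounded loss: with probability $1-\delta/3$,
\[
L_{\calD}(f_{\mat W(k),\vect a}) \le \frac{1}{n}\sum_i \ell(f_{\mat W(k),\vect a}(\vect x_i), y_i) + 2\widehat{\mathcal R}_S(\ell\circ\mathcal F_R) + O\!\left(\sqrt{\tfrac{\log(1/\delta)}{n}}\right).
\]
The empirical loss term vanishes up to $O(\epsilon)$ by Theorem~\ref{thm:ssdu-converge} (with $k \gtrsim (\eta\lambda_0)^{-1}\log(n/\delta)$), and the Rademacher term is $\le R/\sqrt n = \sqrt{\vect y^\top(\mat H^\infty)^{-1}\vect y / n}$ plus lower order. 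A union bound over the three events (non-degeneracy, random init, sample draw) with budget $\delta/3$ each produces the stated $\log(n/(\lambda_0\delta))$ factor.

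\emph{Main obstacle.} The delicate part is Step~1: proving that $\|\mat W(k)-\mat W(0)\|_F^2$ is bounded by $\vect y^\top(\mat H^\infty)^{-1}\vect y$ and not just by the crude $O(n/\lambda_0)$ that follows from Theorem~\ref{thm:ssdu-converge}. This requires tracking the \emph{direction} of each GD step, not only its norm, and showing that the accumulated error from (i) drifting activation patterns and (ii) deviation of $\vect u(k)$ from the linearized $\tilde{\vect u}(k)$ remains negligible throughout the entire trajectory --- both for finite $k$ and uniformly as $k\to\infty$. Making this perturbation argument tight enough that the leading constant matches the kernel quadratic form $\vect y^\top(\mat H^\infty)^{-1}\vect y$, while keeping $m$ only polynomially large, is the technical heart of the proof.
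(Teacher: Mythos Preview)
Your proposal is correct and follows essentially the same route as the paper: bound $\|\mat W(k)-\mat W(0)\|_F$ by $\sqrt{\vect y^\top(\mat H^\infty)^{-1}\vect y}$ via the linearized trajectory (this is Lemma~\ref{lem:distance_bounds}), bound the empirical Rademacher complexity of a ball around $\mat W(0)$ (Lemma~\ref{lem:rad_dist_func_class}), and finish by symmetrization plus a union bound over a grid of radii to handle the data-dependence of the radius. The one technical point to tighten in Step~2 is that the paper defines the function class with \emph{both} a per-neuron constraint $\|\vect w_r-\vect w_r(0)\|_2\le R=O(n/(\sqrt m\lambda_0))$ and the Frobenius constraint $\|\mat W-\mat W(0)\|_F\le B$; the per-neuron bound (already available from Lemma~\ref{lem:weight-vector-movement}) is what makes the sign-pattern stability uniform over the class and yields the clean $B/\sqrt{2n}$ Rademacher term, so you will want to carry it through rather than arguing from the Frobenius radius alone.
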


The proof of Theorem~\ref{thm:main_generalization} is given in Appendix~\ref{app:proof_generalization} and we sketch the proof in Section~\ref{sec:proof_sketch_generalization}.

Note that in Theorem~\ref{thm:main_generalization} there are three sources of possible failures: (i) failure of satisfying $\lambda_{\min}(\mat H^\infty) \ge \lambda_0$, (ii) failure of random initialization, and (iii) failure in the data sampling procedure (c.f. Theorem~\ref{thm:rad_generalization}). We ensure that all these failure probabilities are at most $\delta/3$ so that the final failure probability is at most $\delta$.


As a corollary of Theorem~\ref{thm:main_generalization}, for binary classification problems (i.e., labels are $\pm1$), we can show that \eqref{eqn:main_generalization} also bounds the \emph{population classification error} of the learned classifier. 
See Appendix~\ref{app:proof_generalization} for the proof.
\begin{cor} \label{cor:binary-classification-generalization}
	Under the same assumptions as in Theorem~\ref{thm:main_generalization} and additionally assuming that $y\in\{\pm1\}$ for $(\vect x, y)\sim\cD$, with probability at least $1-\delta$, the population classification error $L_\calD^{01}(f_{\mat W(k), \vect a}) = \Pr_{(\vect x, y)\sim\calD}\left[ \sgn\left(f_{\mat W(k), \vect a}(\vect x) \right) \not= y \right]$ is bounded as:
	\begin{equation*} 
	L_\calD^{01}(f_{\mat W(k), \vect a})  \le \sqrt{\frac{2\vect{y}^\top \left(\mat{H}^{\infty}\right)^{-1}\vect{y}}{n}}  + O\left( \sqrt{\frac{\log\frac{n}{\lambda_0\delta}}{n}} \right) .
	\end{equation*}
\end{cor}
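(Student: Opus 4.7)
The plan is to derive the classification-error bound by reducing it to the already-proved generalization bound in Theorem~\ref{thm:main_generalization} via a carefully chosen $1$-Lipschitz surrogate loss that upper bounds the $0$-$1$ loss whenever labels are in $\{\pm 1\}$. The work is entirely in constructing the right surrogate; no new analysis of the GD trajectory is needed.

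Concretely, I would define the truncated absolute-error loss
\[
\ell(\hat y, y) \;=\; \min\bigl\{\, 1,\ \lvert \hat y - y \rvert\, \bigr\}.
\]
This $\ell$ takes values in $[0,1]$, satisfies $\ell(y,y)=0$ for every $y$, and is $1$-Lipschitz in its first argument because $\hat y\mapsto |\hat y-y|$ is $1$-Lipschitz and capping by a constant preserves the Lipschitz property. Hence $\ell$ meets every hypothesis required by Theorem~\ref{thm:main_generalization}.

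The key observation is a pointwise domination of the $0$-$1$ indicator by $\ell$ when $y\in\{\pm1\}$. If $\sgn(\hat y)\neq y$, then $\hat y\cdot y\le 0$, so $|\hat y-y|\ge 1$ and therefore $\ell(\hat y,y)=1=\indict\{\sgn(\hat y)\neq y\}$. If instead $\sgn(\hat y)=y$, then the indicator equals $0\le \ell(\hat y,y)$. Consequently,
\[
\indict\bigl\{\sgn(\hat y)\neq y\bigr\} \;\le\; \ell(\hat y,y) \qquad \text{for all } \hat y\in\R,\ y\in\{\pm 1\}.
\]
Taking expectation over $(\vect x,y)\sim\calD$ at $\hat y=f_{\mat W(k),\vect a}(\vect x)$ yields
\[
L_\calD^{01}\bigl(f_{\mat W(k),\vect a}\bigr) \;\le\; \E_{(\vect x,y)\sim\calD}\!\left[\ell\bigl(f_{\mat W(k),\vect a}(\vect x),\,y\bigr)\right] \;=\; L_\calD\bigl(f_{\mat W(k),\vect a}\bigr).
\]

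Finally, I would invoke Theorem~\ref{thm:main_generalization} applied to this particular $\ell$ under the given hypotheses on $\kappa$, $m$, $\eta$ and the number of iterations $k$, which with probability at least $1-\delta$ bounds $L_\calD(f_{\mat W(k),\vect a})$ by the right-hand side of~\eqref{eqn:main_generalization}. Chaining this with the inequality above gives exactly the claimed bound on $L_\calD^{01}$, with the same failure probability $\delta$ (no extra union bound is needed since we are using the very event already handled by Theorem~\ref{thm:main_generalization}). There is no real obstacle here; the only subtlety is verifying that $\ell$ is genuinely $1$-Lipschitz and genuinely dominates the $0$-$1$ loss on $\{\pm 1\}$-labels, both of which are immediate from the two-line arguments above.
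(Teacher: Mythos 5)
Your proposal is correct and follows essentially the same route as the paper: reduce to Theorem~\ref{thm:main_generalization} via a bounded, $1$-Lipschitz surrogate with $\ell(y,y)=0$ that pointwise dominates the $0$-$1$ loss on $\{\pm1\}$ labels, then take expectations. The only difference is the choice of surrogate (your truncated absolute loss $\min\{1,|\hat y-y|\}$ versus the paper's ramp loss), which is immaterial since both satisfy all hypotheses of the theorem and the domination property.
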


Now we discuss our generalization bound.
The dominating term in \eqref{eqn:main_generalization} is:
\begin{equation}
\sqrt{\frac{ 2 \vect{y}^\top \left(\mat{H}^\infty\right)^{-1}\vect{y} }{n}}. \label{eqn:complexity}
\end{equation}
This can be viewed as a \emph{complexity measure of data} that one can use to predict the test accuracy of the learned neural network.
Our result has the following advantages:
(i) our complexity measure~\eqref{eqn:complexity} can be directly computed given data $\{(\vect x_i, y_i)\}_{i=1}^n$, without the need of training a neural network or assuming a ground-truth model;
(ii) our bound is completely independent of the network width $m$.


\paragraph{Evaluating our completixy measure~\eqref{eqn:complexity}.}
To illustrate that the complexity measure in~\eqref{eqn:complexity} effectively determines test error, in Figure~\ref{fig:generalization}
 we compare this complexity measure versus the test error with true labels and random labels (and mixture of true and random labels).
Random and true labels have significantly different complexity measures, and as the portion of random labels increases, our complexity measure also increases.
See Appendix~\ref{sec:expdetails} for implementation details.
\begin{figure}[t]
	\centering
	\subfigure[MNIST Data.]
	{
		\label{fig:generalization_mnist}
		\includegraphics[width=0.4\textwidth]{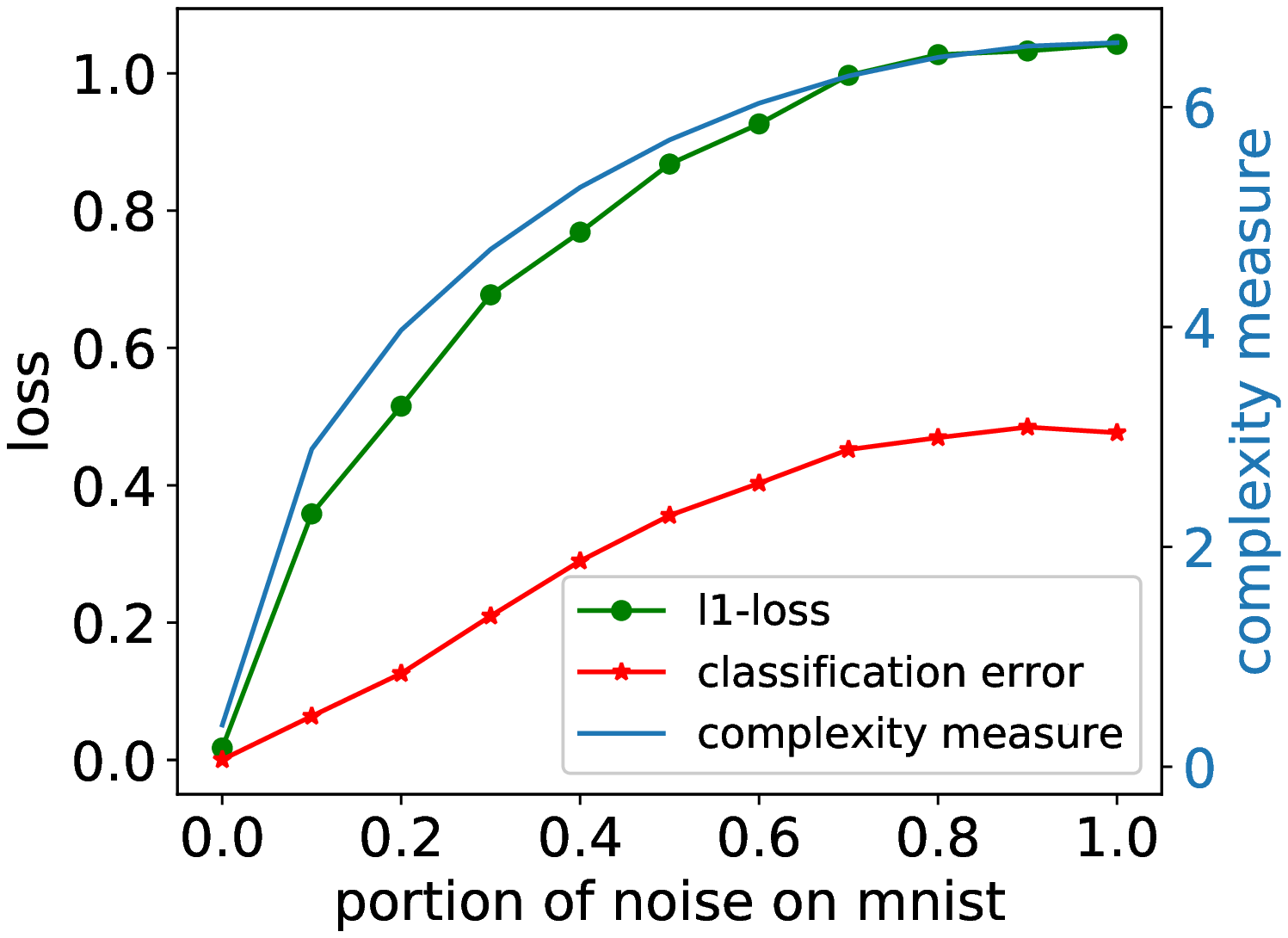}}   
	\subfigure[CIFAR Data.]
	{
		\label{fig:generalization_cifar}
		\includegraphics[width=0.4\textwidth]{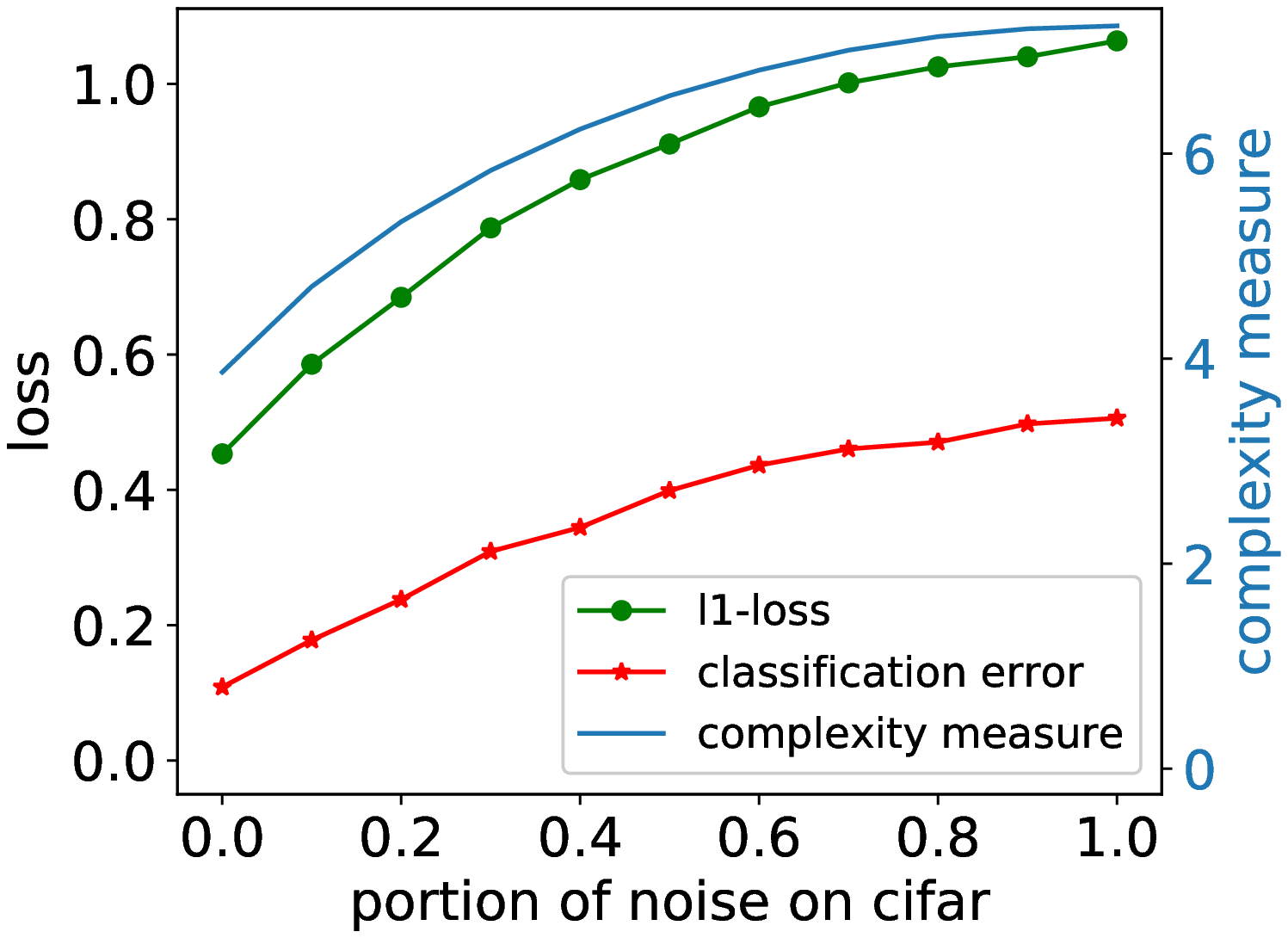}}   
	\caption{Generalization error ($\ell_1$ loss and classification error) v.s. our complexity measure when different portions of random labels are used.
		We apply GD on data from two classes of MNIST or CIFAR until convergence.
			Our complexity measure almost matches the trend of generalization error as the portion of random labels increases.
			Note that $\ell_1$ loss is always an upper bound on the classification error.
		}
	\label{fig:generalization}
	\vspace{-0.5cm}
\end{figure}

\subsection{Proof Sketch of Theorem~\ref{thm:main_generalization}} \label{sec:proof_sketch_generalization}

The main ingredients in the proof of Theorem~\ref{thm:main_generalization} are Lemmas~\ref{lem:distance_bounds} and \ref{lem:rad_dist_func_class}. We defer the proofs of these lemmas as well as the full proof of Theorem~\ref{thm:main_generalization} to Appendix~\ref{app:proof_generalization}.

Our proof is based on a careful characterization of the trajectory of $\left\{\mat W(k) \right\}_{k=0}^\infty$ during GD.
In particular, we bound its \emph{distance to initialization} as follows:
\begin{lem}\label{lem:distance_bounds}
	Suppose $m \ge \kappa^{-2} \poly\left(n, \lambda_0^{-1}, \delta^{-1} \right)  $ and $\eta = O\left( \frac{\lambda_0}{n^2} \right)$.
	Then with probability at least $1-\delta$ over the random initialization, we have for all $k\ge0$:
	\begin{itemize}
		\item $\norm{\vect w_r(k) - \vect w_r(0)}_2 =O\left( \frac{ n }{\sqrt m \lambda_0\sqrt{\delta}} \right)$ $(\forall r\in[m])$, and
		\item $\norm{\mat{W}(k)-\mat{W}(0)}_{F} \le \sqrt{\vect{y}^\top \left(\mat{H}^{\infty}\right)^{-1}\vect{y}} + O\left( \frac{  n \kappa}{\lambda_0 \delta} \right) + \frac{\poly\left(n, \lambda_0^{-1}, \delta^{-1} \right)}{m^{1/4} \kappa^{1/2}} $.
	\end{itemize}
\end{lem}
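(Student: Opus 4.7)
The two bounds have very different characters. The per-neuron bound is a standard consequence of the geometric decay of the training loss already established in Theorem~\ref{thm:ssdu-converge}. From the gradient formula \eqref{eqn:gradient-w_r} together with Cauchy--Schwarz and $\|\vx_i\|_2=1$, each single-step displacement satisfies $\|\vw_r(k+1)-\vw_r(k)\|_2 \le \tfrac{\eta}{\sqrt m}\sqrt n\,\|\vect u(k)-\vect y\|_2$. Plugging in $\|\vect u(k)-\vect y\|_2 \le (1-\eta\lambda_0/2)^{k/2}\sqrt{2\Phi(\mat W(0))}=O(\sqrt{n/\delta})\cdot(1-\eta\lambda_0/2)^{k/2}$ from Theorem~\ref{thm:ssdu-converge} and summing the resulting geometric series yields a telescoping bound of the form $O(\eta\sqrt n\cdot \sqrt{n/\delta}/\sqrt m)\cdot \tfrac{2}{\eta\lambda_0}= O(n/(\sqrt m\,\lambda_0\sqrt\delta))$, which is the first claim.

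The Frobenius bound requires a different, trajectory-level argument since naively summing per-neuron bounds gives an extra $\sqrt m$. The key idea is to show that when the activation patterns barely change, the GD update is essentially linear: using the vectorized form \eqref{eqn:gd}, I would write
\begin{align*}
\vectorize{\mat W(k)-\mat W(0)} &= -\eta\sum_{j=0}^{k-1}\mat Z(j)(\vect u(j)-\vect y) \\
&= -\eta\,\mat Z(0)\sum_{j=0}^{k-1}(\tilde{\vect u}(j)-\vect y) + \vect E(k),
\end{align*}
where $\vect E(k)$ collects two perturbation terms, one from $\mat Z(j)-\mat Z(0)$ (controlled by the per-neuron bound and the fact that only an $O(\mathrm{poly}(n,\lambda_0^{-1},\delta^{-1})/(m^{1/4}\kappa^{1/2}))$ fraction of activation patterns flip, a concentration estimate already used in the Du et al.\ framework) and one from $\vect u(j)-\tilde{\vect u}(j)$ (controlled by the perturbation analysis that underlies Theorem~\ref{thm:convergence_rate}). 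The linear dynamics \eqref{eqn:u_hat_dynamics} give $\tilde{\vect u}(j)-\vect y = -(\mat I-\eta\mat H^\infty)^j\vect y$, so $-\eta\sum_{j=0}^{\infty}(\tilde{\vect u}(j)-\vect y) = (\mat H^\infty)^{-1}\vect y$, and hence the leading term of $\vectorize{\mat W(k)-\mat W(0)}$ as $k$ grows is $\mat Z(0)(\mat H^\infty)^{-1}\vect y$ (plus a tail $(\mat I-\eta\mat H^\infty)^k(\mat H^\infty)^{-1}\vect y$ that shrinks geometrically and can be absorbed into the lower-order error).

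Taking Frobenius norms and squaring gives $\|\mat Z(0)(\mat H^\infty)^{-1}\vect y\|_2^2 = \vect y^\top(\mat H^\infty)^{-1}\mat H(0)(\mat H^\infty)^{-1}\vect y$. Since $\mat H_{ij}(0)$ is an empirical average over $m$ i.i.d.\ indicators and $\E[\mat H(0)] = \mat H^\infty$, a Hoeffding/matrix-concentration bound gives $\|\mat H(0)-\mat H^\infty\|_\mathrm{op} = O(n/\sqrt m)$ with high probability, so the squared main term equals $\vect y^\top(\mat H^\infty)^{-1}\vect y$ up to a correction that is absorbed in the $m$-dependent error. Finally, $\|\vect E(k)\|_2$ is bounded by pushing the per-neuron bound through $\|\mat Z(j)-\mat Z(0)\|_F$ and by using $\|\vect u(j)-\tilde{\vect u}(j)\|_2 \le O(\kappa\cdot\mathrm{poly}(n,\lambda_0^{-1},\delta^{-1}))$ with an extra contribution from the initial prediction $\vect u(0)$, which has scale $\kappa$ and contributes the $O(n\kappa/(\lambda_0\delta))$ term after integrating against $(\mat H^\infty)^{-1}$.

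The main obstacle will be step three: coupling $\vect u(k)$ to the linear sequence $\tilde{\vect u}(k)$ and $\mat Z(k)$ to $\mat Z(0)$ \emph{uniformly in $k$}, rather than for a single $k$. Since the two approximations are intertwined (the activation pattern is stable only if weights move little, which holds only if predictions track their linear surrogate), I would close the loop by an induction over $k$: assuming the per-neuron bound and the linearization error hold up to step $k$, the activation-pattern bound and concentration of $\mat H(0)$ propagate them to step $k+1$, with the geometric decay of $\|\vect u(k)-\vect y\|_2$ ensuring all error sums converge. This is essentially the same bootstrap used in the proof of Theorem~\ref{thm:convergence_rate}, so the technical groundwork is in place; the novelty here is extracting the \emph{Frobenius norm} of the accumulated displacement rather than its entrywise behavior.
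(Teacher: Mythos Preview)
Your proposal is correct and follows essentially the same route as the paper: both write $\vectorize{\mat W(k)-\mat W(0)}=-\eta\sum_{j<k}\mat Z(j)(\vect u(j)-\vect y)$, replace $\vect u(j)-\vect y$ by $-(\mat I-\eta\mat H^\infty)^j\vect y$ plus the error $\vect e(j)$ coming from the proof of Theorem~\ref{thm:convergence_rate}, replace $\mat Z(j)$ by $\mat Z(0)$ using the activation-stability bound $\|\mat Z(j)-\mat Z(0)\|_F=O(\mathrm{poly}/(m^{1/4}\kappa^{1/2}))$, and then evaluate $\|\mat Z(0)\,\eta\sum_j(\mat I-\eta\mat H^\infty)^j\vect y\|_2^2$ via $\mat Z(0)^\top\mat Z(0)=\mat H(0)\approx\mat H^\infty$ to extract $\vect y^\top(\mat H^\infty)^{-1}\vect y$. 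The only cosmetic difference is that the paper keeps the finite partial sum $\mat T=\eta\sum_{j<K}(\mat I-\eta\mat H^\infty)^j$ and shows $\mat T\mat H^\infty\mat T\preceq(\mat H^\infty)^{-1}$ directly via eigendecomposition (handling all $K$ at once), whereas you pass to the infinite sum and absorb the geometric tail separately; also, no fresh induction is needed since Theorem~\ref{thm:ssdu-converge} and its corollaries already supply the uniform-in-$k$ bounds on $\|\vect u(k)-\vect y\|_2$, $\|\vw_r(k)-\vw_r(0)\|_2$, and $\|\mat Z(k)-\mat Z(0)\|_F$.
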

The bound on the movement of each $\vect w_r$ was proved in \cite{du2018provably}. 
Our main contribution is the bound on $\norm{\mat{W}(k)-\mat{W}(0)}_F$ which corresponds to the \emph{total movement of all neurons}.
The main idea is to couple the trajectory of $\left\{\mat W(k) \right\}_{k=0}^\infty$ with another simpler trajectory $\left\{\widetilde{\mat{W}}(k)\right\}_{k=0}^\infty$ defined as:
\begin{align}
\widetilde{\mat{W}}(0) =\,& \mat{0}, \nonumber\\
\vectorize{\widetilde{\mat{W}}(k+1)} =\,& \vectorize{\widetilde{\mat{W}}(k)} \label{eqn:W_tilde_traj} \\&- \eta\mat{Z}(0)\left( \mat{Z}(0)^\top \vectorize{\widetilde{\mat{W}}(k)}-\vect{y}\right). \nonumber
\end{align}
We prove $\norm{\widetilde{\mat W}(\infty) - \widetilde{\mat W}(0)}_F = \sqrt{\vect y^\top \mat H(0)^{-1} \vect y}$ in Section~\ref{sec:proof_sketch_distance}.\footnote{Note that we have $\mat H(0) \approx \mat H^\infty$ from standard concentration. See Lemma~\ref{lem:H(0)-H^inf}.}
The actually proof of Lemma~\ref{lem:distance_bounds} is essentially a perturbed version of this.

Lemma~\ref{lem:distance_bounds} implies that the learned function $f_{\mat W(k), \vect a}$ from GD is in a \emph{restricted} class of neural nets whose weights are close to initialization $\mat W(0)$.
The following lemma bounds the Rademacher complexity of this function class:


\begin{lem}\label{lem:rad_dist_func_class}
Given $R>0$,
with probability at least $1-\delta$ over the random initialization ($\mat W(0),\vect a)$, simultaneously for every $B>0$, the following function class
\begin{align*}
\cF^{\mat{W}(0),\va}_{R,B}  = \{ f_{\mat W, \vect a} : \norm{\vect w_r - \vect w_r(0)}_2 \le R \, (\forall r\in[m]), \\ \norm{\mat W - \mat W(0)}_F \le B \}
\end{align*}
has empirical Rademacher complexity bounded as:
\begin{align*}
	&\calR_S\left( \cF^{\mat W(0),\va}_{R,B} \right)
   =  \frac{1}{n} \expect_{\bm \eps \in \{\pm1\}^n}\left[\sup_{f \in \cF^{\mat W(0),\va}_{R,B}}\sum_{i=1}^{n} \eps_i f(\vect{x}_i)\right]
	\\
	\le \,&	\frac{B}{\sqrt{2n}} \left(1+\left( \frac{2\log \frac2\delta}{m} \right)^{1/4} \right) + \frac{2 R^2 \sqrt m}{ \kappa} + R \sqrt{2\log \frac2\delta} .
\end{align*}
\end{lem}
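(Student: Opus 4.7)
The plan is to decompose each network $f = f_{\mat W,\vect a}\in\cF^{\mat W(0),\va}_{R,B}$ around initialization into a first-order linearization (with ReLU indicators frozen at their initial pattern) plus a residual capturing indicator flips, then use sup-subadditivity to bound the Rademacher complexity of each piece separately. Writing $f(\vect x)-f_{\mat W(0),\vect a}(\vect x)=g_1(\vect x;\mat W)+g_2(\vect x;\mat W)$ with
\[
g_1(\vect x;\mat W)=\frac{1}{\sqrt m}\sum_{r=1}^m a_r\indict\{\vect w_r(0)^\top \vect x\ge 0\}(\vect w_r-\vect w_r(0))^\top \vect x,
\]
the offset $f_{\mat W(0),\vect a}$ is fixed and contributes zero in expectation because $\expect_{\bm\eps}\sum_i\eps_i f_{\mat W(0),\vect a}(\vect x_i)=0$. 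Applying $\sup(a+b)\le\sup a+\sup b$ after pulling out this offset then gives $\calR_S(\cF^{\mat W(0),\va}_{R,B})\le\calR_S(\cG_1)+\calR_S(\cG_2)$ with the obvious classes $\cG_1,\cG_2$ obtained by ranging $\mat W$ over the constraint set.

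For $\calR_S(\cG_1)$ I would rewrite $g_1(\vect x;\mat W)=\inp{\Phi(\vect x)}{\vectorize{\mat W-\mat W(0)}}$ with feature blocks $\Phi_r(\vect x)=\frac{a_r}{\sqrt m}\indict\{\vect w_r(0)^\top \vect x\ge 0\}\vect x$, so that the constraint $\norm{\mat W-\mat W(0)}_F\le B$ places $\cG_1$ inside a linear class over a Euclidean ball of radius $B$. The standard linear Rademacher bound together with Jensen's inequality gives $\calR_S(\cG_1)\le\frac{B}{n}\sqrt{\sum_i\norm{\Phi(\vect x_i)}_2^2}$. A direct computation then yields $\sum_i\norm{\Phi(\vect x_i)}_2^2=\frac{1}{m}\sum_r Y_r$ where $Y_r=\sum_i\indict\{\vect w_r(0)^\top \vect x_i\ge 0\}$ are independent across $r$ with $\expect Y_r=n/2$ and $Y_r\in[0,n]$; Hoeffding applied to this sum of $m$ range-$n$ variables produces $\frac{1}{m}\sum_r Y_r\le \frac{n}{2}+n\sqrt{\log(2/\delta)/(2m)}$ with probability at least $1-\delta/2$, and the elementary inequality $\sqrt{a+b}\le\sqrt a+\sqrt b$ rearranges this into the desired form $\frac{B}{\sqrt{2n}}(1+(2\log(2/\delta)/m)^{1/4})$.

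For $\calR_S(\cG_2)$ I would observe that the per-neuron residual $\sigma(\vect w_r^\top \vect x)-\sigma(\vect w_r(0)^\top \vect x)-\indict\{\vect w_r(0)^\top \vect x\ge 0\}(\vect w_r-\vect w_r(0))^\top \vect x$ vanishes unless $\vect w_r^\top \vect x$ and $\vect w_r(0)^\top \vect x$ have opposite signs, which forces $|\vect w_r(0)^\top \vect x|\le\norm{\vect w_r-\vect w_r(0)}_2\le R$; a short case split shows that in the flipping case the residual is bounded in magnitude by $R$. Hence $|g_2(\vect x_i;\mat W)|\le\frac{R}{\sqrt m}N_i$ where $N_i=\#\{r:|\vect w_r(0)^\top \vect x_i|\le R\}$. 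Since $\vect w_r(0)^\top \vect x_i\sim\calN(0,\kappa^2)$ has density at most $1/(\kappa\sqrt{2\pi})$ at the origin, $\Pr[|\vect w_r(0)^\top \vect x_i|\le R]\le 2R/(\kappa\sqrt{2\pi})$; Hoeffding plus a union bound over $i\in[n]$ controls $\max_i N_i$, and the crude uniform bound $\calR_S(\cG_2)\le\sup_{g_2,i}|g_2(\vect x_i)|$ then produces the remaining $\frac{2R^2\sqrt m}{\kappa}+R\sqrt{2\log(2/\delta)}$ contribution.

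The hard part will be the $\cG_1$ bound: obtaining the leading constant $\frac{B}{\sqrt{2n}}$ (rather than the loose $\frac{B}{\sqrt n}$) crucially uses the Gaussian-initialization fact that each ReLU indicator at initialization fires with probability exactly $1/2$, and extracting the clean fourth-root correction $(2\log(2/\delta)/m)^{1/4}$ demands careful bookkeeping through Hoeffding on independent variables of range $n$ (not $1$) combined with the $\sqrt{a+b}\le\sqrt a+\sqrt b$ split. The $\cG_2$ bound is more routine, but the pointwise case analysis that caps the flipping-neuron contribution by $R$ rather than $2R$ needs to be carried out with care, and the leading $\sqrt m R^2/\kappa$ scaling depends on the sharp density-based estimate of the expected number of near-decision-boundary neurons.
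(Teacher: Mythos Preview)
Your decomposition and the $\cG_1$ argument match the paper's proof exactly (the paper writes the linear piece as $\vectorize{\mat W-\mat W(0)}^\top \mat Z(0)\bm\eps$ and bounds it by $\frac{B}{n}\norm{\mat Z(0)}_F$, then controls $\norm{\mat Z(0)}_F^2=\frac1m\sum_r Y_r$ by exactly the Hoeffding step you describe). Your pointwise residual estimate is in fact sharper than the paper's: the paper only uses the crude triangle inequality $\abs{\sigma(\vw_r^\top\vx_i)-\sigma(\vw_r(0)^\top\vx_i)}+\abs{\indict_{r,i}(0)(\vw_r-\vw_r(0))^\top\vx_i}\le 2R$ on the event $A_{r,i}$, whereas your flipping-case analysis correctly gives $R$.

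There is one small discrepancy in the $\cG_2$ step that prevents you from recovering the stated bound verbatim. Controlling $\max_i N_i$ via a union bound over $i\in[n]$ yields $R\sqrt{2\log(2n/\delta)}$ rather than $R\sqrt{2\log(2/\delta)}$. The paper avoids the union bound by using the averaged estimate
\[
\calR_S(\cG_2)\ \le\ \frac{1}{n}\sum_{i=1}^n \sup_{g_2}\abs{g_2(\vx_i)}\ \le\ \frac{R}{n\sqrt m}\sum_{r=1}^m\sum_{i=1}^n \bone{A_{r,i}},
\]
and then applies a single Hoeffding bound to $\sum_{r}\bigl(\sum_i\bone{A_{r,i}}\bigr)$, a sum of $m$ independent $[0,n]$-valued variables with mean at most $\frac{\sqrt 2 nR}{\sqrt\pi\,\kappa}$. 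Substituting this average in place of your $\sup_{g_2,i}\abs{g_2(\vx_i)}$ step (and otherwise keeping your argument intact) gives exactly the residual term $\frac{2R^2\sqrt m}{\kappa}+R\sqrt{2\log\frac{2}{\delta}}$ in the statement.
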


Finally, combining Lemmas~\ref{lem:distance_bounds} and \ref{lem:rad_dist_func_class}, we are able to conclude that the neural network found by GD belongs to a function class with Rademacher complexity at most $\sqrt{\vect y^\top (\mat H^\infty)^{-1} \vect y /(2n)}$ (plus negligible errors).
This gives us the generalization bound in Theorem~\ref{thm:main_generalization} using the theory of Rademacher complexity (Appendix~\ref{app:rademacher}).




\subsection{Analysis of the Auxiliary Sequence $\left\{\widetilde{\mat{W}}(k)\right\}_{k=0}^\infty$}
\label{sec:proof_sketch_distance}
Now we give a proof of  $\norm{\widetilde{\mat W}(\infty) - \widetilde{\mat W}(0)}_F = \sqrt{\vect y^\top \mat H(0)^{-1} \vect y}$ as an illustration for the proof of Lemma~\ref{lem:distance_bounds}.
Define $\vect v(k) = \mat Z(0)^\top \vectorize{\widetilde{\mat W}(k)} \in \R^n$.
Then from~\eqref{eqn:W_tilde_traj} we have $\vect v(0)=\vect0$ and $\vect v(k+1) = \vect v(k) - \eta \mat H(0) (\vect v(k) - \vect y)$, yielding
$\vect v(k) - \vect y = - (\mat I - \eta \mat H(0))^k \vect y$.
Plugging this back to~\eqref{eqn:W_tilde_traj} we get
$\vectorize{\widetilde{\mat{W}}(k+1)} - \vectorize{\widetilde{\mat{W}}(k)} = \eta \mat Z(0) (\mat I - \eta \mat H(0))^k \vect y$.
Then taking a sum over $k=0, 1, \ldots$ we have
\vspace{-0.2cm}
\begin{align*}
\vectorize{\widetilde{\mat{W}}(\infty)} - \vectorize{\widetilde{\mat{W}}(0)} &= \sum_{k=0}^\infty \eta \mat Z(0) (\mat I - \eta \mat H(0))^k \vect y \\
&= \mat Z(0) \mat H(0)^{-1} \vect y.
\end{align*}
\vspace{-0.2cm}
The desired result thus follows:
\begin{align*}
\norm{\widetilde{\mat W}(\infty) - \widetilde{\mat W}(0)}_F^2
&= \vect y^\top \mat H(0)^{-1} \mat Z(0)^\top \mat Z(0) \mat H(0)^{-1} \vect y \\
&= \vect y^\top \mat H(0)^{-1} \vect y.
\end{align*}

\section{Provable Learning using Two-Layer ReLU Neural Networks}
\label{sec:improper}

Theorem~\ref{thm:main_generalization} determines that $\sqrt{\frac{2 \vect y^\top (\mat H^\infty)^{-1}\vect y }{n}}$ controls the generalization error.
In this section, we study what functions can be provably learned in this setting.
We assume the data satisfy $y_i = g(\vect{x}_i)$ for some underlying function $g:\R^d\to\R$.
A simple observation is that if we can prove $$\vect y^\top (\mat H^\infty)^{-1}\vect y \le M_g$$ for some quantity $M_g$ that is \emph{independent} of the number of samples $n$, then Theorem~\ref{thm:main_generalization} implies we can provably learn the function $g$ on the underlying data distribution using $ O\left( \frac{M_g+\log(1/\delta)}{\epsilon^2} \right) $ samples.
The following theorem shows that this is indeed the case for a broad class of functions.

\begin{thm}\label{thm:improper_learning_monomial}
	Suppose we have
	\begin{align*}
	y_i = g(\vect{x}_i) = \alpha \left( \vbeta^\top \vx_i \right)^p, \quad \forall i\in[n],
	\end{align*}
	where $p=1$ or $p=2l$ $(l\in\mathbb N_+)$,	$\vbeta \in \R^d$ and $\alpha\in \R$.
	Then we have
\vspace{-0.3cm}
	\begin{align*}
	\sqrt{\vect y^\top (\mat H^\infty)^{-1}\vect y } \le  3 p \abs{\alpha} \cdot \norm{\vbeta}_2^{p}.
	\end{align*}
\vspace{-0.7cm}
\end{thm}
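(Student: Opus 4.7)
My plan is to reduce the bound to an RKHS-style norm inequality and then exhibit an explicit representer of $y_i$. View the feature map $\vect f_{\vect x}(\vect w) := \vect x\,\indict\{\vect w^\top \vect x\ge 0\}$ as an element of the Hilbert space $\cH := L^2(\R^d,\gamma;\R^d)$ with $\gamma = \calN(\vect 0,\mat I)$; then by definition $(\mat H^\infty)_{ij} = \langle \vect f_{\vect x_i},\vect f_{\vect x_j}\rangle_\cH$. The first (purely formal) step is the following representation lemma, proved by projecting onto $\mathrm{span}\{\vect f_{\vect x_i}\}$: if $\vect h\in\cH$ satisfies $y_i = \langle \vect h,\vect f_{\vect x_i}\rangle_\cH = \E_{\vect w}[\vect h(\vect w)^\top \vect x_i\indict\{\vect w^\top \vect x_i\ge 0\}]$ for every $i$, then
\[
\vect y^\top(\mat H^\infty)^{-1}\vect y \;\le\; \|\vect h\|_\cH^2 \;=\; \E_{\vect w\sim\gamma}[\|\vect h(\vect w)\|_2^2].
\]
(The minimum-$\cH$-norm element with prescribed inner products equals $\sum_i [(\mat H^\infty)^{-1}\vect y]_i\,\vect f_{\vect x_i}$, with squared norm exactly $\vect y^\top(\mat H^\infty)^{-1}\vect y$.) So the task reduces to exhibiting, for each target $g(\vect x)=\alpha(\vbeta^\top \vect x)^p$, a low-$\cH$-norm $\vect h$ realizing $g(\vect x_i)=\langle \vect h,\vect f_{\vect x_i}\rangle_\cH$.

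For $p=1$ the choice $\vect h(\vect w)\equiv 2\alpha\vbeta$ works: since $\|\vect x_i\|_2=1$ we have $\Pr_{\vect w}[\vect w^\top \vect x_i\ge 0]=\tfrac12$, so $\langle \vect h,\vect f_{\vect x_i}\rangle_\cH = \alpha\vbeta^\top \vect x_i = y_i$, and $\|\vect h\|_\cH^2 = 4\alpha^2\|\vbeta\|^2$, giving $\sqrt{\vect y^\top(\mat H^\infty)^{-1}\vect y}\le 2|\alpha|\|\vbeta\|\le 3|\alpha|\|\vbeta\|$.

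For $p=2l$ I will take $\vect h(\vect w) = A\cdot \mathrm{He}_{2l-1}(\widehat{\vbeta}^\top \vect w)\,\vbeta$, where $\widehat{\vbeta}:=\vbeta/\|\vbeta\|$ and $\mathrm{He}_k$ is the probabilist's Hermite polynomial. Setting $U:=\widehat{\vbeta}^\top \vect w$ and $Z:=\vect x_i^\top \vect w$, the pair $(U,Z)$ is jointly standard normal with correlation $s_i:=\widehat{\vbeta}^\top \vect x_i$. The conditional-expectation identity $\E[\mathrm{He}_{2l-1}(U)\mid Z] = s_i^{2l-1}\mathrm{He}_{2l-1}(Z)$, combined with $\int_0^\infty \mathrm{He}_{2l-1}(z)\phi(z)\,dz = \mathrm{He}_{2l-2}(0)/\sqrt{2\pi}$ and $\vbeta^\top \vect x_i = \|\vbeta\|s_i$, forces $\langle \vect h,\vect f_{\vect x_i}\rangle_\cH$ to be a scalar multiple of $s_i^{2l}=((\vbeta^\top \vect x_i)/\|\vbeta\|)^{2l}$; the scalar $A$ is then chosen so that this equals $\alpha(\vbeta^\top \vect x_i)^{2l}=y_i$. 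Using $\E[\mathrm{He}_{2l-1}(U)^2]=(2l-1)!$ and the closed form $\mathrm{He}_{2l-2}(0)=(-1)^{l-1}(2l-2)!/(2^{l-1}(l-1)!)$, a short calculation yields
\[
\|\vect h\|_\cH^2 \;=\; \alpha^2\|\vbeta\|^{4l}\cdot \frac{2\pi(2l-1)\cdot 4^{l-1}}{\binom{2l-2}{l-1}}.
\]

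It then remains to verify the numerical inequality $\frac{2\pi(2l-1)\cdot 4^{l-1}}{\binom{2l-2}{l-1}} \le (6l)^2$ for every $l\ge 1$; this is the only real technical obstacle and is the main reason the stated constant is $3p$ rather than something sharper. Using the classical bound $\binom{2m}{m}\ge 4^m/(2\sqrt{m+1})$, the LHS is dominated by $O(l^{3/2})$ while the RHS grows like $l^2$, so the bound holds with a wide margin; $l=1$ amounts to $\sqrt{2\pi}\le 6$, and a handful of small $l$ can be checked directly. I would extract this as a standalone inequality lemma so that the main theorem's proof decomposes cleanly into: (i) the representation lemma, (ii) the choice of $\vect h$ above, and (iii) the numerical bound.
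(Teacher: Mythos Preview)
Your proof is correct, and it takes a genuinely different route from the paper's.

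The paper expands the closed-form kernel $\mat H^\infty_{ij} = \tfrac{\vect x_i^\top\vect x_j}{4} + \tfrac{1}{2\pi}\vect x_i^\top\vect x_j\arcsin(\vect x_i^\top\vect x_j)$ via the Taylor series of $\arcsin$ to obtain $\mat H^\infty \succeq \tfrac14 \mat K$ and $\mat H^\infty \succeq \tfrac{1}{2\pi(2l-1)^2}\mat K^{\circ 2l}$, where $\mat K=\mat X^\top\mat X$. It then invokes a matrix-monotonicity lemma ($\mat B\succeq\mat A\succeq\mat 0 \Rightarrow \mat A^\dag\succeq \mat P_{\mat A}\mat B^{-1}\mat P_{\mat A}$) and notes that $\vect y$ lies in the column space of $\mat X^{\odot p}$, so that $\vect y^\top(\mat H^\infty)^{-1}\vect y \le 2\pi(2l-1)^2\alpha^2\|\vbeta\|_2^{4l}$ (and $4\alpha^2\|\vbeta\|_2^2$ for $p=1$); the bound $\sqrt{2\pi}<3$ finishes it.

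By contrast, you work directly in the feature Hilbert space $L^2(\gamma;\R^d)$ and exhibit an explicit representer $\vect h$ (a constant for $p=1$, a Hermite-modulated direction for $p=2l$) whose inner products with the features reproduce the labels. This is constructive and avoids both the $\arcsin$ expansion and the pseudo-inverse comparison lemma; as a bonus, your constant $2\pi(2l-1)4^{l-1}/\binom{2l-2}{l-1}\asymp l^{3/2}$ is asymptotically sharper than the paper's $2\pi(2l-1)^2\asymp l^2$, though both comfortably fit under $(3p)^2$. The paper's purely algebraic approach, on the other hand, generalizes more mechanically to any kernel admitting a power-series decomposition in $\vect x_i^\top\vect x_j$, without having to invent a representer case by case.
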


The proof of Theorem~\ref{thm:improper_learning_monomial} is given in Appendix~\ref{app:proof_improper}.

Notice that for two label vectors $\vect y^{(1)}$ and $\vect y^{(2)}$, we have
\begin{align*}
&\sqrt{(\vy^{(1)}+\vy^{(2)})^\top (\vect{H}^{\infty})^{-1}  \left(\vy^{(1)}+\vy^{(2)}\right)} \\
\le\,&
\sqrt{(\vy^{(1)})^\top (\vect{H}^{\infty})^{-1}  \vy^{(1)}} + \sqrt{(\vy^{(2)})^\top (\vect{H}^{\infty})^{-1}  \vy^{(2)}}.
\end{align*}
This implies that \emph{the sum of learnable functions is also learnable}. Therefore, the following is a direct corollary of Theorem~\ref{thm:improper_learning_monomial}:
\begin{cor}\label{cor:improper_learning_polynomial}
	Suppose we have 
	\begin{align}
	y_i = g(\vect x_i) = \sum_{j} \alpha_j \left( \vbeta_j^\top \vx_i \right)^{p_j} , \quad \forall i\in[n],
	\label{eqn:g_decomp}
	\end{align}
	where for each $j$, $p_j \in \{1, 2, 4, 6, 8, \ldots\}$, $\vbeta_j \in \R^d$ and $\alpha_j \in \R$.
	Then we have
	\begin{align} \label{eqn:g_complexity}
	\sqrt{\vect y^\top (\mat H^\infty)^{-1}\vect y } \le 3 \sum_{j} p_j \abs{\alpha_j} \cdot \norm{\vbeta_j}_2^{p_j}.
	\end{align}
\end{cor}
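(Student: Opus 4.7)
The plan is to reduce the claim to the single-monomial bound of Theorem~\ref{thm:improper_learning_monomial} by invoking the triangle-inequality-type estimate displayed immediately before the corollary statement. For each index $j$ appearing in the decomposition~\eqref{eqn:g_decomp}, define the label vector $\vect y^{(j)} \in \R^n$ coordinatewise by $y^{(j)}_i = \alpha_j (\vbeta_j^\top \vx_i)^{p_j}$, so that $\vect y = \sum_j \vect y^{(j)}$. Each $\vect y^{(j)}$ is of the form covered by Theorem~\ref{thm:improper_learning_monomial}, since by hypothesis $p_j$ is either $1$ or an even positive integer.

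Next I would observe that $\vect z \mapsto \sqrt{\vect z^\top (\mat H^\infty)^{-1} \vect z}$ is a (semi)norm on $\R^n$: under the non-degeneracy assumption $\lambda_0 = \lambda_{\min}(\mat H^\infty) > 0$ used throughout Section~\ref{sec:generalization}, the matrix $(\mat H^\infty)^{-1}$ is positive definite, so this is a genuine Euclidean norm. In particular it satisfies the triangle inequality, and the two-term inequality displayed just before the corollary is precisely its instance for two summands; a one-line induction on the number of summands yields
\begin{equation*}
\sqrt{\vect y^\top (\mat H^\infty)^{-1} \vect y} \;\le\; \sum_j \sqrt{(\vect y^{(j)})^\top (\mat H^\infty)^{-1} \vect y^{(j)}}.
\end{equation*}

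Finally I would apply Theorem~\ref{thm:improper_learning_monomial} to each individual monomial label vector $\vect y^{(j)}$, which gives $\sqrt{(\vect y^{(j)})^\top (\mat H^\infty)^{-1} \vect y^{(j)}} \le 3 p_j |\alpha_j|\, \|\vbeta_j\|_2^{p_j}$. Summing these bounds over $j$ produces exactly~\eqref{eqn:g_complexity}, completing the proof.

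There is no substantive obstacle here: the content of the corollary is just subadditivity of the $(\mat H^\infty)^{-1}$-norm combined with the already-established monomial bound. The only point that merits any comment is the extension of the two-term triangle inequality to an arbitrary finite sum, which follows by a trivial induction and uses only that $(\mat H^\infty)^{-1}$ is positive semidefinite.
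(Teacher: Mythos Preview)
Your proof is correct and follows exactly the approach the paper intends: the paper states the two-term triangle inequality for the $(\mat H^\infty)^{-1}$-norm just before the corollary and then calls the result a ``direct corollary'' of Theorem~\ref{thm:improper_learning_monomial}, which is precisely your decomposition $\vect y=\sum_j \vect y^{(j)}$, induction to a finite sum, and termwise application of the monomial bound.
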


\vspace{-0.3cm}
Corollary~\ref{cor:improper_learning_polynomial} shows that  overparameterized two-layer ReLU network can learn any function of the form~\eqref{eqn:g_decomp} for which \eqref{eqn:g_complexity} is bounded.
One can view \eqref{eqn:g_decomp} as two-layer neural networks with polynomial activation $\phi(z)=z^p$, where $\{\vbeta_j\}$ are weights in the first layer and $\{\alpha_j\}$ are the second layer.
Below we give some specific examples.

\begin{example}[Linear functions] \label{example:linear}
	For $g(\vx) = \vbeta^\top \vx$, we have $M_g = O(\norm{\vbeta}_2^2)$.
\end{example}

\begin{example}[Quadratic functions] \label{example:quadratic}
	For $g(\vx) = \vx^\top \mat A \vx$ where $\mat A \in \R^{d\times d}$ is symmetric,
	we can write down the eigen-decomposition $\mat A = \sum_{j=1}^d \alpha_j \vbeta_j \vbeta_j^\top$.
	Then we have $g(\vx) = \sum_{j=1}^d \alpha_j (\vbeta_j^\top \vx)^2$, so $M_g = O\left( \sum_{i=1}^d \abs{\alpha_j} \right) = O(\norm{\mat{A}}_*)$.\footnote{$\norm{\mat{A}}_*$ is the trace-norm of $\mat A$.}
	This is also the class of two-layer neural networks with quadratic activation.
\end{example}


\begin{example}[Cosine activation]\label{example:cos}
Suppose $g(\vect{x}) = \cos(\vbeta^\top \vect{x})-1$ for some $\vbeta \in \mathbb{R}^d$.
Using Taylor series we know $g(\vect{x}) = \sum_{j=1}^{\infty}\frac{(-1)^j(\vbeta^\top \vect{x})^{2j}}{(2j)!}$.
Thus we have
$M_g = O\left( \sum_{j=1}^\infty \frac{j}{(2j)!} \norm{\vbeta}_2^{2j} \right) = O\left( \norm{\vbeta}_2 \cdot \sinh(\norm{\vbeta}_2) \right) $.
\end{example}

Finally, we note that our ``smoothness'' requirement~\eqref{eqn:g_complexity} is weaker than that in \citep{allen2018learning}, as illustrated in the following example.
\begin{example}[A not-so-smooth function]
	Suppose $g(\vect{x}) = \phi(\vbeta^\top \vx)$, where $\phi(z) = z \cdot \arctan(\frac z2)$ and $\norm{\vbeta}_2 \le 1$.
	We have $g(\vx) =   \sum_{j=1}^\infty \frac{(-1)^{j-1} 2^{1 - 2 j} }{2 j - 1} \left( \vbeta^\top \vx \right)^{2j}$ since $\abs{\vbeta^\top \vx} \le 1$.
	Thus $M_g = O \left( \sum_{j=1}^\infty \frac{j \cdot 2^{1 - 2 j} }{2 j - 1}  \norm{\vbeta}_2^{2j} \right) \le O \left( \sum_{j=1}^\infty 2^{1 - 2 j}  \norm{\vbeta}^{2j} \right) = O\left( \norm{\vbeta}_2^2 \right)$, so our result implies that this function is learnable by $2$-layer ReLU nets.

\end{example}
However, \citet{allen2018learning}'s generalization theorem would require $\sum_{j=1}^\infty \frac{\left(  C\sqrt{\log(1/\epsilon)} \right)^{2j} 2^{1 - 2 j} }{2 j - 1}  $ to be bounded, where $C$ is a large constant and $\epsilon$ is the target generalization error. This is clearly not satisfied.

\section{Conclusion}
\label{sec:con}

This paper shows how to give a fine-grained analysis of the optimization trajectory and the generalization ability of overparameterized two-layer neural networks trained by gradient descent. 
We believe that our approach can also be useful in analyzing overparameterized deep neural networks and other machine learning models.

\section*{Acknowledgements}
 SA, WH and ZL acknowledge support from NSF, ONR, Simons Foundation, Schmidt Foundation, Mozilla Research, Amazon Research, DARPA and SRC.
 SSD acknowledges support from AFRL grant FA8750-17-2-0212 and DARPA D17AP00001.
 RW acknowledges support from ONR grant N00014-18-1-2562.
 Part of the work was done while SSD and RW were visiting the Simons Institute.

\bibliography{simonduref}
\bibliographystyle{icml2019}

\onecolumn
\newpage
\appendix

\section*{\Large Appendix}
\section{Experiment Setup}\label{sec:expdetails}
The architecture of our neural networks is as described in Section \ref{sec:setup}.
During the training process,  we fix the second layer and only optimize the first layer, following the setting in Section \ref{sec:setup}.
We fix the number of neurons to be $m = 10,000$ in all experiments. 
We train the neural network using (full-batch) gradient descent (GD), with a fixed learning rate $\eta = 10^{-3}$.
Our theory requires a small scaling factor $\kappa$ during the initialization (cf.~\eqref{eqn:random-init}).
We fix $\kappa = 10^{-2}$ in all experiments. 
We train the neural networks until the training $\ell_2$ loss converges.

We use two image datasets, the CIFAR dataset \cite{krizhevsky2009learning} and the MNIST dataset \cite{lecun1998gradient}, in our experiments. 
We only use the first two classes of images in the the CIFAR dataset and the MNIST dataset,
with $10,000$ training images and $2,000$ validation images in total for each dataset.
In both datasets, for each image $\vect x_i$, we set the corresponding label $y_i$ to be $+1$ if the image belongs to the first class, and $-1$ otherwise.
For each image $\vect x_i$ in the dataset, we normalize the image so that $\|\vect x_i\|_2 = 1$, following the setup in Section \ref{sec:setup}.

In the experiments reported in Figure~\ref{fig:generalization}, we choose a specific portion of (both training and test) data uniformly at random, and change their labels $y_i$ to $\unif\left(\left\{-1,1\right\}\right)$.

Our neural networks are trained using the PyTorch package \cite{paszke2017automatic}, using (possibly multiple) NVIDIA Tesla V100 GPUs.
\section{Background on Generalization and Rademacher Complexity} \label{app:rademacher}


Consider a loss function $\ell: \R \times \R \to \R$. For a function $f: \R^d \to \R$, the \emph{population loss} over data distribution $\calD$ as well as the \emph{empirical loss} over $n$ samples $S = \{(\vect{x}_i, y_i)\}_{i=1}^n$ from $\calD$ are defined as:
\begin{equation*}
\begin{aligned}
L_\calD(f) &= \E_{(\vect x, y)\sim\calD}\left[ \ell(f(\vect x), y) \right],\\
L_S(f) &= \frac{1}{n}\sum_{i=1}^{n}\ell(f(\vect{x}_i),y_i).
\end{aligned}
\end{equation*}
Generalization error refers to the gap $L_\calD(f) - L_S(f)$ for the learned function $f$ given sample $S$.

Recall the standard definition of Rademacher complexity:
\begin{defn}\label{defn:rademacher}
Given $n$ samples $S$, the empirical Rademacher complexity of a function class $\cF$ (mapping from $\R^d$ to $\R$) is defined as:
\[
\calR_S(\cF) = \frac{1}{n} \expect_{\bm \eps}\left[\sup_{f \in \cF}\sum_{i=1}^{n} \eps_i f(\vect{x}_i)\right],
\]
where $\bm\eps = \left(\eps_1,\ldots,\eps_n\right)^\top $ contains i.i.d. random variables drawn from the Rademacher distribution $\unif(\{1, -1\})$.
\end{defn}

Rademacher complexity directly gives an upper bound on generalization error (see e.g.~\citep{mohri2012foundations}):
\begin{thm}\label{thm:rad_generalization}
Suppose the loss function $\ell(\cdot,\cdot)$ is bounded in $[0, c]$ and is $\rho$-Lipschitz in the first argument. Then with probability at least $1-\delta$ over sample $S$ of size $n$:
\[
\sup_{f \in \cF} \left\{ L_{\calD}(f)-L_{S}(f) \right\} \le 2\rho \calR_S(\cF) + 3c \sqrt{\frac{\log(2/\delta)}{2n}}.
\]
\end{thm}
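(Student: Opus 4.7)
The target statement is the classical Rademacher-complexity generalization theorem (Theorem~\ref{thm:rad_generalization}). My plan is to follow the standard three-step template: (i) concentrate the generalization gap around its expectation via a bounded-differences argument, (ii) bound the expectation by the Rademacher complexity of the loss class through symmetrization, and (iii) reduce the loss-class Rademacher complexity to $\rho\,\calR_S(\cF)$ by a Lipschitz contraction inequality. The second $c\sqrt{\log(2/\delta)/(2n)}$ term that makes the constant $3c$ rather than $c$ will come from a second McDiarmid application used to replace the expected Rademacher complexity by the empirical one.

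More concretely, I would define $\Phi(S)=\sup_{f\in\cF}\{L_\cD(f)-L_S(f)\}$ and observe that if one training example in $S$ is replaced by another, then $L_S(f)$ changes by at most $c/n$ for every $f\in\cF$, since $\ell\in[0,c]$. Consequently $\Phi$ has bounded differences $c/n$, and McDiarmid's inequality yields
\begin{equation*}
\Phi(S)\ \le\ \E_{S'}[\Phi(S')]\ +\ c\sqrt{\tfrac{\log(2/\delta)}{2n}}
\end{equation*}
with probability at least $1-\delta/2$. Next, by the standard ghost-sample symmetrization trick (introducing $S'$ i.i.d.\ from $\cD$ and independent Rademacher variables $\eps_i$), one obtains
\begin{equation*}
\E_{S}[\Phi(S)]\ \le\ 2\,\E_{S,\bm\eps}\!\left[\sup_{f\in\cF}\tfrac{1}{n}\sum_{i=1}^n \eps_i\,\ell(f(\vx_i),y_i)\right] =: 2\,\E_S[\calR_S(\ell\circ\cF)].
\end{equation*}
Then I apply Talagrand's contraction lemma: since $\ell(\cdot,y)$ is $\rho$-Lipschitz in its first argument for every fixed $y$, we get $\calR_S(\ell\circ\cF)\le \rho\,\calR_S(\cF)$.

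Finally, to pass from the expected Rademacher complexity back to the empirical one appearing in the statement, I apply McDiarmid once more to the function $S\mapsto\calR_S(\cF)$; it also has bounded differences of order $c/n$ (using boundedness of $\ell$, or of $f$ when $\cF$ is absorbed into $\ell\circ\cF$), giving $\E_S[\calR_S(\cF)]\le \calR_S(\cF)+c\sqrt{\log(2/\delta)/(2n)}$ with probability at least $1-\delta/2$. A union bound over the two $\delta/2$ failure events and collecting the three error terms $c\sqrt{\log(2/\delta)/(2n)}+2\rho\cdot c\sqrt{\log(2/\delta)/(2n)}$ (after appropriate rescaling) yields the claimed $2\rho\,\calR_S(\cF)+3c\sqrt{\log(2/\delta)/(2n)}$.

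The only delicate point is the contraction step: Talagrand's lemma applies cleanly when the Lipschitz map is the same for every coordinate, whereas here the map $z\mapsto\ell(z,y_i)$ depends on $i$ through $y_i$. I would invoke the version of the contraction inequality that allows per-coordinate $\rho_i$-Lipschitz functions (with $\rho_i\le\rho$), which is standard (e.g.\ Ledoux--Talagrand). Everything else is bookkeeping of constants; the bounded-differences constants and the factor of $2$ from symmetrization reproduce the stated bound exactly.
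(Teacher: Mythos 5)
Your proposal is correct and is exactly the standard argument: the paper does not prove this theorem at all, but cites it as a known result (Mohri et al., \emph{Foundations of Machine Learning}), whose proof is precisely your McDiarmid + symmetrization + contraction template. One small bookkeeping point: as written, your final accounting $c\sqrt{\log(2/\delta)/(2n)} + 2\rho\cdot c\sqrt{\log(2/\delta)/(2n)}$ gives the constant $(1+2\rho)c$, which matches $3c$ only when $\rho\le 1$; the clean way to get $3c$ for general $\rho$ is to apply the second McDiarmid step to $S\mapsto\calR_S(\ell\circ\cF)$ (bounded differences $c/n$ since $\ell\in[0,c]$, with no boundedness assumption on $f\in\cF$ needed), yielding $2\calR_S(\ell\circ\cF)+3c\sqrt{\log(2/\delta)/(2n)}$, and only then apply the per-coordinate contraction inequality, which holds pointwise for the empirical Rademacher complexity, to conclude $\calR_S(\ell\circ\cF)\le\rho\,\calR_S(\cF)$.
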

Therefore, as long as we can bound the Rademacher complexity of a certain function class that contains our learned predictor, we can obtain a generalization bound.

\section{Proofs for Section~\ref{sec:rate}}
\label{app:proof_rate}

In this section we prove Theorem~\ref{thm:convergence_rate}.

We first show some technical lemmas. Most of them are already proved in~\citep{du2018provably} and we give proofs for them for completeness.

First, we have the following lemma which gives an upper bound on how much each weight vector can move during optimization.

\begin{lem} \label{lem:weight-vector-movement}
	Under the same setting as Theorem~\ref{thm:ssdu-converge}, i.e., $\lambda_0 = \lambda_{\min}(\mat H^\infty) >0$, $m = \Omega\left( \frac{n^6}{\lambda_0^4 \kappa^2 \delta^3 } \right)$ and $\eta = O\left( \frac{\lambda_0}{n^2} \right)$,
	with probability at least $1-\delta$ over the random initialization we have
	\begin{equation*}
	\norm{\vect w_r(k) - \vect w_r(0)}_2 \le \frac{4\sqrt n \norm{\vect y- \vect u(0)}_2}{\sqrt m \lambda_0}, \quad \forall r\in[m], \forall k \ge 0.
	\end{equation*}
\end{lem}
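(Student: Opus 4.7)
The plan is to combine Theorem~\ref{thm:ssdu-converge} with a per-step gradient bound and sum a geometric series. The idea is standard: since Theorem~\ref{thm:ssdu-converge} already certifies linear decay of the residual $\|\vect y - \vect u(k)\|_2$, and each GD step moves $\vect w_r$ by at most a quantity proportional to this residual, the total displacement telescopes into a convergent geometric sum. Notably, there is no chicken-and-egg induction here, since Theorem~\ref{thm:ssdu-converge} is invoked as a black box to give the loss-contraction rate directly.

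\textbf{Step 1 (per-step gradient bound).} From the gradient formula (\ref{eqn:gradient-w_r}), using $|a_r|=1$, $\|\vect x_i\|_2 = 1$, and $\indict_{r,i}(k)\in\{0,1\}$, I would apply the triangle inequality followed by Cauchy--Schwarz to obtain
\[
\bigl\|\vect w_r(k+1) - \vect w_r(k)\bigr\|_2
\;=\; \eta\,\Bigl\|\tfrac{\partial \Phi(\mat W(k))}{\partial \vect w_r}\Bigr\|_2
\;\le\; \frac{\eta}{\sqrt m}\sum_{i=1}^n\bigl|u_i(k) - y_i\bigr|
\;\le\; \frac{\eta\sqrt n}{\sqrt m}\,\bigl\|\vect u(k) - \vect y\bigr\|_2.
\]

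\textbf{Step 2 (linear convergence of the residual).} By Theorem~\ref{thm:ssdu-converge}, with probability at least $1-\delta$ the loss satisfies $\Phi(\mat W(k)) \le (1-\eta\lambda_0/2)^k\,\Phi(\mat W(0))$, so taking square roots gives
\[
\bigl\|\vect y - \vect u(k)\bigr\|_2 \;\le\; \bigl(1 - \tfrac{\eta\lambda_0}{2}\bigr)^{k/2}\bigl\|\vect y - \vect u(0)\bigr\|_2.
\]

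\textbf{Step 3 (telescope and sum).} Combining Steps~1 and 2 and summing over $t=0,1,\ldots,k-1$ via the triangle inequality yields
\[
\bigl\|\vect w_r(k) - \vect w_r(0)\bigr\|_2
\;\le\; \frac{\eta\sqrt n}{\sqrt m}\,\bigl\|\vect y - \vect u(0)\bigr\|_2\,\sum_{t=0}^{\infty}\bigl(1-\tfrac{\eta\lambda_0}{2}\bigr)^{t/2}.
\]
The elementary inequality $1-\sqrt{1-x}\ge x/2$ for $x\in[0,1]$ bounds the geometric sum by $2/(\eta\lambda_0/2) = 4/(\eta\lambda_0)$, which collapses the estimate to exactly $\frac{4\sqrt n\,\|\vect y-\vect u(0)\|_2}{\sqrt m\,\lambda_0}$, as desired.

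There is essentially no real obstacle: the heavy lifting is absorbed into Theorem~\ref{thm:ssdu-converge}, and the only non-trivial algebraic ingredient is the Cauchy--Schwarz step bounding $\sum_i |u_i(k)-y_i|$ by $\sqrt n\,\|\vect u(k)-\vect y\|_2$. The hypotheses $\eta = O(\lambda_0/n^2)$ and the overparameterization $m=\Omega(n^6/(\lambda_0^4\kappa^2\delta^3))$ enter only through their role in guaranteeing the prerequisite Theorem~\ref{thm:ssdu-converge}; no further parameter juggling is required here.
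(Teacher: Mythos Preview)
Your proposal is correct and essentially identical to the paper's proof: both combine the per-step bound $\|\vect w_r(k+1)-\vect w_r(k)\|_2 \le \frac{\eta\sqrt n}{\sqrt m}\|\vect y-\vect u(k)\|_2$ with the linear contraction of $\|\vect y-\vect u(k)\|_2$ from Theorem~\ref{thm:ssdu-converge} and sum the resulting geometric series. The only cosmetic difference is that the paper first simplifies $\sqrt{1-\eta\lambda_0/2}\le 1-\eta\lambda_0/4$ and then sums $(1-\eta\lambda_0/4)^t$, whereas you sum $(1-\eta\lambda_0/2)^{t/2}$ and invoke the same inequality afterward; both yield the identical constant $4/(\eta\lambda_0)$.
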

\begin{proof}
	From Theorem~\ref{thm:ssdu-converge} we know $ \norm{\vect y- \vect u(k+1)}_2 \le \sqrt{1 - \frac{\eta\lambda_0}{2}} \norm{\vect y- \vect u(k)}_2 \le \left( 1 - \frac{\eta\lambda_0}{4} \right) \norm{\vect y- \vect u(k)}_2$ for all $k\ge0$, which implies
	\begin{equation} \label{eqn:y-u(k)_bound}
	\norm{\vect y- \vect u(k)}_2 \le \left( 1 - \frac{\eta\lambda_0}{4} \right)^k \norm{\vect y- \vect u(0)}_2, \quad \forall k\ge 0.
	\end{equation}
	Recall the GD update rule:
	\begin{equation*}
	\vect w_r(k+1) - \vect w_r(k) = - \eta \frac{\partial \Phi(\mat W)}{\partial \vect{w}_r} \bigg|_{\mat W = \mat W(k)} = -\frac{\eta}{\sqrt m} a_r \sum_{i=1}^n (u_i(k) - y_i) \indict_{r, i}(k) \vect{x}_i,
	\end{equation*}
	which implies
	\begin{equation*}
	\norm{\vect w_r(k+1) - \vect w_r(k)}_2 \le \frac{\eta}{\sqrt m} \sum_{i=1}^n |u_i(k) - y_i| \le \frac{\eta \sqrt n}{\sqrt m} \norm{\vect y - \vect u(k)}_2.
	\end{equation*}
	Therefore, we have
	\begin{align*}
	\norm{ \vect w_r(k) - \vect w_r(0) }_2
	& \le \sum_{t=0}^{k-1} \norm{\vect w_r(t+1) - \vect w_r(t)}_2 \\
	& \le \sum_{t=0}^{k-1} \frac{\eta \sqrt n}{\sqrt m} \norm{\vect y - \vect u(k)}_2 \\
	& \le \frac{\eta \sqrt n}{\sqrt m}  \sum_{t=0}^{\infty} \left( 1 - \frac{\eta\lambda_0}{4} \right)^k \norm{\vect y- \vect u(0)}_2 \\
	&= \frac{4\sqrt n \norm{\vect y- \vect u(0)}_2}{\sqrt m \lambda_0},
	\end{align*}
	completing the proof.
\end{proof}

As a consequence of Lemma~\ref{lem:weight-vector-movement}, we can upper bound the norms of $\mat H(k) - \mat H(0)$ and $\mat Z(k) - \mat Z(0)$ for all $k$.
These bounds show that the matrices $\mat H$ and $\mat Z$ do not change much during optimization if $m$ is sufficiently large.

\begin{lem} \label{lem:H-and-Z-perturbation}
	Under the same setting as Theorem~\ref{thm:ssdu-converge}, with probability at least $1-4\delta$ over the random initialization, for all $k\ge 0$ we have:
	\begin{equation*}
	\begin{aligned}
	\norm{\mat H(k) - \mat H(0)}_F = O\left( \frac{n^3}{\sqrt m \lambda_0 \kappa \delta^{3/2}} \right), \\
	 \norm{\mat Z(k) - \mat Z(0)}_F = O\left( \frac{n}{\sqrt{m^{1/2} \lambda_0 \kappa \delta^{3/2}}} \right).
	\end{aligned}
	\end{equation*}
\end{lem}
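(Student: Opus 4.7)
The plan is to show that during optimization very few of the ReLU indicators $\indict_{r,i}$ ever flip, and then bound $\|\mat H(k)-\mat H(0)\|_F$ and $\|\mat Z(k)-\mat Z(0)\|_F$ by counting these flips. First I would invoke Lemma~\ref{lem:weight-vector-movement} to conclude that each weight vector satisfies $\|\vw_r(k)-\vw_r(0)\|_2\le R$, where $R=O(\sqrt n\,\|\vy-\vu(0)\|_2/(\sqrt m\,\lambda_0))$. Combining with the bound $\Phi(\mat W(0))=O(n/\delta)$ from Theorem~\ref{thm:ssdu-converge} (which, since $\Phi=\tfrac12\|\vy-\vu\|_2^2$, gives $\|\vy-\vu(0)\|_2=O(\sqrt{n/\delta})$), yields $R=O(n/(\sqrt{m\delta}\,\lambda_0))$ with probability at least $1-\delta$.

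Next, for fixed $i$ and $r$ the indicator $\indict_{r,i}$ can flip at some iterate only if the initial inner product satisfies $|\vw_r(0)^\top\vx_i|\le R$, because otherwise a perturbation of $\vw_r$ by at most $R$ cannot change the sign of $\vw_r^\top\vx_i$ (using $\|\vx_i\|_2=1$). Since $\vw_r(0)\sim\cN(\vect 0,\kappa^2\mat I)$, the scalar $\vw_r(0)^\top\vx_i$ is $\cN(0,\kappa^2)$, so Gaussian anticoncentration bounds the flip probability by $O(R/\kappa)$. Defining
\[
S_i=\bigl\{r\in[m]:\indict_{r,i}(k)\neq\indict_{r,i}(0)\text{ for some }k\ge0\bigr\},
\]
we therefore obtain $\E[|S_i|]\le m\cdot O(R/\kappa)$. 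Applying Markov's inequality with parameter $\delta/n$ and a union bound over $i\in[n]$ then ensures that, simultaneously for all $i$,
\[
|S_i|=O\!\left(\frac{nmR}{\kappa\delta}\right)=O\!\left(\frac{n^{2}\sqrt m}{\lambda_0\kappa\delta^{3/2}}\right).
\]

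Finally, I would convert this into Frobenius bounds by counting contributions entry by entry. For $\mat H$, only indices $r\in S_i\cup S_j$ can contribute a nonzero term to $\mat H_{ij}(k)-\mat H_{ij}(0)$, so
\[
|\mat H_{ij}(k)-\mat H_{ij}(0)|\le\frac{|S_i|+|S_j|}{m},
\]
and squaring and summing over $i,j\in[n]$ gives the stated bound. For $\mat Z$, the block structure yields directly
\[
\|\mat Z(k)-\mat Z(0)\|_F^{2}=\frac{1}{m}\sum_{r=1}^{m}\sum_{i=1}^{n}\bigl(\indict_{r,i}(k)-\indict_{r,i}(0)\bigr)^{2}\|\vx_i\|_2^{2}\le\frac{1}{m}\sum_{i=1}^{n}|S_i|,
\]
from which the claimed bound follows after substituting the bound on $|S_i|$.

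The main obstacle is that the bound must hold \emph{for all $k$ simultaneously} rather than at a single iterate; this is why $S_i$ is defined cumulatively as the set of neurons that flip at \emph{some} time on the trajectory, so that the anticoncentration argument applied once suffices for every $k$. The remaining work is essentially bookkeeping: carefully combining the three failure events (the $\Phi(\mat W(0))=O(n/\delta)$ guarantee and the convergence bound from Theorem~\ref{thm:ssdu-converge}, the weight-movement guarantee of Lemma~\ref{lem:weight-vector-movement}, and the Markov/union-bound step for the $|S_i|$), and absorbing their probabilities into the claimed $1-4\delta$.
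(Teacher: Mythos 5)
Your overall strategy is the same as the paper's: fix $R=O\bigl(n/(\sqrt{m\delta}\,\lambda_0)\bigr)$ from the weight-movement lemma, observe that $\indict_{r,i}$ can flip only on the event $A_{r,i}=\{|\vw_r(0)^\top\vx_i|\le R\}$, bound $\Pr[A_{r,i}]=O(R/\kappa)$ by Gaussian anticoncentration, and then count flips to control the two Frobenius norms, with Markov's inequality turning expectations into high-probability statements. Your deterministic inequalities $|\mat H_{ij}(k)-\mat H_{ij}(0)|\le(|S_i|+|S_j|)/m$ and $\|\mat Z(k)-\mat Z(0)\|_F^2\le\frac1m\sum_i|S_i|$ are correct, and for $\mat H$ your route does reach the stated rate: the factor $n$ you pay by applying Markov at level $\delta/n$ to each $|S_i|$ is exactly recovered because you sum squares of entries rather than (as the paper does) bounding $\|\cdot\|_F$ by the $\ell_1$ sum of entries.

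The gap is in the $\mat Z$ bound. With your uniform bound $|S_i|=O\bigl(n^2\sqrt m/(\lambda_0\kappa\delta^{3/2})\bigr)$ for all $i$, substituting into $\|\mat Z(k)-\mat Z(0)\|_F^2\le\frac1m\sum_{i=1}^n|S_i|$ gives only $O\bigl(n^3/(\sqrt m\,\lambda_0\kappa\delta^{3/2})\bigr)$, i.e.
\[
\norm{\mat Z(k)-\mat Z(0)}_F = O\left( \frac{n^{3/2}}{\sqrt{m^{1/2}\lambda_0\kappa\delta^{3/2}}} \right),
\]
which is a factor $\sqrt n$ worse than the claimed $O\bigl(n/\sqrt{m^{1/2}\lambda_0\kappa\delta^{3/2}}\bigr)$. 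The per-$i$ Markov at level $\delta/n$ plus union bound is wasteful here: since $\|\mat Z(k)-\mat Z(0)\|_F^2$ depends only on the aggregate $\sum_i|S_i|$, you should apply Markov once, at level $\delta$, to $\E\bigl[\sum_i|S_i|\bigr]\le nm\cdot O(R/\kappa)$ (equivalently to $\E\|\mat Z(k)-\mat Z(0)\|_F^2$), which yields $\sum_i|S_i|=O\bigl(n^2\sqrt m/(\lambda_0\kappa\delta^{3/2})\bigr)$ and hence the claimed bound; this is precisely how the paper argues. (Keep the uniform per-$i$ bound only if you need $\max_i|S_i|$, as in your $\mat H$ computation.) One further nuance, which you flag but should make explicit: $S_i$ should be bounded via the deterministic inclusion $S_i\subseteq\{r:A_{r,i}\}\cup\{r:\exists k,\ \norm{\vw_r(k)-\vw_r(0)}_2>R\}$, taking expectations of the first set unconditionally and absorbing the second into the failure probability, rather than taking $\E[|S_i|]$ after conditioning on the good event.
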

\begin{proof}
	Let $R = \frac{C n }{\sqrt m \lambda_0 \sqrt{\delta}}$ for some universal constant $C>0$.
	From Lemma~\ref{lem:weight-vector-movement} we know that with probability at least $1-\delta$ we have $\norm{\vect w_r(k) - \vect w_r(0)} \le R$ for all $r\in[m]$ and all $k \ge 0$.
	
	We define events
	\begin{equation} \label{eqn:A_{ri}-defn}
	A_{r, i} = \left\{ \left| \vect w_r(0)^\top \vect x_i \right| \le R \right\}, \quad i\in[n], r\in [m].
	\end{equation}
	Then it is easy to see that
	\begin{equation*}
	\indict\left\{ \indict_{r, i}(k) \not= \indict_{r, i}(0) \right\} \le  \indict\{A_{r, i}\} + \indict\{ \norm{\vect w_r(k) - \vect w_r(0)} > R \}  , \quad \forall i\in[n], \forall r\in[m], \forall k\ge0.
	\end{equation*}
	
	Then for any $i, j\in[n]$ we have
	\begin{equation} \label{eqn:H-pertubation-proof}
	\begin{aligned}
	\left| \mat H_{ij}(k) - \mat H_{ij}(0) \right|
	&= \left| \frac{\vect{x}_i^\top \vect{x}_j}{m} \sum_{r=1}^{m} \left( \indict_{r,i}(k) \indict_{r,j}(k) - \indict_{r,i}(0) \indict_{r,j}(0) \right) \right| \\
	&\le \frac1m \sum_{r=1}^{m} \left( \indict\left\{ \indict_{r, i}(k) \not= \indict_{r, i}(0) \right\} + \indict\left\{ \indict_{r, j}(k) \not= \indict_{r, j}(0)  \right\} \right) \\
	&\le \frac1m \sum_{r=1}^{m} \left( \indict\{A_{r, i}\}+\indict\{A_{r, j}\} + 2 \indict\{ \norm{\vect w_r(k) - \vect w_r(0)} > R \} \right).
	\end{aligned}
	\end{equation}
	Next, notice that $\vect w_r(0)^\top \vect x_i$ has the same distribution as $\calN(0, \kappa^2)$. So we have
	\begin{equation} \label{eqn:A_{ri}-bound}
	\E[\indict\{A_{r, i}\}] = \Pr_{z\sim \calN(0, \kappa^2)} \left[ |z| \le R \right]
	= \int_{-R}^{R} \frac{1}{\sqrt{2\pi}\kappa} e^{-x^2/2\kappa^2} dx
	\le \frac{2R}{\sqrt{2\pi}\kappa}.
	\end{equation}
	Now taking the expectation of~\eqref{eqn:H-pertubation-proof}, we have
	\begin{align*}
	\E\left[ \left| \mat H_{ij}(k) - \mat H_{ij}(0) \right| \right]
	&\le \frac1m \sum_{r=1}^{m} \left( \E[\indict\{A_{r, i}\}] + \E[\indict\{A_{r, j}\}] + 2 \indict\{ \norm{\vect w_r(k) - \vect w_r(0)} > R \} \right)\\
	&\le \frac{4R}{\sqrt{2\pi}\kappa} + \frac2m \sum_{r=1}^{m}  \E\left[ \indict\{ \norm{\vect w_r(k) - \vect w_r(0)} > R \} \right]\\
	&\le  \frac{4R}{\sqrt{2\pi}\kappa} + \frac2m \delta, \qquad \forall i, j \in[n],
	\end{align*}
	which implies
	\begin{equation*}
	\E \left[ \norm{\mat H(k) - \mat H(0)}_F \right]
	\le \E\left[ \sum_{i, j=1}^n \left| \mat H_{ij}(k) - \mat H_{ij}(0) \right| \right]
	\le \frac{4n^2R}{\sqrt{2\pi}\kappa} + \frac{2n^2\delta}{m}.
	\end{equation*}
	Then from Markov's inequality we know that with probability at least $1-\delta$ we have $\norm{\mat H(k) - \mat H(0)}_F \le \frac{4n^2R}{\sqrt{2\pi}\kappa\delta} + \frac{2n^2}{m} = O\left( \frac{n^3}{\sqrt m \lambda_0 \kappa \delta^{3/2}} \right) $. 
	
	To bound $\norm{\mat Z(k) - \mat Z(0)}_F$, we have
	\begin{align*}
	\E\left[ \norm{\mat Z(k) - \mat Z(0)}_F^2 \right]
	&=  \E\left[ \frac1m \sum_{i=1}^n \sum_{r=1}^m \left( \indict_{r, i}(k) - \indict_{r, i}(0) \right)^2  \right]\\
	&= \frac1m  \sum_{i=1}^n \sum_{r=1}^m \E\left[  \indict\{ \indict_{r, i}(k) \not= \indict_{r, i}(0) \}  \right]\\
	&\le \frac1m  \sum_{i=1}^n \sum_{r=1}^m \E\left[  \indict\{ A_{r, i} \} + \indict\{ \norm{\vect w_r(k) - \vect w_r(0)} > R \}  \right]\\
	&\le \frac1m \cdot mn \cdot \frac{2R}{\sqrt{2\pi}\kappa} + \frac nm \delta\\
	&= \frac{2nR}{\sqrt{2\pi}\kappa} + \frac{n\delta}{m}.
	\end{align*}
	Using Markov's inequality, with probability at least $1-\delta$ we have $\norm{\mat Z(k) - \mat Z(0)}_F^2 \le \frac{2nR}{\sqrt{2\pi}\kappa\delta} + \frac nm = O\left( \frac{n^{2}  }{\sqrt m \lambda_0\kappa\delta^{3/2}} \right)$,
	proving the second part of the lemma.
\end{proof}

The next lemma shows that $\mat H(0)$ is close to $\mat H^\infty$.

\begin{lem} \label{lem:H(0)-H^inf}
	With probability at least $1-\delta$, we have $\norm{\mat H(0) - \mat H^\infty}_F = O\left( \frac{n\sqrt{\log \frac{n}{\delta}}}{\sqrt m} \right)$.
\end{lem}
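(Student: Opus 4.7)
The plan is to prove entrywise concentration of $\mat H(0)$ around $\mat H^\infty$ via a standard Hoeffding argument, then collect the entrywise bounds into a Frobenius norm bound by a union bound.

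First I would observe that for each fixed pair $(i,j) \in [n]\times[n]$, the entry $\mat H_{ij}(0)$ is an average of $m$ i.i.d.\ bounded random variables with mean exactly $\mat H^\infty_{ij}$. To see this, note that although $\vect w_r(0) \sim \calN(\vect 0, \kappa^2 \mat I)$ rather than $\calN(\vect 0, \mat I)$, the indicator $\indict\{\vect w_r(0)^\top \vect x_i \ge 0,\ \vect w_r(0)^\top \vect x_j \ge 0\}$ depends only on the direction of $\vect w_r(0)$, which is invariant to positive scaling. Hence each summand $\vect x_i^\top \vect x_j \indict_{r,i}(0)\indict_{r,j}(0)$ has expectation $\mat H^\infty_{ij}$ by the definition in Equation~\eqref{eqn:H_infy_defn}, and $\mat H_{ij}(0) = \frac{1}{m}\sum_{r=1}^m \vect x_i^\top \vect x_j \indict_{r,i}(0)\indict_{r,j}(0)$ is the empirical average of these $m$ i.i.d.\ quantities.

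Next I would apply Hoeffding's inequality. Since each summand lies in $[-|\vect x_i^\top \vect x_j|, |\vect x_i^\top \vect x_j|] \subseteq [-1, 1]$ (using $\|\vect x_i\|_2 = 1$), Hoeffding gives
\begin{equation*}
\Pr\left[\left|\mat H_{ij}(0) - \mat H^\infty_{ij}\right| \ge t\right] \le 2\exp\!\left(-\frac{m t^2}{2}\right).
\end{equation*}
Choosing $t = C\sqrt{\log(n/\delta)/m}$ for a suitable constant $C$ and union bounding over all $n^2$ pairs $(i,j)$ yields, with probability at least $1-\delta$, $|\mat H_{ij}(0) - \mat H^\infty_{ij}| \le O\!\left(\sqrt{\log(n/\delta)/m}\right)$ simultaneously for every $i,j \in [n]$.

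Finally, I would convert the entrywise bound into a Frobenius bound:
\begin{equation*}
\|\mat H(0) - \mat H^\infty\|_F^2 = \sum_{i,j=1}^{n} |\mat H_{ij}(0) - \mat H^\infty_{ij}|^2 \le n^2 \cdot O\!\left(\frac{\log(n/\delta)}{m}\right),
\end{equation*}
so $\|\mat H(0) - \mat H^\infty\|_F = O\!\left(n\sqrt{\log(n/\delta)/m}\right)$, as claimed. There is no real obstacle here: the only subtlety is noticing that the rescaling $\kappa$ of the initialization does not affect the distribution of the indicators, so $\mat H(0)$ is genuinely an unbiased estimator of $\mat H^\infty$ and the standard concentration machinery applies directly.
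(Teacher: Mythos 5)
Your proof is correct and follows essentially the same route as the paper: entrywise Hoeffding concentration for $\mat H_{ij}(0)$ around $\mat H^\infty_{ij}$, a union bound over the $n^2$ entries with $\delta' = \delta/n^2$, and summing the squared entrywise bounds to control the Frobenius norm. Your explicit remark that the indicators depend only on the direction of $\vect w_r(0)$, so the scaling $\kappa$ does not bias the estimate, is a point the paper uses implicitly; otherwise the arguments coincide up to inessential constants in the Hoeffding range.
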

\begin{proof}
	For all $i, j \in[n]$, $\mat{H}_{ij}(0) = \frac{\vect{x}_i^\top \vect{x}_j}{m} \sum_{r=1}^{m}\indict_{r,i}(0)\indict_{r,j}(0)$ is the average of $m$ i.i.d. random variables bounded in $[0, 1]$ with expectation $\mat H^\infty_{ij}$.
	Thus by Hoeffding's inequality, with probability at least $1-\delta'$ we have
	$$\left| \mat{H}_{ij}(0) - \mat H^\infty_{ij} \right| \le \sqrt{\frac{\log ({2}/{\delta'})}{2m}}.$$
	Letting $\delta' = \delta/n^2$ and applying a union bound over all $i, j \in [n]$, we know that with probability at least $1-\delta$:
	\begin{align*}
	\norm{\mat H(0) - \mat H^\infty}_F^2 \le n^2 \cdot \frac{\log (2n^2/\delta)}{2m}.
	\end{align*}
	This completes the proof.
\end{proof}

Now we are ready to prove Theorem~\ref{thm:convergence_rate}.

\begin{proof}[Proof of Theorem~\ref{thm:convergence_rate}]
	We assume that all the high-probability (``with probability $1-\delta$'') events happen. By a union bound at the end, the success probability is at least $1-\Omega(\delta)$, and then we can rescale $\delta$ by a constant such that the success probability is at least $1-\delta$.
	
	The main idea is to show that the dynamics of $\left\{\vect u(k) \right\}_{k=0}^\infty$ is close to that of $\left\{\tilde{\vect u}(k) \right\}_{k=0}^\infty$.
	We have
	\begin{equation} \label{eqn:u-dynamics-1}
	u_i(k+1) - u_i(k) = \frac{1}{\sqrt m} \sum_{r=1}^m a_r \left[ \sigma\left( \vect w_r(k+1)^\top \vect x_i \right)  -  \sigma\left( \vect w_r(k)^\top \vect x_i \right) \right].
	\end{equation}
	For each $i\in[n]$, we partition all $m$ neurons into two parts:
	\[
	S_i = \left\{ r\in[m] : \indict\{A_{r, i}\} = 0  \right\}
	\]
	and
	\[
	\bar{S_i} = \left\{ r\in[m] : \indict\{A_{r, i}\} = 1 \right\},
	\]
	where $A_{r, i}$ is defined in~\eqref{eqn:A_{ri}-defn}.
	We know from the proof of Lemma~\ref{lem:H-and-Z-perturbation} that all neurons in $S_i$ will not change activation pattern on data-point $\vect x_i$ during optimization, i.e.,
	\[
	r\in S_i \Longrightarrow \indict_{r, i}(k) = \indict_{r, i}(0), \forall k\ge0.
	\]
	
	From~\eqref{eqn:A_{ri}-bound} we know
	\begin{align*}
	\E\left[ |\bar{S_i}| \right] = \E\left[ \sum_{r=1}^m \indict\{A_{r, i}\} \right]
	\le  \frac{8\sqrt{mn} \norm{\vect y- \vect u(0)}_2}{\sqrt{2\pi}\kappa \lambda_0}
	= O\left( \frac{\sqrt{m} n }{\kappa \lambda_0 \sqrt\delta} \right),
	\end{align*}
	where we have used $\norm{\vect y - \vect u(0)}_2 = O(\sqrt{n/\delta})$ (Theorem~\ref{thm:ssdu-converge}).
	Then we know $\E\left[ \sum_{i=1}^n |\bar{S_i}| \right]  = O\left( \frac{\sqrt{m} n^2 }{\kappa \lambda_0 \sqrt\delta} \right)$. Therefore with probability at least $1-\delta$ we have
	\begin{equation} \label{eqn:S_i_bar-bound}
	\sum_{i=1}^n |\bar{S_i}| = O\left( \frac{\sqrt{m} n^2 }{\kappa \lambda_0 \delta^{3/2}} \right).
	\end{equation}
	
	We write~\eqref{eqn:u-dynamics-1} as
	\begin{equation} \label{eqn:u-dynamics-2}
	\begin{aligned}
	&u_i(k+1) - u_i(k) \\
	=\ & \frac{1}{\sqrt m} \sum_{r\in S_i} a_r \left[ \sigma\left( \vect w_r(k+1)^\top \vect x_i \right)  -  \sigma\left( \vect w_r(k)^\top \vect x_i \right) \right]
	+ \frac{1}{\sqrt m} \sum_{r\in \bar{S_i}} a_r \left[ \sigma\left( \vect w_r(k+1)^\top \vect x_i \right)  -  \sigma\left( \vect w_r(k)^\top \vect x_i \right) \right].
	\end{aligned}
	\end{equation}
	We consider the two terms in~\eqref{eqn:u-dynamics-2} separately.
	We denote the second term as $\epsilon_i(k)$ and treat it as a perturbation term, which we bound as
	\begin{equation} \label{eqn:u-dynamics-second-term}
	\begin{aligned}
	\abs{\epsilon_i(k)} = \, &\left| \frac{1}{\sqrt m} \sum_{r\in \bar{S_i}} a_r \left[ \sigma\left( \vect w_r(k+1)^\top \vect x_i \right)  -  \sigma\left( \vect w_r(k)^\top \vect x_i \right) \right] \right| \\
	\le\,& \frac{1}{\sqrt m} \sum_{r\in \bar{S_i}} \left| \vect w_r(k+1)^\top \vect x_i  - \vect w_r(k)^\top \vect x_i  \right| \\
	\le\,& \frac{1}{\sqrt m} \sum_{r\in \bar{S_i}} \norm{\vect w_r(k+1) - \vect w_r(k) }_2  \\
	=\,& \frac{1}{\sqrt m} \sum_{r\in \bar{S_i}} \norm{\frac{\eta}{\sqrt m} a_r \sum_{j=1}^n (u_j(k) - y_j) \indict_{r, j}(k) \vect{x}_j }_2  \\
	\le\,& \frac{\eta}{m} \sum_{r\in \bar{S_i}} \sum_{j=1}^n |u_j(k) - y_j|   \\
	\le\,& \frac{\eta \sqrt n |\bar{S_i}|}{m} \norm{\vect u(k) - \vect y}_2.
	\end{aligned}
	\end{equation}
	For the first term in~\eqref{eqn:u-dynamics-2}, we have
	\begin{equation} \label{eqn:u-dynamics-first-term}
	\begin{aligned}
	&\frac{1}{\sqrt m} \sum_{r\in S_i} a_r \left[ \sigma\left( \vect w_r(k+1)^\top \vect x_i \right)  -  \sigma\left( \vect w_r(k)^\top \vect x_i \right) \right] \\
	=\,& \frac{1}{\sqrt m} \sum_{r\in S_i} a_r \indict_{r, i}(k) \left(  \vect w_r(k+1)  -  \vect w_r(k)\right)^\top \vect x_i   \\
	=\,& \frac{1}{\sqrt m} \sum_{r\in S_i} a_r \indict_{r, i}(k) \left( - \frac{\eta}{\sqrt m} a_r \sum_{j=1}^n (u_j(k) - y_j) \indict_{r, j}(k) \vect{x}_j  \right)^\top \vect x_i   \\
	=\,& -\frac{\eta}{m} \sum_{j=1}^n (u_j(k) - y_j) \vect{x}_j^\top\vect x_i \sum_{r\in S_i}  \indict_{r, i}(k) \indict_{r, j}(k) \\
	=\,& -\eta  \sum_{j=1}^n (u_j(k) - y_j) \mat H_{ij}(k) + \epsilon_i'(k),
	\end{aligned}
	\end{equation}
	where $\epsilon_i'(k) = \frac{\eta}{m} \sum_{j=1}^n (u_j(k) - y_j) \vect{x}_j^\top\vect x_i \sum_{r\in \bar{S_i}}  \indict_{r, i}(k) \indict_{r, j}(k)$ is regarded as perturbation:
	\begin{equation}  \label{eqn:u-dynamics-perturbation-1}
	\begin{aligned}
	\abs{\epsilon_i'(k)} \le \frac{\eta}{m} \abs{\bar{S_i}} \sum_{j=1}^n \abs{u_j(k) - y_j} 
	\le \frac{\eta \sqrt n \abs{\bar{S_i}} }{m} \norm{\vect u(k) - \vect y}_2  .
	\end{aligned}
	\end{equation}
	Combining \eqref{eqn:u-dynamics-2}, \eqref{eqn:u-dynamics-second-term}, \eqref{eqn:u-dynamics-first-term}, \eqref{eqn:u-dynamics-perturbation-1}, we have
	\begin{equation*}
	\begin{aligned}
	u_i(k+1) - u_i(k) =  -\eta  \sum_{j=1}^n (u_j(k) - y_j) \mat H_{ij}(k) + \epsilon_i'(k) + \epsilon_i(k),
	\end{aligned}
	\end{equation*}	
	which gives
	\begin{equation} \label{eqn:u-dynamics-3}
	\vect u(k+1) - \vect u(k) = - \eta \vect H(k) (\vect u(k) - \vect y) + \bm{\epsilon}(k),
	\end{equation}
	where $\bm{\epsilon}(k) \in \R^n$ can be bounded as
	\begin{equation*}
	\begin{aligned}
	\norm{\bm{\epsilon}(k)}_2 
	&\le \norm{\bm{\epsilon}(k)}_1 = \sum_{i=1}^n \abs{ \epsilon_i(k) + \epsilon_i'(k) }
	\le \sum_{i=1}^n \frac{2 \eta \sqrt n |\bar{S_i}|}{m} \norm{\vect u(k) - \vect y}_2
	= O\left( \frac{\sqrt{m} n^2 }{\kappa \lambda_0 \delta^{3/2}} \right)   \frac{2 \eta \sqrt n }{m} \norm{\vect u(k) - \vect y}_2 \\
	&= O\left( \frac{\eta  n^{5/2} }{\sqrt m \kappa \lambda_0 \delta^{3/2}} \right)  \norm{\vect u(k) - \vect y}_2.
	\end{aligned}
	\end{equation*}
	Here we have used~\eqref{eqn:S_i_bar-bound}.
	
	Next, since $\mat H(k)$ is close to $\mat H^\infty$ according to Lemmas~\ref{lem:H-and-Z-perturbation} and~\ref{lem:H(0)-H^inf}, we rewrite~\eqref{eqn:u-dynamics-3} as
	\begin{equation} \label{eqn:u-dynamics-4}
	\begin{aligned}
	\vect u(k+1) - \vect u(k) = - \eta \mat H^\infty (\vect u(k) - \vect y) + \bm{\zeta}(k),
	\end{aligned}
	\end{equation}
	where $\bm{\zeta}(k) = \eta (\mat H^\infty - \mat H(k)) (\vect u(k) - \vect y) + \bm{\epsilon}(k)$.
	From Lemmas~\ref{lem:H-and-Z-perturbation} and~\ref{lem:H(0)-H^inf} we have
	\begin{equation*}
	\begin{aligned}
	\norm{\mat H^\infty - \mat H(k)}_2
	&\le \norm{\mat H^\infty - \mat H(k)}_F
	 \le \norm{\mat H(0) - \mat H(k)}_F + \norm{\mat H(0) - \mat H^\infty}_2  \\
	 &= O\left( \frac{n^3}{\sqrt m \lambda_0 \kappa \delta^{3/2}} \right) + O\left( \frac{n\sqrt{\log \frac{n}{\delta}}}{\sqrt m} \right)
	 = O\left( \frac{n^3}{\sqrt m \lambda_0 \kappa \delta^{3/2}} \right).
	 \end{aligned}
	\end{equation*}
	Therefore we can bound $\bm{\zeta}(k)$ as
	\begin{equation} \label{eqn:u-dynamics-perturbation-2}
	\begin{aligned}
	\norm{\bm{\zeta}(k)}_2
	&\le \eta \norm{\mat H^\infty - \mat H(k)}_2 \norm{\vect u(k) - \vect y}_2 + \norm{\bm{\epsilon}(k)}_2 \\
	&= O\left(  \frac{\eta n^3}{\sqrt m \lambda_0 \kappa \delta^{3/2}} \right) \norm{\vect u(k) - \vect y}_2  + O\left( \frac{\eta  n^{5/2} }{\sqrt m \kappa \lambda_0 \delta^{3/2}} \right)  \norm{\vect u(k) - \vect y}_2 \\
	&= O\left(  \frac{\eta n^3}{\sqrt m \lambda_0 \kappa \delta^{3/2}} \right) \norm{\vect u(k) - \vect y}_2  .
	\end{aligned}
	\end{equation}
	
	Finally, applying~\eqref{eqn:u-dynamics-4} recursively, we get
	\begin{equation} \label{eqn:u-expression}
	\begin{aligned}
	\vect u(k) - \vect y &= (\mat I - \eta \mat H^\infty)^k (\vect u(0) - \vect{y}) + \sum_{t=0}^{k-1} (\mat I - \eta \mat H^\infty)^t \bm{\zeta}(k-1-t) \\
	&= -(\mat I - \eta \mat H^\infty)^k \vect{y} + (\mat I - \eta \mat H^\infty)^k \vect u(0)  + \sum_{t=0}^{k-1} (\mat I - \eta \mat H^\infty)^t \bm{\zeta}(k-1-t).
	\end{aligned}
	\end{equation}
	Note that $\mat I - \eta \mat H^\infty$ is positive semidefinite, because we have $\norm{\mat H^\infty}_2 \le \tr\left[\mat H^\infty\right] = \frac n2$ and $\eta = O\left( \frac{\lambda_0}{n^2} \right) = O\left( \frac{\lambda_{\min}(\mat H^\infty)}{\norm{\mat H^\infty}_2^2} \right) \le \frac{1}{\norm{\mat H^\infty}_2}$.
	This implies $\norm{\mat I - \eta \mat H^\infty}_2 \le 1 - \eta \lambda_0$.
		
	Now we study the three terms in~\eqref{eqn:u-expression} separately.
	The first term is exactly $\tilde{\vect u}(k) - \vect y$ (see Section~\ref{sec:proof_sketch_rate}), and in Section~\ref{sec:proof_sketch_rate} we have shown that
	\begin{equation} \label{eqn:u-expression-first-term}
	\norm{-(\mat I - \eta \mat H^\infty)^k \vect{y}}_2 = \sqrt{\sum_{i=1}^n (1-\eta\lambda_i)^{2k} (\vect v_i^\top \vect y)^2}.
	\end{equation}
	
	The second term in~\eqref{eqn:u-expression} is small as long as $\vect u(0)$ is small, which is the case when the magnitude of initialization $\kappa$ is set to be small.
	Formally, each $u_i(0)$ has zero mean and variance $O(\kappa^2)$, which means $\E\left[(u_i(0))^2\right] = O(\kappa^2)$. This implies $\E\left[\norm{\vect u(0)}^2\right] = O(n\kappa^2)$, and by Markov's inequality we have $\norm{\vect u(0)}^2 \le \frac{n\kappa^2}{\delta}$ with probability at least $1-\delta$.
	Therefore we have
	\begin{equation} \label{eqn:u-expression-second-term}
	\norm{(\mat I - \eta \mat H^\infty)^k \vect u(0) }_2
	\le \norm{(\mat I - \eta \mat H^\infty)^k }_2 \norm{\vect u(0)}_2
	\le (1-\eta \lambda_0)^k O\left( \sqrt n \kappa / \delta \right).
	\end{equation} 
	
	The third term in~\eqref{eqn:u-expression} can be bounded using~\eqref{eqn:u-dynamics-perturbation-2}.
	Also note that we have $\norm{\vect u(k) - \vect y}_2 \le \left( 1 - \frac{\eta\lambda_0}{4} \right)^k \norm{\vect u(0) - \vect y}_2 = \left( 1 - \frac{\eta\lambda_0}{4} \right)^k O(\sqrt{n/\delta})$ (Theorem~\ref{thm:ssdu-converge}).
	Therefore we have
	\begin{equation} \label{eqn:u-expression-third-term}
	\begin{aligned}
	\norm{\sum_{t=0}^{k-1} (\mat I - \eta \mat H^\infty)^t \bm{\zeta}(k-1-t)}_2
	&\le \sum_{t=0}^{k-1} \norm{ \mat I - \eta \mat H^\infty}_2^t \norm{\bm{\zeta}(k-1-t)}_2 \\
	&\le \sum_{t=0}^{k-1} (1-\eta \lambda_0)^t O\left(  \frac{\eta n^3}{\sqrt m \lambda_0 \kappa \delta^{3/2}} \right) \norm{\vect u(k-1-t) - \vect y}_2  \\
	&\le \sum_{t=0}^{k-1} (1-\eta \lambda_0)^t O\left(  \frac{\eta n^3}{\sqrt m \lambda_0 \kappa \delta^{3/2}} \right) \left( 1 - \frac{\eta\lambda_0}{4} \right)^{k-1-t} O(\sqrt{n/\delta})  \\
	&\le k \left( 1 - \frac{\eta\lambda_0}{4} \right)^{k-1}   O\left(  \frac{\eta n^{7/2}}{\sqrt m \lambda_0 \kappa \delta^{2}} \right) .
	\end{aligned}
	\end{equation}
	
	Combining \eqref{eqn:u-expression}, \eqref{eqn:u-expression-first-term}, \eqref{eqn:u-expression-second-term} and \eqref{eqn:u-expression-third-term}, we obtain
	\begin{align*}
	\norm{\vect u(k) - \vect y}_2
	&= \sqrt{\sum_{i=1}^n (1-\eta\lambda_i)^{2k} (\vect v_i^\top \vect y)^2} \pm O\left( (1-\eta \lambda_0)^k \frac{ \sqrt n \kappa }{ \delta}  + k \left( 1 - \frac{\eta\lambda_0}{4} \right)^{k-1}    \frac{\eta n^{7/2}}{\sqrt m \lambda_0 \kappa \delta^{2}}\right) \\
	&= \sqrt{\sum_{i=1}^n (1-\eta\lambda_i)^{2k} (\vect v_i^\top \vect y)^2} \pm O\left(  \frac{ \sqrt n \kappa }{ \delta} + \frac{1}{\eta\lambda_0} \cdot   \frac{\eta n^{7/2}}{\sqrt m \lambda_0 \kappa \delta^{2}}\right)\\
		&= \sqrt{\sum_{i=1}^n (1-\eta\lambda_i)^{2k} (\vect v_i^\top \vect y)^2} \pm O\left(  \frac{ \sqrt n \kappa }{ \delta} +  \frac{ n^{7/2}}{\sqrt m \lambda_0^2 \kappa \delta^{2}}\right),
	\end{align*}
	where we have used $\max\limits_{k\ge 0} \left\{ k (1-\eta\lambda_0/4)^{k-1} \right\} = O(1/(\eta\lambda_0))$.
	From our choices of $\kappa$ and $m$, the above error term is at most $\epsilon$.
	 This completes the proof of Theorem~\ref{thm:convergence_rate}.
\end{proof}

\section{Proofs for Section~\ref{sec:generalization}}
\label{app:proof_generalization}

\subsection{Proof of Lemma~\ref{lem:distance_bounds}} \label{app:proof-lem:distance_bounds}

\begin{proof}[Proof of Lemma~\ref{lem:distance_bounds}]
	We assume that all the high-probability (``with probability $1-\delta$'') events happen. By a union bound at the end, the success probability is at least $1-\Omega(\delta)$, and then we can rescale $\delta$ by a constant such that the success probability is at least $1-\delta$.
	
	The first part of Lemma~\ref{lem:distance_bounds} is proved as Lemma~\ref{lem:weight-vector-movement} (note that $\norm{\vect y - \vect u(0)}_2 = O\left( \sqrt{n/\delta} \right)$). Now we prove the second part.
	
	Recall the update rule~\eqref{eqn:gd} for $\mat W$:
	\begin{equation} \label{eqn:gd-repeated}
	\vectorize{\mat W(k+1)} = \vectorize{\mat W(k)} - \eta \mat Z(k) (\vect u(k) - \vect y).
	\end{equation}
	According to the proof of Theorem~\ref{thm:convergence_rate} (\eqref{eqn:u-expression}, \eqref{eqn:u-expression-second-term} and \eqref{eqn:u-expression-third-term}) we can write
	\begin{equation} \label{eqn:u-expression-2}
	\vect u(k) - \vect y = -(\mat I - \eta \mat H^\infty)^k \vect y + \vect e(k),
	\end{equation}
	where
	\begin{equation} \label{eqn:e(k)-bound}
	\norm{\vect e(k)} = O\left(  (1-\eta \lambda_0)^k \cdot \frac{\sqrt n \kappa}{ \delta}  + k\left(1-\frac{\eta \lambda_0}{4}\right)^{k-1}\cdot \frac{\eta n^{7/2}}{\sqrt m \lambda_0 \kappa \delta^{2}}  \right).
	\end{equation}
	Plugging \eqref{eqn:u-expression-2} into \eqref{eqn:gd-repeated} and taking a sum over $k = 0, 1, \ldots, K-1$, we get:
	\begin{equation} \label{eqn:W-movement}
	\begin{aligned}
	&\vectorize{\mat W(K)} - \vectorize{\mat W(0)} \\
	=\,& \sum_{k=0}^{K-1} \left( \vectorize{\mat W(k+1)} - \vectorize{\mat W(k)}  \right) \\
	=\,& - \sum_{k=0}^{K-1} \eta \mat Z(k) (\vect u(k) - \vect y) \\
	=\,&  \sum_{k=0}^{K-1} \eta \mat Z(k) \left((\mat I - \eta \mat H^\infty)^k \vect y - \vect e(k)\right) \\
	=\,&  \sum_{k=0}^{K-1} \eta \mat Z(k) (\mat I - \eta \mat H^\infty)^k \vect y  - \sum_{k=0}^{K-1} \eta \mat Z(k) \vect e(k) \\
	=\,&  \sum_{k=0}^{K-1} \eta \mat Z(0) (\mat I - \eta \mat H^\infty)^k \vect y + \sum_{k=0}^{K-1} \eta (\mat Z(k) - \mat Z(0)) (\mat I - \eta \mat H^\infty)^k \vect y  - \sum_{k=0}^{K-1} \eta \mat Z(k)  \vect e(k).
	\end{aligned}
	\end{equation}
	
	The second and the third terms in~\eqref{eqn:W-movement} are considered perturbations, and we can upper bound their norms easily.
	For the second term, using $\norm{\mat Z(k) - \mat Z(0)}_F = O\left( \frac{n}{\sqrt{m^{1/2} \lambda_0 \kappa \delta^{3/2}}} \right)$ (Lemma~\ref{lem:H-and-Z-perturbation}), we have:
	\begin{equation} \label{eqn:W-movement-perturbation-1}
	\begin{aligned}
	& \norm{\sum_{k=0}^{K-1} \eta (\mat Z(k) - \mat Z(0)) (\mat I - \eta \mat H^\infty)^k \vect y}_2 \\
	\le\,& \sum_{k=0}^{K-1} \eta \cdot O\left( \frac{n}{\sqrt{m^{1/2} \lambda_0 \kappa \delta^{3/2}}} \right) \norm{\mat I - \eta \mat H^\infty}_2^k \norm{\vect y}_2 \\
	\le\,&  O\left( \frac{\eta n}{\sqrt{m^{1/2} \lambda_0 \kappa \delta^{3/2}}} \right)  \sum_{k=0}^{K-1} (1-\eta\lambda_0)^k \sqrt n \\
	=\,& O\left( \frac{ n^{3/2}}{ \sqrt{m^{1/2} \lambda_0^3 \kappa \delta^{3/2}}} \right) . 
	\end{aligned}
	\end{equation}
	
	For the third term in \eqref{eqn:W-movement}, we use $\norm{\mat Z(k)}_F \le \sqrt n$ and \eqref{eqn:e(k)-bound} to get:
	\begin{equation} \label{eqn:W-movement-perturbation-2}
	\begin{aligned}
	&\norm{\sum_{k=0}^{K-1} \eta \mat Z(k)  \vect e(k)}_2\\
	\le\,& \sum_{k=0}^{K-1} \eta \sqrt n \cdot O\left(  (1-\eta \lambda_0)^k \cdot \frac{\sqrt n \kappa}{ \delta}  + k\left(1-\frac{\eta \lambda_0}{4}\right)^{k-1}\cdot \frac{\eta n^{7/2}}{\sqrt m \lambda_0 \kappa \delta^{2}}  \right) \\
	=\,& O\left( \frac{\eta  n \kappa}{ \delta} \sum_{k=0}^{K-1}     (1-\eta \lambda_0)^k 
		+ \frac{\eta^2  n^4}{\sqrt m \lambda_0 \kappa \delta^{2}} \sum_{k=0}^{K-1} k\left(1-\frac{\eta \lambda_0}{4}\right)^{k-1} \right) \\
	=\,& O\left( \frac{  n \kappa}{\lambda_0 \delta} + \frac{  n^4}{\sqrt m \lambda_0^3 \kappa \delta^{2}}  \right).
	\end{aligned}
	\end{equation}
	
	Define $\mat T = \eta \sum_{k=0}^{K-1} (\mat I - \eta \mat H^\infty)^k$.
	For the first term in \eqref{eqn:W-movement}, using $\norm{\mat H(0) - \mat H^\infty}_F = O\left( \frac{n\sqrt{\log \frac{n}{\delta}}}{\sqrt m} \right)$ (Lemma~\ref{lem:H(0)-H^inf}) we have
	\begin{equation} \label{eqn:W-movement-main-term-bound-interim}
	\begin{aligned}
	&\norm{\sum_{k=0}^{K-1} \eta \mat Z(0) (\mat I - \eta \mat H^\infty)^k \vect y}_2^2 \\
	=\,& \norm{ \mat Z(0) \mat T  \vect y}_2^2 \\
	=\,&  \vect y^\top \mat T \mat Z(0)^\top \mat Z(0) \mat T \vect y \\
	=\,&  \vect y^\top \mat T \mat H(0)  \mat T \vect y \\
	\le\,&  \vect y^\top \mat T \mat H^\infty  \mat T \vect y  +    \norm{\mat H(0) - \mat H^\infty}_2  \norm{\mat T}_2^2 \norm{\vect y}_2^2 \\
	\le \,&  \vect y^\top \mat T \mat H^\infty  \mat T \vect y  +    O\left( \frac{n\sqrt{\log \frac{n}{\delta}}}{\sqrt m} \right) \cdot  \left( \eta \sum_{k=0}^{K-1} (\mat I - \eta \lambda_0)^k\right)^2 n \\
	= \,&  \vect y^\top \mat T \mat H^\infty  \mat T \vect y  +    O\left( \frac{n^2\sqrt{\log \frac{n}{\delta}}}{\sqrt m \lambda_0^2} \right) .
	\end{aligned}
	\end{equation}
	Let the eigen-decomposition of $\mat H^\infty$ be $\mat H^\infty = \sum_{i=1}^n \lambda_i \vect v_i \vect v_i^\top$.
	Since $\mat T$ is a polynomial of $\mat H^\infty$, it has the same set of eigenvectors as $\mat H^\infty$, and we have
	\begin{align*}
	\mat T  = \sum_{i=1}^n \eta \sum_{k=0}^{K-1} (1 - \eta \lambda_i)^k \vect v_i \vect v_i^\top
	= \sum_{i=1}^n \frac{1 - (1 - \eta \lambda_i)^{K}}{\lambda_i} \vect v_i \vect v_i^\top.
	\end{align*}
	It follows that
	\begin{align*}
	\mat T \mat H^\infty  \mat T
	= \sum_{i=1}^n \left( \frac{1 - (1 - \eta \lambda_i)^{K}}{\lambda_i} \right)^2 \lambda_i  \vect v_i \vect v_i^\top
	\preceq \sum_{i=1}^n \frac{1}{\lambda_i}\vect v_i \vect v_i^\top
	= \left( \mat H^\infty \right)^{-1}.
	\end{align*}
	Plugging this into \eqref{eqn:W-movement-main-term-bound-interim}, we get
	\begin{equation} \label{eqn:W-movement-main-term-bound}
	\norm{\sum_{k=0}^{K-1} \eta \mat Z(0) (\mat I - \eta \mat H^\infty)^k \vect y}_2
	\le \sqrt{\vect y^\top (\mat H^\infty)^{-1} \vect y  +    O\left( \frac{n^2\sqrt{\log \frac{n}{\delta}}}{\sqrt m \lambda_0^2} \right) }
	\le \sqrt{\vect y^\top (\mat H^\infty)^{-1} \vect y}  +    O\left( \sqrt{\frac{n^2\sqrt{\log \frac{n}{\delta}}}{\sqrt m \lambda_0^2} } \right) .
	\end{equation}
	
	Finally, plugging the three bounds \eqref{eqn:W-movement-perturbation-1}, \eqref{eqn:W-movement-perturbation-2} and \eqref{eqn:W-movement-main-term-bound} into \eqref{eqn:W-movement}, we have
	\begin{align*}
	& \norm{\mat W(K) - \mat W(0)}_F \\
	=\,& \norm{\vectorize{\mat W(K)} - \vectorize{\mat W(0)} }_2 \\
	\le\,& \sqrt{\vect y^\top (\mat H^\infty)^{-1} \vect y}  +    O\left( \sqrt{\frac{n^2\sqrt{\log \frac{n}{\delta}}}{\sqrt m \lambda_0^2} } \right)
		+ O\left( \frac{ n^{3/2}}{ \sqrt{m^{1/2} \lambda_0^3 \kappa \delta^{3/2}}} \right)
		+ O\left( \frac{  n \kappa}{\lambda_0 \delta} + \frac{  n^4}{\sqrt m \lambda_0^3 \kappa \delta^{2}}  \right) \\
	=\,& \sqrt{\vect y^\top (\mat H^\infty)^{-1} \vect y}  + O\left( \frac{  n \kappa}{\lambda_0 \delta} \right) + \frac{\poly\left(n, \frac{1}{\lambda_0}, \frac{1}{\delta} \right)}{m^{1/4} \kappa^{1/2}} .
	\end{align*}
	This finishes the proof of Lemma~\ref{lem:distance_bounds}.
\end{proof}

\subsection{Proof of Lemma~\ref{lem:rad_dist_func_class}} \label{app:proof-lem:rad_dist_func_class}

%
%
%
%
%

\begin{proof}[Proof of Lemma~\ref{lem:rad_dist_func_class}]
	We need to upper bound
	\begin{align*}
		\calR_S\left(\cF^{\mat{W}(0),\va}_{R,B} \right) 
		&= \frac{1}{n} \Erc{\sup_{f \in \cF^{\mat{W}(0),\va}_{R,B}}\sum_{i=1}^{n} \eps_i f(\vect{x}_i)} \\
		&= \frac{1}{n} \Erc{ \sup_{\mat W: \norm{\mat W - \mat W(0)}_{2, \infty} \le R \atop \norm{\mat W - \mat W(0)}_F \le B }\sum_{i=1}^{n} \eps_i f_{\mat W, \vect a}(\vect{x}_i)} \\
		&= \frac{1}{n} \Erc{ \sup_{\mat W: \norm{\mat W - \mat W(0)}_{2, \infty} \le R \atop \norm{\mat W - \mat W(0)}_F \le B }\sum_{i=1}^{n} \eps_i \sum_{r=1}^m \frac{1}{\sqrt m} a_r \sigma(\vect w^\top \vect x_i)} ,
	\end{align*}
where $\norm{\mat W - \mat W(0)}_{2, \infty} = \max\limits_{r\in[m]} \norm{\vw_r-\vw_r(0)}_2$.

Similar to the proof of Lemma~\ref{lem:H-and-Z-perturbation}, we define events:
\[A_{r, i} = \left\{ \abs{\vw_{r}(0)^\top \vx_i} \le R \right\}, \quad i\in[n], r\in[m].\]
Since we only look at $\mat W$ such that $\norm{\vw_r-\vw_r(0)}_2\le R$ for all $ r\in[m]$,
if $\indict\{A_{r, i}\}=0$ we must have $\indict\{ \vw^\top \vx_i \} = \indict\{\vw_r(0)^\top \vx_i\ge  0\} = \indict_{r, i}(0)$. Thus we have
\[ \bone{\neg A_{r, i}} \relu{\vw_r^\top \vx_i} 
=  \bone{\neg A_{r, i}} \indict_{r, i}(0)\vw_r^\top \vx_i. \]
Then we have
\begin{align*}
&\sum_{i=1}^n \eps_i \sum_{r=1}^m a_r \relu{\vw_r^\top \vx_i} -\sum_{i=1}^n \eps_i \sum_{r=1}^m a_r \indict_{r,i}(0) \vw_r^\top \vx_i   \\
=\,& \sum_{r=1}^m \sum_{i=1}^n \left( \bone{A_{r, i}} + \bone{\neg A_{r, i}}\right) \eps_i a_r \left( \relu{\vw_r^\top \vx_i} - \indict_{r,i}(0) \vw_r^\top \vx_i   \right)\\
=\,& \sum_{r=1}^m \sum_{i=1}^n  \bone{A_{r, i}}  \eps_i a_r \left( \relu{\vw_r^\top \vx_i} - \indict_{r,i}(0) \vw_r^\top \vx_i   \right)\\
=\,& \sum_{r=1}^m \sum_{i=1}^n  \bone{A_{r, i}}  \eps_i a_r \left( \relu{\vw_r^\top \vx_i} - \indict_{r,i}(0) \vw_r(0)^\top \vx_i    - \indict_{r,i}(0) (\vw_r-\vw_r(0))^\top \vx_i  \right)\\
=\,& \sum_{r=1}^m \sum_{i=1}^n  \bone{A_{r, i}}  \eps_i a_r \left( \relu{\vw_r^\top \vx_i} - \relu{\vw_r(0)^\top \vx_i}   - \indict_{r,i}(0) (\vw_r-\vw_r(0))^\top \vx_i   \right)\\
\le\, & \sum_{r=1}^m \sum_{i=1}^n  \bone{A_{r, i}} \cdot 2R  .
\end{align*}

Thus we can bound the Rademacher complexity as:
\begin{align*}
\calR_S\left( \cF^{\vW(0),\va}_{R,B} \right)
=\,& \frac1n \Erc{ \sup_{\tiny \substack{\mat W: \norm{\vW-\vW(0)}_{2,\infty}\le R \\ \norm{\vW-\vW(0)}_F\le B }}  \sum_{i=1}^n\eps_i \sum_{r=1}^m\frac{ a_r}{\sqrt{m}} \relu{\vw_r^\top \vx} }\\
\le\, & \frac1n  \Erc{ \sup_{\tiny \substack{\mat W: \norm{\vW-\vW(0)}_{2,\infty}\le R \\ \norm{\vW-\vW(0)}_F\le B }}  \sum_{i=1}^n\eps_i  \sum_{r=1}^m\frac{ a_r}{\sqrt{m}} \indict_{r,i}(0) \vw_r^\top \vx_i   } + \frac{2R}{n\sqrt{m}}\sum_{r=1}^m \sum_{i=1}^n  \bone{A_{r, i}}  \\
\le\, & \frac1n  \Erc{ \sup_{\mat W: \norm{\vW-\vW(0)}_F\le B}  \sum_{i=1}^n\eps_i   \sum_{r=1}^m\frac{ a_r}{\sqrt{m}} \indict_{r,i}(0) \vw_r^\top \vx_i  } + \frac{2R}{n\sqrt{m}}\sum_{r=1}^m \sum_{i=1}^n  \bone{A_{r, i}}  \\
=\, & \frac1n  \Erc{ \sup_{\mat W: \norm{\vW-\vW(0)}_F\le B}  \vectorize{\mat W}^\top \mat Z(0) {\bm{\eps}} }+ \frac{2R}{n\sqrt{m}}\sum_{r=1}^m \sum_{i=1}^n  \bone{A_{r, i}}  \\
=\, & \frac1n  \Erc{ \sup_{\mat W: \norm{\vW-\vW(0)}_F\le B}  \vectorize{\mat W - \mat W(0)}^\top \mat Z(0) {\bm{\eps}} } + \frac{2R}{n\sqrt{m}}\sum_{r=1}^m \sum_{i=1}^n  \bone{A_{r, i}}  \\
\le\,& \frac1n \Erc{B\cdot \norm{\mat Z(0) \bm{\eps}}_2} + \frac{2R}{n\sqrt{m}}\sum_{r=1}^m \sum_{i=1}^n  \bone{A_{r, i}}  \\
\le\,& \frac Bn \sqrt{\Erc{ \norm{\mat Z(0) \bm{\eps}}_2^2}} + \frac{2R}{n\sqrt{m}}\sum_{r=1}^m \sum_{i=1}^n  \bone{A_{r, i}}  \\
=\, & \frac Bn \norm{\vZ(0)}_F + \frac{2R}{n\sqrt{m}} \sum_{r=1}^m \sum_{i=1}^n  \bone{A_{r, i}} .
\end{align*}

Next we bound $\norm{\mat Z(0)}_F$ and $\sum_{r=1}^m \sum_{i=1}^n  \bone{A_{r, i}}$.

For $\norm{\mat Z(0)}_F$, notice that
\begin{align*}
\norm{\mat Z(0)}_F^2 = \frac1m \sum_{r=1}^m \left( \sum_{i=1}^n \indict_{r,i}(0) \right).
\end{align*}
Since all $m$ neurons are independent at initialization and $\E\left[ \sum_{i=1}^n \indict_{r,i}(0) \right] = n/2$, by Hoeffding's inequality, with probability at least $1-\delta/2$ we have
 \begin{align*}
 \norm{\mat Z(0)}_F^2 \le n\left( \frac 12 + \sqrt{\frac{\log \frac2\delta}{2m}} \right).
 \end{align*}

Similarly, for $\sum_{r=1}^m \sum_{i=1}^n  \bone{A_{r, i}}$, from \eqref{eqn:A_{ri}-bound} we know $\E\left[ \sum_{i=1}^n  \bone{A_{r, i}} \right] \le \frac{\sqrt 2 n R}{\sqrt\pi \kappa}$. Then by Hoeffding's inequality, with probability at least $1-\delta/2$ we have
\begin{align*}
\sum_{r=1}^m \sum_{i=1}^n  \bone{A_{r, i}} \le mn \left( \frac{\sqrt 2  R}{\sqrt\pi \kappa} + \sqrt{\frac{\log \frac2\delta}{2m}} \right).
\end{align*}

Therefore, with probability at least $1-\delta$, the Rademacher complexity is bounded as:
\begin{align*}
\calR_S\left( \cF^{\vW(0),\va}_{R,B} \right)
&\le \frac Bn \left( \sqrt{\frac n2} + \sqrt{n\sqrt{\frac{\log \frac2\delta}{2m}}} \right) + \frac{2R}{n\sqrt{m}} mn \left( \frac{\sqrt 2  R}{\sqrt\pi \kappa} + \sqrt{\frac{\log \frac2\delta}{2m}} \right) \\
&= \frac{B}{\sqrt{2n}} \left(1+\left( \frac{2\log \frac2\delta}{m} \right)^{1/4} \right)  + \frac{2\sqrt 2 R^2 \sqrt m}{\sqrt\pi \kappa} + R \sqrt{2\log \frac2\delta},
\end{align*}
completing the proof of Lemma~\ref{lem:rad_dist_func_class}.
(Note that the high probability events used in the proof do not depend on the value of $B$, so the above bound holds simultaneously for every $B$.)
\end{proof}

\subsection{Proof of Theorem~\ref{thm:main_generalization}} \label{app:proof-thm:main_generalization}

\begin{proof}[Proof of Theorem~\ref{thm:main_generalization}]
	First of all, since the distribution $\cD$ is $(\lambda_0, \delta/3, n)$-non-degenerate, with probability at least $1-\delta/3$ we have $\lambda_{\min}(\mat H^\infty) \ge \lambda_0$. The rest of the proof is conditioned on this happening.
	
	 Next, from Theorem~\ref{thm:ssdu-converge}, Lemma~\ref{lem:distance_bounds} and Lemma~\ref{lem:rad_dist_func_class}, we know that for any sample $S$, with probability at least $1-\delta/3$ over the random initialization, the followings hold simultaneously:
	 \begin{enumerate}[(i)]
	 	\item Optimization succeeds (Theorem~\ref{thm:ssdu-converge}):
	 	\begin{equation*} 
	 	\Phi(\mat W(k)) \le \left( 1-\frac{\eta\lambda_0}{2} \right)^k \cdot O\left( \frac{n}{\delta} \right) \le \frac12. 
	 	\end{equation*}
	 	This implies an upper bound on the training error $L_S(f_{\mat W(k), \vect a}) = \frac1n \sum_{i=1}^n \ell(f_{\mat W(k), \vect a}(\vect x_i), y_i) = \frac1n \sum_{i=1}^n \ell(u_i(k), y_i)$:
	 	\begin{align*}
	 	L_S(f_{\mat W(k), \vect a}) &= \frac1n \sum_{i=1}^n \left[ \ell(u_i(k), y_i) - \ell(y_i, y_i)\right] \\
	 	&\le \frac1n \sum_{i=1}^n \abs{u_i(k) - y_i} \\
	 	&\le \frac{1}{\sqrt n} \norm{\vect u(k) - \vect y}_2 \\
	 	&= \sqrt{\frac{2\Phi(\mat W(k))}{n}} \\
	 	&\le \frac{1}{\sqrt n}. 
	 	\end{align*}
	 	
	 	\item $\norm{\vect w_r(k) - \vect w_r(0)}_2 \le R$ $(\forall r\in[m])$ and
	 	 $\norm{\mat{W}(k)-\mat{W}(0)}_{F} \le B $,
	 	where $R = O\left( \frac{ n }{\sqrt m \lambda_0\sqrt{\delta}} \right)$   and $B = \sqrt{\vect{y}^\top \left(\mat{H}^{\infty}\right)^{-1}\vect{y}} + O\left( \frac{  n \kappa}{\lambda_0 \delta} \right) + \frac{\poly\left(n, \lambda_0^{-1}, \delta^{-1} \right)}{m^{1/4} \kappa^{1/2}}$.
	 	Note that $B\le O\left( \sqrt{\frac{n}{\lambda_0}} \right)$.
	 	
	 	\item 
	 	Let $B_i =  i$ ($i=1, 2, \ldots$).
	 	Simultaneously for all $i$, the function class $\cF_{R, B_i}^{\mat W(0), \vect a}$ has Rademacher complexity bounded as
	 	\begin{align*}
	 	\calR_S\left( \cF^{\mat W(0),\va}_{R,B_i} \right)
	 	\le 	\frac{B_i}{\sqrt{2n}} \left(1+\left( \frac{2\log \frac{10}{\delta}}{m} \right)^{1/4} \right) + \frac{2 R^2 \sqrt m}{ \kappa} + R \sqrt{2\log \frac{10}{\delta}} .
	 	\end{align*}
	 \end{enumerate}

 Let $i^*$ be the smallest integer such that $B\le B_{i^*}$. Then we have $i^* \le O\left( \sqrt{\frac{n}{\lambda_0}} \right)$ and $B_{i^*} \le B + 1$.
 From above we know $f_{\mat W(k), \vect a}\in \cF_{R, B_{i^*}}^{\mat W(0), \vect a}$, and
 \begin{align*}
 &\calR_S\left( \cF^{\mat W(0),\va}_{R,B_{i^*}} \right) \\
 \le \,&	\frac{B+1}{\sqrt{2n}} \left(1+\left( \frac{2\log \frac{10}{\delta}}{m} \right)^{1/4} \right) + \frac{2 R^2 \sqrt m}{ \kappa} + R \sqrt{2\log \frac{10}{\delta}} \\
 = \,& \frac{\sqrt{\vect{y}^\top \left(\mat{H}^{\infty}\right)^{-1}\vect{y}} }{\sqrt{2n}} \left(1+\left( \frac{2\log \frac{10}{\delta}}{m} \right)^{1/4} \right) + \frac{1}{\sqrt{n}} + O\left( \frac{ \sqrt n \kappa}{\lambda_0 \delta} \right) + \frac{\poly\left(n, \lambda_0^{-1}, \delta^{-1} \right)}{m^{1/4} \kappa^{1/2}} + \frac{2 R^2 \sqrt m}{ \kappa} + R \sqrt{2\log \frac{10}{\delta}} \\
 \le \,& \sqrt{\frac{\vect{y}^\top \left(\mat{H}^{\infty}\right)^{-1}\vect{y}}{2n}}  +  \sqrt{\frac{\sqrt n \lambda_0^{-1}\sqrt n}{2n}} \left( \frac{2\log \frac{10}{\delta}}{m} \right)^{1/4} + \frac{1}{\sqrt{n}}  + O\left( \frac{ \sqrt n \kappa}{\lambda_0 \delta} \right) + \frac{\poly\left(n, \lambda_0^{-1}, \delta^{-1} \right)}{m^{1/4} \kappa^{1/2}} \\
 = \,& \sqrt{\frac{\vect{y}^\top \left(\mat{H}^{\infty}\right)^{-1}\vect{y}}{2n}}  + \frac{1}{\sqrt{n}}  + O\left( \frac{ \sqrt n \kappa}{\lambda_0 \delta} \right) + \frac{\poly\left(n, \lambda_0^{-1}, \delta^{-1} \right)}{m^{1/4} \kappa^{1/2}} \\
 \le \,& \sqrt{\frac{\vect{y}^\top \left(\mat{H}^{\infty}\right)^{-1}\vect{y}}{2n}}   + \frac{2}{\sqrt{n}} .
 \end{align*}

 Next, from the theory of Rademacher complexity (Theorem~\ref{thm:rad_generalization}) and a union bound over a finite set of different $i$'s, for any random initialization $\left(\mat W(0), \mat a \right)$, with probability at least $1-\delta/3$ over the sample $S$, we have
 \begin{equation*}
 \sup_{f \in \cF_{R, B_i}^{\mat W(0), \vect a}} \left\{ L_{\calD}(f)-L_{S}(f) \right\} \le 2 \calR_S\left(\cF_{R, B_i}^{\mat W(0), \vect a}\right) + O\left( \sqrt{\frac{\log\frac{n}{\lambda_0\delta}}{n}} \right) , \qquad \forall i\in \left\{1, 2, \ldots, O\left( \sqrt{\frac{n}{\lambda_0}} \right) \right\}.
 \end{equation*}
 
 Finally, taking a union bound, we know that with probability at least $1-\frac23\delta$ over the sample $S$ and the random initialization $(\mat W(0), \vect a)$, the followings are all satisfied (for some $i^*$):
 \begin{equation*}
 \begin{aligned}
 L_S(f_{\mat W(k), \vect a}) &\le \frac{1}{\sqrt n},\\
 f_{\mat W(k), \vect a} &\in \cF_{R, B_{i^*}}^{\mat W(0), \vect a} ,\\
 \calR_S\left( \cF^{\mat W(0),\va}_{R,B_{i^*}} \right) &\le \sqrt{\frac{\vect{y}^\top \left(\mat{H}^{\infty}\right)^{-1}\vect{y}}{2n}}  + \frac{2}{\sqrt{n}}  , \\
  \sup_{f \in \cF_{R, B_{i^*}}^{\mat W(0), \vect a}} \left\{ L_{\calD}(f)-L_{S}(f) \right\} &\le 2 \calR_S\left(\cF_{R, B_{i^*}}^{\mat W(0), \vect a}\right) + O\left( \sqrt{\frac{\log\frac{n}{\lambda_0\delta}}{n}} \right)  .
 \end{aligned}
 \end{equation*}
 These together can imply:
 \begin{align*}
 L_\cD(f_{\mat W(k), \vect a}) &\le \frac{1}{\sqrt n} + 2 \calR_S\left(\cF_{R, B_{i^*}}^{\mat W(0), \vect a}\right) + O\left( \sqrt{\frac{\log\frac{n}{\lambda_0\delta}}{n}} \right) \\
 &\le \frac{1}{\sqrt n} + 2 \left(\sqrt{\frac{\vect{y}^\top \left(\mat{H}^{\infty}\right)^{-1}\vect{y}}{2n}}  + \frac{2}{\sqrt{n}}    \right) + O\left( \sqrt{\frac{\log\frac{n}{\lambda_0\delta}}{n}} \right)  \\
  &= \sqrt{\frac{2\vect{y}^\top \left(\mat{H}^{\infty}\right)^{-1}\vect{y}}{n}}  + O\left( \sqrt{\frac{\log\frac{n}{\lambda_0\delta}}{n}} \right) .
 \end{align*}
 This completes the proof.
\end{proof}

\subsection{Proof of Corollary~\ref{cor:binary-classification-generalization}} \label{app:proof-cor:binary-classification-generalization}

\begin{proof}[Proof of Corollary~\ref{cor:binary-classification-generalization}]
	We apply Theorem~\ref{thm:main_generalization} to the ramp loss
	\begin{align*}
		 \ell^{\mathrm{ramp}}(u, y) =
		 \begin{cases}
		 1, & uy\le0,\\
		 1-uy, & 0<uy<1,\\
		 0, & uy\ge1.
		 \end{cases} \qquad (u\in\R, y\in\{\pm1\})
	\end{align*}
	Note that it is $1$-Lipschitz in $u$ for $y\in\{\pm1\}$ and satisfies $\ell^{\mathrm{ramp}}(y, y) = 0$ for $y\in\{\pm1\}$. It is also an upper bound on the 0-1 loss:
	\[
	\ell^{\mathrm{ramp}}(u, y) \ge \ell^{01}(u, y) = \indict\left\{ uy\le0 \right\}.
	\]
	
	Therefore we have with probability at least $1-\delta$:
	\begin{align*}
	L_\calD^{01}(f_{\mat W(k), \vect a}) 
	&= \E_{(\vect x, y)\sim \cD} \left[ \ell^{01}(f_{\mat W(k), \vect a}(\vect x), y)  \right] \\
	&\le \E_{(\vect x, y)\sim \cD} \left[ \ell^{\mathrm{ramp}}(f_{\mat W(k), \vect a}(\vect x), y)  \right] \\
	&\le \sqrt{\frac{2 \vect y^\top (\mat H^\infty)^{-1}\vect y }{n}} + O\left( \sqrt{\frac{\log\frac{n}{\lambda_0\delta}}{n}} \right)  .
	\qedhere
	\end{align*}
\end{proof}

\section{Proofs for Section~\ref{sec:improper}}
\label{app:proof_improper}

We prove a lemma before proving Theorem~\ref{thm:improper_learning_monomial}.

\begin{lem}\label{lem:inv_comparison}
For any two symmetric matrices $\vect{A},\vect{B}\in \reals^{n\times n}$ such that $\vect{B}\succeq \vect{A}\succeq \mat0$, we have $\vect{A}^\dag\succeq \vect{P}_{\mat A} \vect{B}^\dag \vect{P}_{\mat A}$, where $\vect{P}_\vect{A} = \vect{A}^{1/2}\vect{A}^\dag \vect{A}^{1/2}$ is the projection matrix for the subspace spanned by $\vect{A}$.\footnote{$\vect{A}^\dag$ is the Moore-Penrose pseudo-inverse of $\vect{A}$.}

Note that when $\vect{A}$ and $\vect{B}$ are both invertible,  this result reads $\vect{A}^{-1}\succeq \vect{B}^{-1}$.
\end{lem}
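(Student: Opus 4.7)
I would reduce to the strictly positive definite case by a standard regularization-and-limit argument. The structural fact that makes this work is that $\mat B \succeq \mat A \succeq \mat 0$ implies $\Kernel(\mat B) \subseteq \Kernel(\mat A)$, equivalently $\range(\mat A) \subseteq \range(\mat B)$: if $\mat B v = \mat 0$, then $0 = v^\top \mat B v \ge v^\top \mat A v \ge 0$ forces $\mat A^{1/2} v = \mat 0$, so $\mat A v = \mat 0$. This containment is the reason the sandwiching projection $\mat P_{\mat A}$ tames $\mat B^\dag$ in the limit.

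Next I would dispatch the invertible case: if $\mat B \succeq \mat A \succ \mat 0$, conjugating by $\mat A^{-1/2}$ yields $\mat A^{-1/2} \mat B \mat A^{-1/2} \succeq \mat I$; since the map $\mat X \mapsto \mat X^{-1}$ reverses the Loewner order on PD matrices, this gives $\mat A^{1/2} \mat B^{-1} \mat A^{1/2} \preceq \mat I$, and conjugating back by $\mat A^{-1/2}$ gives $\mat B^{-1} \preceq \mat A^{-1}$. With this in hand I would regularize: for $\eps > 0$ set $\mat A_\eps = \mat A + \eps \mat I$ and $\mat B_\eps = \mat B + \eps \mat I$, both PD with $\mat B_\eps \succeq \mat A_\eps$, so the invertible case gives $\mat A_\eps^{-1} \succeq \mat B_\eps^{-1}$, and sandwiching by the fixed projection $\mat P_{\mat A}$ preserves the order:
\begin{equation*}
\mat P_{\mat A} \mat A_\eps^{-1} \mat P_{\mat A} \succeq \mat P_{\mat A} \mat B_\eps^{-1} \mat P_{\mat A}.
\end{equation*}

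The final step is to send $\eps \to 0^+$. Using the eigendecomposition of $\mat A$, the left-hand side converges to $\mat P_{\mat A} \mat A^\dag \mat P_{\mat A} = \mat A^\dag$ (the zero-eigenvector contributions of $\mat A_\eps^{-1}$, which blow up like $\eps^{-1}$, are annihilated by $\mat P_{\mat A}$). For the right-hand side I would write $\mat B = \sum_j \mu_j v_j v_j^\top$; any $v_j$ with $\mu_j = 0$ lies in $\Kernel(\mat B) \subseteq \Kernel(\mat A)$, so $\mat P_{\mat A} v_j = \mat 0$ and the divergent $\eps^{-1}$ terms in $\mat B_\eps^{-1}$ are again killed by the projections, while the surviving summands converge to $\mat P_{\mat A} \mat B^\dag \mat P_{\mat A}$. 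Taking limits in the $\eps$-inequality yields the claim. The main obstacle is precisely this last point: without the range containment, the projections would not cancel the blow-up of $\mat B_\eps^{-1}$ on $\Kernel(\mat B)$ as $\eps \to 0$, and indeed this is the structural reason the lemma must sandwich $\mat B^\dag$ between two copies of $\mat P_{\mat A}$ rather than assert $\mat A^\dag \succeq \mat B^\dag$ outright.
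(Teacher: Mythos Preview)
Your proof is correct and takes a genuinely different route from the paper's. The paper first reduces (without loss of generality) to the case where $\mat B$ is invertible and $\mat A$ is diagonal, and then argues via a determinant computation that every solution $\lambda$ of $\det\big(\mat P_{\mat A}\mat B^{-1}\mat P_{\mat A}-\lambda(\mat A^\dag+\mat P_{\mat A}^\perp)\big)=0$ equals a solution of $\det(\mat A-\lambda\mat B)=0$ and hence lies in $[0,1]$; this generalized-eigenvalue bound yields $\mat P_{\mat A}\mat B^{-1}\mat P_{\mat A}\preceq \mat A^\dag+\mat P_{\mat A}^\perp$, from which the claim follows.

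Your argument instead proceeds by the classical regularize-and-pass-to-the-limit trick: prove the invertible case via conjugation (reducing anti-monotonicity of inversion to the trivial implication $\mat X\succeq\mat I\Rightarrow\mat X^{-1}\preceq\mat I$), then apply it to $\mat A+\eps\mat I$ and $\mat B+\eps\mat I$, sandwich by $\mat P_{\mat A}$, and use the kernel containment $\Kernel(\mat B)\subseteq\Kernel(\mat A)$ to show both sides converge to the desired limits as $\eps\to0^+$. This is more elementary and more conceptual: it needs no coordinate choice for $\mat A$, no determinant identities, and it makes explicit exactly why the two copies of $\mat P_{\mat A}$ are needed (to kill the $\eps^{-1}$ blow-up on $\Kernel(\mat B)$). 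The paper's approach, on the other hand, avoids any limiting argument once $\mat B$ is assumed invertible. Both are perfectly valid; yours is closer to the standard matrix-analysis toolkit.
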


\begin{proof}
W.L.O.G. we can assume $\vect{B}$ is invertible, which means $\vect{B}^{-1}=\vect{B}^{\dag}$. Additionally, we can assume $\vect{A}$ is diagonal and $\vect{A}= \begin{pmatrix}\mat \Lambda &\mat0 \\\mat0& \mat0\end{pmatrix} $. Thus $\vect{P}_
\vect{A} =  \begin{pmatrix} \mat I & \mat 0 \\\mat0& \mat0\end{pmatrix}$. We define $\vect{P}_\vect{A}^\perp = \mat I-\vect{P}_\vect{A} =\begin{pmatrix} \mat0 & \mat0 \\ \mat0& \mat I\end{pmatrix}$. 

Now we will show that all the solutions to the equation $\det(\vect{P}_\vect{A}\vect{B}^{-1}\vect{P}_\vect{A}-\lambda (\vect{A}^\dag + \vect{P}_\vect{A}^\perp))=0$ are between $0$ and $1$.
If this is shown, we must have $\vect{P}_\vect{A}\vect{B}^{-1}\vect{P}_\vect{A} \preceq \vect{A}^\dag + \vect{P}_\vect{A}^\perp$, which would imply $\vect{P}_\vect{A}\vect{B}^{-1}\vect{P}_\vect{A} \preceq \vect{A}^\dag$. 

We have
\begin{equation}
\begin{split}
&\det(\vect{P}_\vect{A}\vect{B}^{-1}\vect{P}_\vect{A}-\lambda (\vect{A}^\dag + \vect{P}_\vect{A}^\perp)) \\
=& \det(\vect{B}^{-1}\vect{P}_\vect{A}-\lambda (\vect{A}^\dag + \vect{P}_\vect{A}^\perp)) \\
=& \det(\vect{B}^{-1})\det(\vect{P}_\vect{A}-\lambda \vect{B}(\vect{A}^\dag + \vect{P}_\vect{A}^\perp))\\
= &\det(\vect{B}^{-1})\det((\vect{A}-\lambda \vect{B})(\vect{A}^\dag + \vect{P}_\vect{A}^\perp))\\ 
= &\det(\vect{B}^{-1})\det(\vect{A}^\dag + \vect{P}_\vect{A}^\perp)\det(\vect{A}-\lambda \vect{B}).
\end{split}
\end{equation}
Note $\det(\vect{B}^{-1})>0$ and $\det(\vect{A}^\dag + \vect{P}_\vect{A}^\perp)>0$.
Thus all the solutions to  $\det(\vect{P}_\vect{A}\vect{B}^{-1}\vect{P}_\vect{A}-\lambda (\vect{A}^\dag + \vect{P}_\vect{A}^\perp))=0$ are exactly all the solutions to $\det(\vect{A}-\lambda \vect{B})=0$. Since $\vect{B}\succeq \vect{A}\succeq \mat0$ and $\mat B \succ \mat0$, we have $\det(\vect{A}-\lambda \vect{B})\neq 0$ when $\lambda<0$ or $\lambda >1$.
\end{proof}

\begin{proof}[Proof of Theorem~\ref{thm:improper_learning_monomial}]
	
	For vectors $\vect{a} = (a_1, \ldots, a_{n_1})^\top \in \reals^{n_1}, \vect{b} = (b_1, \ldots, b_{n_2}) \in\reals^{n_2}$, the \emph{tensor product} of $\vect{a}$ and $\vect{b}$ is defined as $\vect{a}\otimes\vect{b}\in\reals^{n_1 n_2}$, where 
	$[\vect{a}\otimes\vect{b}]_{(i-1)n_2+j} = a_ib_j $.
	For matrices $\vect{A} = (\vect a_1, \ldots, \vect a_{n_3}) \in\reals^{n_1\times n_3}, \vect{B} = (\vect b_1, \ldots, \vect b_{n_3}) \in\reals^{n_2\times n_3}$,  the \emph{Khatri-Rao product} of $\vect{A}$ and $\vect{B}$ is defined as  $\vect{A}\odot \vect{B}\in\reals^{n_1n_2\times n_3}$, where $\vect{A}\odot \vect{B} = [\vect{a}_1\otimes \vect{b}_1, \vect{a}_2\otimes \vect{b}_2, \ldots, \vect{a}_{n_3}\otimes \vect{b}_{n_3}]$. We use $\vect{A}\circ \vect{B}$ to denote the \emph{Hadamard product} (entry-wise product) of matrices $\vect{A}$ and $\vect{B}$ of the same size, i.e., $[\mat A \circ \mat B]_{ij} = \mat A_{ij} \mat B_{ij}$.
	We also denote their corresponding powers by $\vect a^{\otimes l}$, $\mat A^{\odot l}$ and $\mat A^{\circ l}$.
	
	Recall $\mat X = (\vect x_1, \ldots, \vect x_n) \in \R^{d\times n}$.
	Let  $\vect{K} = \vect{X}^\top \vect{X} \in \reals^{n \times n}$.
	Then it is easy to see that $[\vect{K}^{\circ l}]_{ij} = \vect{K}^l_{ij} = \inp{\vx_i}{\vx_j}^l$ and $\vect{K}^{\circ l} = (\vX^{\odot l})^\top \vX^{\odot l}\succeq \bm{0}$ for all $l\in\mathbb N$.
	Recall from~\eqref{eqn:H_infy_defn} that $\vect{H}^\infty_{ij} = \frac{\mat K_{ij}}{4}+ \frac{\vect{K}_{ij} \arcsin(\vect{K}_{ij})}{2\pi}$. 
	Since  $\arcsin(x) = \sum_{l=0}^\infty \frac{(2l-1)!!}{(2l)!!}\cdot \frac{x^{2l+1}}{2l+1}\, (|x|\le1)$\footnote{$p!! = p(p-2)(p-4)\cdots$ and $0!!=(-1)!!=1$.},  we have 
	\[ \vect{H}^\infty_{ij} = \frac{\vect{K}_{ij}}{4}+ \frac{1}{2\pi} \sum_{l=1}^\infty \frac{(2l-3)!!}{(2l-2)!!}\cdot \frac{\vect{K}_{ij}^{2l}}{2l-1},\]
	which means
	\[ \vect{H}^\infty = \frac{\vect{K}}{4}+ \frac{1}{2\pi} \sum_{l=1}^\infty \frac{(2l-3)!!}{(2l-2)!!}\cdot \frac{\vect{K}^{\circ 2l}}{2l-1}.\]
	
	Since $\vect{K}^{\circ l}\succeq \bm{0}$ ($\forall l\in \mathbb{N}$), we have 
	$ \vect{H}^\infty \succeq \frac{\vect{K}}{4}$ and 
	\[  \vect{H}^\infty \succeq \frac{1}{2\pi} \frac{(2l-3)!!}{(2l-2)!!}\cdot \frac{\vect{K}^{\circ 2l}}{2l-1} \succeq \frac{\vect{K}^{\circ 2l}}{2\pi(2l-1)^2}, \quad \forall l\in\mathbb N_+.\]

	Now we proceed to prove the theorem.
	
	
	First we consider the case $p=1$. In this case we have $\vy =\alpha \vX^\top \vbeta_1$. Since $\mat H^\infty \succeq \frac{\mat K}{4}$,
	 from Lemma~\ref{lem:inv_comparison} we have 
	\[ \vect{P}_{\vect{K}} (\vect{H}^{\infty})^{-1} \vect{P}_{\vect{K}} \preceq 4\vect{K}^\dag,\]
	where $\vect{P}_\vect{K} = \vect{K}^{1/2}\vect{K}^\dag \vect{K}^{1/2}$ is the projection matrix for the subspace spanned by $\vect{K}$. Since $\vect{K} = \vect{X}^\top \vect{X}$, we have $\vect{P}_{\vect{K}}\vX^\top = \vX^\top$. 
	Therefore, we have
	\begin{align*}
	&\vy^\top (\vect{H}^{\infty})^{-1}  \vy\\
	 =\, &\alpha^2 \vbeta^\top \vX (\vect{H}^{\infty})^{-1}  \vX^\top \vbeta \\
	=\,&\alpha^2  \vbeta^\top \vX \vect{P}_{\vect{K}}(\vect{H}^{\infty})^{-1} \vect{P}_\vect{K} \vX^\top \vbeta\\
	 \le\, & 4\alpha^2 \vbeta^\top \vX \vect{K}^\dag \vX^\top \vbeta  \\
	 =\, &4\alpha^2 \vbeta^\top \vect{P}_{\mat X \mat X^\top}\vbeta\\
	 \le\,& 4\alpha^2 \norm{\vbeta}_2^2.
	\end{align*}
	This finishes the proof for $p=1$.

	 Similarly, for $p=2l$ $(l\in\mathbb N_+)$, we have $\vy  = \alpha \left( \vX^{\odot 2l} \right)^\top \vbeta^{\otimes 2l}$. From
	 $ \vect{H}^{\infty} \succeq \frac{\vect{K}^{\circ 2l}}{2\pi(2l-1)^2} = \frac{(\vX^{\odot 2l})^\top \vX^{\odot 2l}}{2\pi (2l-1)^2}$ and Lemma~\ref{lem:inv_comparison} we have
	 \begin{align*}
	&\vy^\top (\vect{H}^{\infty})^{-1}  \vy\\
	 =\, & \alpha^2 (\vbeta^{\otimes 2l})^\top\vX^{\odot 2l} (\vect{H}^{\infty})^{-1}    (\vX^{\odot 2l})^\top \vbeta^{\otimes 2l}\\
	 \le\, & 2\pi(2l-1)^2  \alpha^2  (\vbeta^{\otimes 2l})^\top\vX^{\odot 2l} (\vect{K}^{\circ 2l})^{\dag}    (\vX^{\odot 2l})^\top \vbeta^{\otimes 2l}\\
	 =\, &2\pi(2l-1)^2 \alpha^2    (\vbeta^{\otimes 2l})^\top \vect{P}_{\vX^{\odot 2l} (\vX^{\odot 2l})^\top} \vbeta^{\otimes 2l}\\
	 \le\, & 2\pi (2l-1)^2  \alpha^2   \norm{\vbeta^{\otimes 2l}}_2^2\\
	 =\, &2\pi (2l-1)^2 \alpha^2  \norm{\vbeta}_2^{4l}\\
	 \le\,& 2\pi p^2 \alpha^2  \norm{\vbeta}_2^{2p}.
	 \end{align*}
	 This finishes the proof for $p=2l$.
\end{proof}

\end{document}